\documentclass{sysu_preprint}
\DeclareTextFontCommand{\textbf}{\bfseries}
\usepackage{fix-cm}
\usepackage[T1]{fontenc}
\usepackage[utf8]{inputenc}
\usepackage{amsmath,amssymb,amsthm,mathtools}
\usepackage{graphicx,booktabs,multirow,tabularx,array,longtable}
\usepackage{colortbl}
\definecolor{baselinebest}{RGB}{225,241,225}
\definecolor{baselinesecond}{gray}{0.93}
\usepackage{algorithm,algpseudocode}
\usepackage{float}
\usepackage{hyperref,url}
\hypersetup{colorlinks=true,linkcolor=blue,citecolor=blue,urlcolor=blue,pdftitle={MA-FPPO: Multi-Agent Flow-Pretrained Policy Optimization},pdfauthor={Guowei Zou, Haonan Chen, Haitao Wang, Beiwen Zhang, Na Yan, Hejun Wu}}
\newcommand{\method}{MA-FPPO}
\newcommand{\E}{\mathbb{E}}
\newcommand{\KL}{D_{\mathrm{KL}}}
\newcommand{\Normal}{\mathcal{N}}
\newcommand{\D}{\mathcal{D}}
\newcommand{\sg}{\operatorname{sg}}
\newcommand{\clip}{\operatorname{clip}}
\newtheorem{proposition}{Proposition}
\makeatletter
\let\tableinput\@@input
\makeatother
\title{MA-FPPO: Multi-Agent Flow-Pretrained Policy Optimization}
\renewcommand{\authorlist}{\authorfont{\sffamily Guowei Zou\textsuperscript{1,2}, Haonan Chen\textsuperscript{2}, Haitao Wang\textsuperscript{1}, Beiwen Zhang\textsuperscript{1}, Na Yan\textsuperscript{1}, Hejun Wu\textsuperscript{1}}}
\renewcommand{\affiliationlist}{\affiliationfont{\sffamily\textsuperscript{1}Sun Yat-sen University, \textsuperscript{2}National University of Singapore}}
\abstract{Multi-agent flow policies learn cooperative behavior from fixed offline datasets, but often struggle to complete tasks in situations not covered by the offline data. In these situations, agents must both adapt to changes in the environment and coordinate with one another, yet action patterns learned offline are often insufficient for effective adaptation and coordination. To address this problem, we propose Multi-Agent Flow-Pretrained Policy Optimization (MA-FPPO), which uses online fine-tuning to improve the cooperative behavior of models pretrained with flow matching through new interactions with the environment. Building on the behavior learned during pretraining, we construct policies with explicit action likelihoods for discrete and continuous action spaces. We then update the pretrained model using shared team advantages to further improve coordination based on team performance. Our method achieves, on average, relative gains of 52.8\% over the strongest listed offline baselines across 30 settings and 29.8\% over purely online learning across 38 comparisons with matched online budgets and evaluation protocols.}
\newlength{\fullfigurewidth}
\newcommand{\teaserfigure}{\begin{center}
\centering
\includegraphics[width=.98\linewidth]{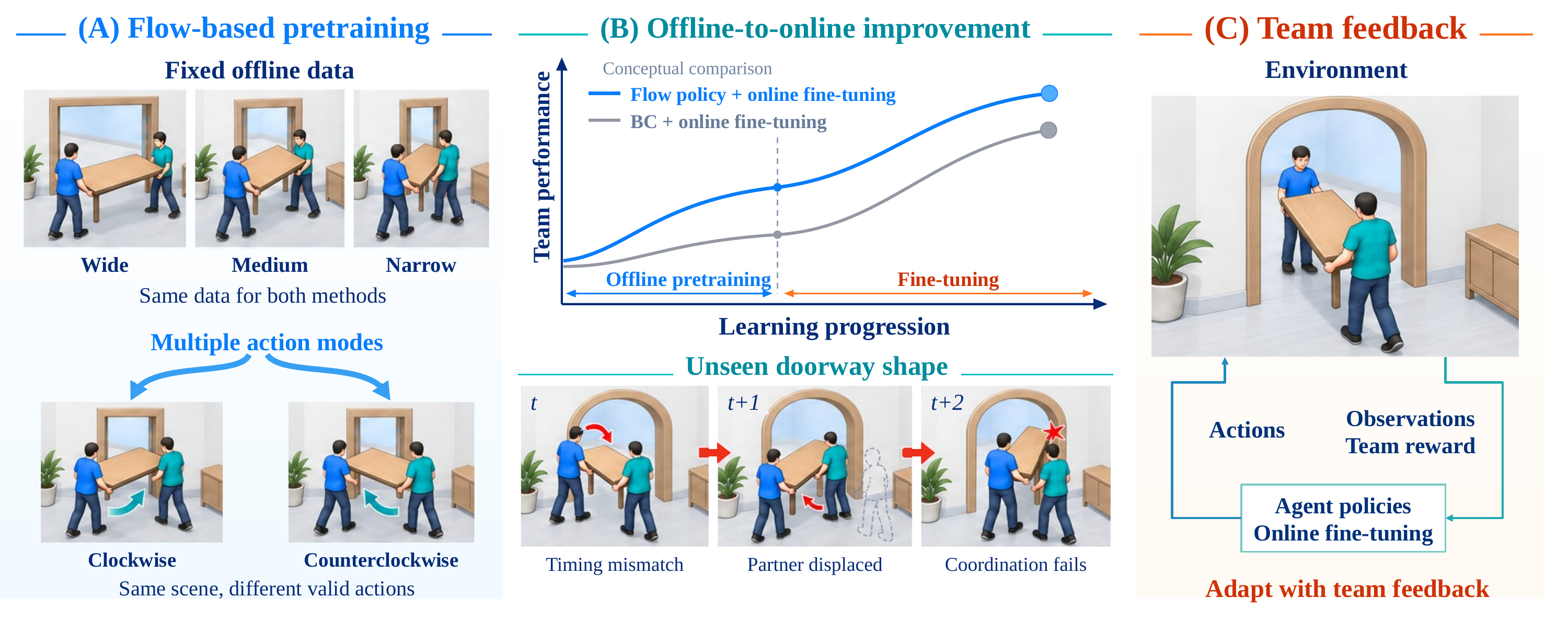}
\captionsetup{hypcap=false}
\captionof{figure}{\textbf{From learned behavior to team policy improvement.} \textbf{(A)} Flow pretraining models alternative coordinated actions in a fixed dataset, illustrated by two agents carrying a table through doorways. \textbf{(B)} A changed doorway illustrates how locally plausible actions can fail to coordinate. The curves schematically compare flow and behavioral cloning pretraining, each followed by online fine-tuning. \textbf{(C)} Team rewards guide online fine-tuning to improve coordination.}
\label{fig:motivation}
\end{center}}
\definecolor{coverpurple}{HTML}{49308C}
\definecolor{covergray}{HTML}{F1F4F8}
\renewcommand{\maketitle}{%
  \thispagestyle{plain}%
  \begin{tcolorbox}[enhanced,colback=covergray,frame hidden,arc=3mm,
    left=6mm,right=6mm,top=4mm,bottom=4mm,boxsep=0pt,before skip=0pt,after skip=5mm]
    {\raggedright\sffamily\bfseries\color{coverpurple}\fontsize{17}{20}\selectfont
      \titlelist\par}
    \vspace{2mm}
    {\raggedright\renewcommand{\authorfont}{\fontsize{10}{12}\selectfont}
      \bfseries\authorlist\par}
    \vspace{2mm}
    {\raggedright\sffamily\bfseries\color{coverpurple}\fontsize{11}{13}\selectfont \textsuperscript{1}Sun Yat-sen University\par\textsuperscript{2}National University of Singapore\par}
    \vspace{3mm}
    {\color{coverpurple!30}\hrule height 0.3pt}
    \vspace{1mm}
    \teaserfigure
    \vspace{1mm}
    {\color{coverpurple!30}\hrule height 0.3pt}
    \vspace{2mm}
    {\small\hypersetup{urlcolor=coverpurple}
      {\sffamily\bfseries Project Page:} \url{https://ma-fppo.github.io/}\par
      {\sffamily\bfseries Code:} \url{https://github.com/ma-fppo/MA-FPPO}\par
      {\sffamily\bfseries Models:} \url{https://huggingface.co/ma-fppo/MA-FPPO}\par
      {\sffamily\bfseries Datasets:} \url{https://huggingface.co/datasets/Guowei-Zou/CoFlow-datasets}\par}
  \end{tcolorbox}
  {\centering\sffamily\bfseries Abstract\par}
  \vspace{2mm}
  {\setlength{\parindent}{0pt}\abstractlist\par}
  \par\vspace{3mm}
}

\begin{document}
\raggedbottom
\addtocontents{toc}{\protect\setcounter{tocdepth}{-1}}
\maketitle
\section{Introduction}
Cooperative multi-agent control requires locally chosen actions to achieve a shared team objective~\citep{mpe,smac,facmac}. Offline multi-agent flow policies learn coordinated behavior from fixed datasets by modeling conditional action distributions with flow matching~\citep{fm,macflow}. Student distillation enables execution with one forward pass per decision~\citep{macflow}. Figure~\ref{fig:motivation}(A) illustrates alternative coordinated actions that flow pretraining can model. However, the behavior learned by these policies remains constrained by the quality and coverage of the offline data~\citep{ogmarl,kumar2020conservative}.

Offline multi-agent flow policies frequently struggle to complete tasks in situations not covered by the offline data. Learning action patterns from recorded trajectories does not ensure effective coordination when agents encounter unfamiliar states~\citep{ogmarl,kumar2020conservative}. In Figure~\ref{fig:motivation}(B), for example, agents carrying a table must adjust movements to pass through a changed doorway. Repeating familiar actions can prevent task completion because one agent's movement changes the response required from the teammate~\citep{mpe,macflow}. Offline value guidance can favor promising actions during flow pretraining, but the value estimates also depend on the existing dataset~\citep{macflow,kumar2020conservative}. Further environment interaction offers a way to improve the pretrained behavior beyond these data limitations.

The central question is how a pretrained multi-agent flow policy can improve task performance through further interaction. Existing multi-agent generative approaches emphasize offline learning and execution~\citep{madiff,li2025dof,macflow}. Online fine-tuning of the resulting pretrained students requires a policy representation that supports updates using action likelihoods~\citep{ppo,dppo,reinflow} while retaining the learned behavior as the starting point.

To enable online fine-tuning, we propose Multi-Agent Flow-Pretrained Policy Optimization (\method{}), a framework that learns cooperative behavior offline and improves the pretrained policy through online interaction. Figure~\ref{fig:motivation}(C) illustrates how team rewards guide online fine-tuning.

To build on pretrained behavior during online fine-tuning, we construct policies with explicit action likelihoods. For continuous actions, we use the pretrained student output at a fixed latent as the Gaussian mean and learn the standard deviation to control exploration, preserving the initial deterministic action. For discrete actions, student outputs define masked categorical policies. In both cases, explicit action likelihoods enable student updates~\citep{ppo} guided by shared team advantages from a centralized critic~\citep{mappo,gae}. Optional reference Kullback--Leibler (KL) regularization constrains deviation from the initial policy.

Our contributions are threefold:
\begin{itemize}
\item We propose \method{}, which enables pretrained multi-agent flow policies to improve coordination through online interaction, addressing the limited adaptability of behavior learned from fixed offline data.
\item We construct online policies for discrete and continuous action spaces that preserve the pretrained student's initial deterministic behavior and provide explicit action likelihoods for updating the pretrained model using shared team advantages.
\item We evaluate the framework across cooperative tasks and data qualities, with comparisons examining flow pretraining and online policy construction. Results show average relative gains of 52.8\% over the strongest listed offline baselines and 29.8\% over online reinforcement learning from scratch.
\end{itemize}

\section{Related Work}
\subsection{Generative Models for Multi-Agent Learning}
Generative policies model multimodal actions and trajectories for planning and control~\citep{janner2022planning,chi2023diffusion,wang2023diffusion}. Multi-agent diffusion methods, including multi-agent diffusion (MADiff), diffusion factorization (DoF), and multi-agent offline coordination via diffusion-based trajectory stitching (MADiTS), capture coordination through joint modeling, factorization, or trajectory augmentation~\citep{madiff,li2025dof,yuan2025madits}. Consistency models, shortcut models, and average velocity fields reduce sampling cost~\citep{song2023consistency,frans2025shortcut,geng2025avgvelocity}. Multi-agent Coordination via Flow Matching (MAC-Flow), Offline Multi-Agent Mean-Flow Policy (OM2P), and CoFlow reduce multi-agent sampling steps through distillation or direct generation~\citep{macflow,om2p,coflow}. We study online fine-tuning of the pretrained student.

\subsection{Online Fine-Tuning of Generative Policies}
Generative learning improves offline policies through guidance from Q values, distillation, and objectives weighted by energy~\citep{wang2023diffusion,park2025flowql,zhang2025efm}. Online methods address exploration and policy updates through likelihoods of diffusion transitions, stochastic flow paths, or MeanFlow fine-tuning, as in Diffusion Policy Policy Optimization (DPPO), ReinFlow, and One-Step Generative Policy Optimization (OGPO)~\citep{dppo,reinflow,dmpo}. $\pi_{\mathrm{RL}}$ extends fine-tuning of flow policies to vision-language-action models~\citep{pirl}. MA-FPPO constructs categorical or Gaussian online policies from the pretrained student at a fixed latent, preserving the initial deterministic action and providing explicit action likelihoods. We compare pretraining choices under the same online algorithm and online policy choices after flow pretraining.

\subsection{Multi-Agent Reinforcement Learning}
Cooperative multi-agent reinforcement learning (MARL) learns decentralized policies through value decomposition~\citep{vdn,rashid2018qmix,son2019qtran} or centralized actor--critic training, including Multi-Agent Deep Deterministic Policy Gradient (MADDPG), Counterfactual Multi-Agent Policy Gradients (COMA), Factored Multi-Agent Centralised Policy Gradients (FACMAC), and Multi-Agent Proximal Policy Optimization (MAPPO)~\citep{mpe,coma,facmac,mappo}. Trust regions and proximal objectives support conservative and sequential updates~\citep{trpo,ppo,kuba2022heterogeneous}. Offline methods include Implicit Constraint Q-learning (ICQ), Offline Multi-Agent Reinforcement Learning with Actor Rectification (OMAR), and Offline Multi-Agent Reinforcement Learning with Implicit Global-to-Local Value Regularization (OMIGA)~\citep{yang2021believe,omar,omiga}, while model-based coordination, low-rank interactions, and in-sample optimization exploit multi-agent structure~\citep{barde2024offline,zhan2025exploiting,liu2025inspo}. Off-the-Grid Multi-Agent Reinforcement Learning (OG-MARL) and other benchmark resources support standardized evaluation~\citep{ogmarl,formanek2024dispelling,smacv2}. We study how flow pretraining and online policy construction affect multi-agent policy optimization~\citep{mappo}.

Our comparison separates the value of flow pretraining from the choice of online policy. Flow and BC initialization are compared under Gaussian MAPPO, while Gaussian and generative MAPPO are compared after flow pretraining. The Gaussian policy retains pretrained parameters while changing online action sampling.

\begin{figure}[!t]
\centering
\includegraphics[width=\linewidth]{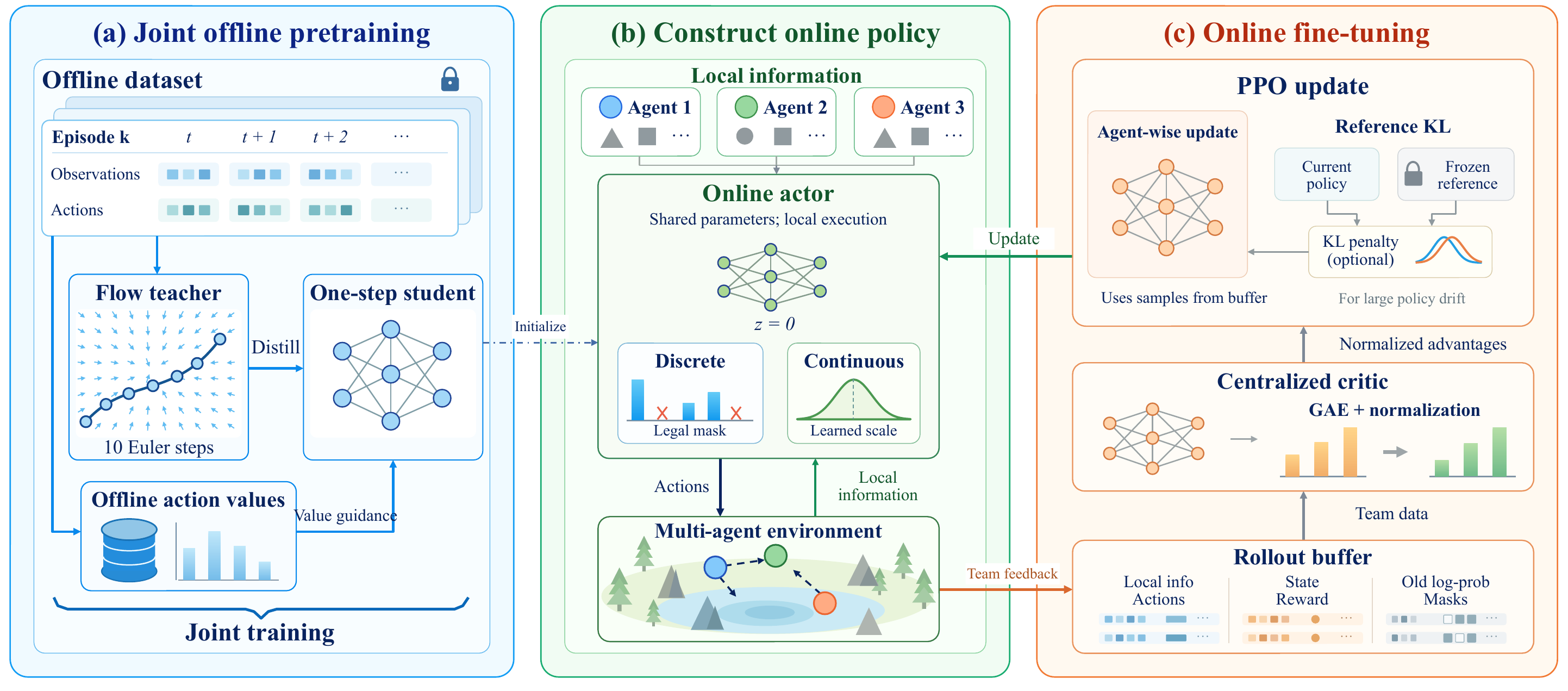}
\caption{\textbf{Overview of \method{}.} (a) Joint offline pretraining on fixed offline data. (b) Online policy construction from the pretrained student. (c) Online fine-tuning with a centralized critic. The reference KL branch depicts the optional reference KL regularization used in the configuration in the main tables. Teacher integration and offline action values are used only during pretraining. Continuous PPO evaluates raw action samples before clipping.}
\label{fig:framework}
\end{figure}

\section{Multi-Agent Flow-Pretrained Policy Optimization}

Figure~\ref{fig:framework} summarizes how flow pretraining initializes explicit online policies, which are updated using shared team advantages.

\subsection{Problem Setting}
We consider a cooperative partially observed system with $N$ controlled agents. At environment step $t$, agent $i$ observes $o_t^i$, uses local information $h_t^i$, and chooses $a_t^i$, where $h_t^i$ contains the observation or an encoding of the observation history, together with the agent identity. The joint action $\mathbf a_t$ induces a transition and team reward $r_t$. Centralized information $s_t$ is available during training. Execution uses only $h_t^i$. Appendix~\ref{app:notation} collects the notation and index conventions. A shared actor parameterization defines
\begin{equation}
 \Pi_\theta(\mathbf a_t\mid\mathbf h_t)
 =\prod_{i=1}^{N}\pi_\theta(a_t^i\mid h_t^i),
 \qquad
 J(\theta)=\E_{\Pi_\theta}\!\left[\sum_{t\geq0}\gamma^t r_t\right],
 \label{eq:joint}
\end{equation}
where $\pi_\theta$ is the executed action policy with shared parameters $\theta$, $\mathbf h_t=(h_t^1,\ldots,h_t^N)$, and $\gamma\in(0,1)$ is the discount factor. In continuous control, $\pi_\theta$ is induced by clipping the raw Gaussian sample defined below. Actions are conditionally independent given local inputs, while shared parameters, dynamics, and centralized training couple learning.

An offline dataset $\D$ supplies trajectories for pretraining. We count \emph{offline optimizer updates} separately from \emph{online environment transitions}: one transition advances one parallel environment with the joint action. The experimental setup specifies budgets.

\subsection{Offline Pretraining: Flow Learning and Student Distillation}
We jointly train a flow teacher, a student that predicts actions in one forward pass, and offline action values following the flow pretraining procedure~\citep{macflow}. Let $x_1^i\in\mathbb R^{d_a}$ denote a dataset action, encoded as a one-hot vector for discrete tasks. Sample $z_i\sim\Normal(0,I)$ and $\tau\sim\mathcal U[0,1]$, and set $x_\tau^i=(1-\tau)z_i+\tau x_1^i$. The action dimension is $d_a$, and $\tau$ denotes flow time. The teacher velocity field $v_\phi$ minimizes~\citep{fm}:
\begin{equation}
 \mathcal L_{\mathrm{FM}}
 =\E_{\D,z,\tau,i}\left[\frac{1}{d_a}\left\|v_\phi(h_i,x_\tau^i,\tau)-(x_1^i-z_i)\right\|_2^2\right],
 \label{eq:fm}
\end{equation}
where the expectation averages dataset samples, latents, flow times, and agents. Offline notation omits the environment index $t$.
The student $g_\theta(h_i,z_i)$ distills the teacher's output after 10 Euler steps using the same local input and latent. Distillation uses cross-entropy for discrete actions and squared error for continuous actions. Value guidance favors actions with higher estimated return. The joint objective is
\begin{equation}
 \mathcal L_{\mathrm{off}}
 =\mathcal L_Q+\mathcal L_{\mathrm{FM}}
 +\alpha\mathcal L_{\mathrm{distill}}+\mathcal L_{\mathrm{guide}},
 \label{eq:offline}
\end{equation}
where $\mathcal L_Q$ fits action values, $\mathcal L_{\mathrm{distill}}$ matches detached teacher outputs, $\alpha$ weights distillation, and $\mathcal L_{\mathrm{guide}}$ updates the student with value guidance while holding Q weights fixed. All terms are optimized jointly. The resulting student initializes the online actor. Appendix~\ref{app:offline} gives the teacher integration, discrete and continuous loss definitions, Q targets, architectures, and gradient routing. Algorithm~\ref{alg:offline} gives the full procedure. The comparison with BC in Section~\ref{sec:zero-step-ablation} evaluates the complete pretraining procedure.

\subsection{Online Fine-Tuning}
\subsubsection{From the Pretrained Student to an Online Policy}
We retain the pretrained student weights and fix the latent to zero, preserving the initial deterministic deployment action. Online exploration then comes from categorical or Gaussian sampling, with explicit action likelihoods for proximal policy optimization (PPO). Online updates change the student parameters, while the teacher and offline Q functions are unused. The full generative action distribution is not preserved.

\paragraph{Discrete actions.}
Let $\ell_\theta(h_i)=g_\theta(h_i,0)$ denote student logits and $\mathcal A_i(h_i)$ the nonempty set of available actions. We define
\begin{equation}
 \pi_\theta(a_i\mid h_i)
 =\frac{\exp(\ell_{\theta,a_i}(h_i)/T)}
 {\sum_{b\in\mathcal A_i(h_i)}\exp(\ell_{\theta,b}(h_i)/T)},
 \quad a_i\in\mathcal A_i(h_i).
 \label{eq:categorical}
\end{equation}
Unavailable actions are masked during collection and replay. Deployment selects the legal action with the highest logit. The temperature $T>0$ and conventions for activity masks are specified in Appendix~\ref{app:online-details}.

\paragraph{Continuous actions.}
The student output defines the Gaussian mean, with a learned standard deviation per action coordinate shared across states and agents:
\begin{equation}
 u_i\sim q_\theta(\cdot\mid h_i)
 =\Normal\!\left(g_\theta(h_i,0),\operatorname{diag}(\sigma_\theta^2)\right),
 \qquad a_i=\clip(u_i,-1,1).
 \label{eq:gaussian}
\end{equation}
PPO evaluates the raw density $q_\theta$ of stored samples $u_i$ before clipping. The vector $\sigma_\theta$ controls exploration, while team advantages update the student mean used after clipping for deployment. Separating behavior from exploration is consistent with regulated motor variability in animals~\citep{dhawale2019adaptive}, as discussed in Appendix~\ref{app:exploration-rationale}. Under the simplified model and update assumptions in Appendix~\ref{app:latent-return}, fixing $z=0$ reduces optimization error and improves limiting expected return over latent sampling.

\begin{table}[!t]
\centering
\caption{\textbf{Published baselines and MA-FPPO on continuous control.} Panels report MPE and the distinct OG-MARL and OMIGA MuJoCo collections, with separate baseline headers. Sources and dataset details are in Appendix~\ref{app:experimental-protocol}.}
\label{tab:continuous-main}
\label{tab:published_continuous}
\label{tab:mujoco_combined}
\label{tab:mujoco_ogmarl}
\label{tab:mujoco_omiga}
\begingroup\scriptsize
\renewcommand{\tiny}{\fontsize{5}{6}\selectfont}
\setlength{\tabcolsep}{2pt}
\renewcommand{\arraystretch}{1.10}
\resizebox{\linewidth}{!}{%
\begin{tabular*}{\dimexpr1.16\linewidth+12pt\relax}{@{\extracolsep{\fill}}lllccccccc|@{\extracolsep{0pt}\hspace{0.35pt}}>{\centering\arraybackslash}p{64pt}@{}}
\toprule
\multicolumn{2}{c}{\multirow{2}{*}{\textbf{Scenarios}}} & \multirow{2}{*}{\textbf{Dataset}} & \multicolumn{2}{c}{Offline single-agent extensions} & \multicolumn{5}{c|}{Offline MARL} & \textbf{Off. + On.} \\
\cmidrule(lr){4-5}\cmidrule(lr){6-10}\cmidrule(lr){11-11}
 & & & MATD3BC & MACQL & ICQ & OMAR & DOM2 & MADiff & MAC-Flow & \textbf{MA-FPPO} \\
\midrule
\multirow{10}{*}{\rotatebox[origin=c]{90}{\textbf{MPE}}} & \multirow{4}{*}{\textbf{Spread}} & Expert & $108.3$\,{\tiny$\pm 3.3$} & $98.2$\,{\tiny$\pm 5.2$} & $114.9$\,{\tiny$\pm 2.6$} & $104.0$\,{\tiny$\pm 3.4$} & \cellcolor{baselinebest}$\mathbf{131.32}$\,{\tiny$\pm 4.82$} & $95.0$\,{\tiny$\pm 5.3$} & $101.7$\,{\tiny$\pm 10.9$} & \cellcolor{baselinesecond}$\underline{116.87}^{\dagger}$\,{\tiny$\pm 0.37$} \\
 & & Medium & $29.3$\,{\tiny$\pm 4.8$} & $34.1$\,{\tiny$\pm 7.2$} & $47.9$\,{\tiny$\pm 18.9$} & $29.3$\,{\tiny$\pm 5.5$} & $55.77$\,{\tiny$\pm 7.06$} & $64.9$\,{\tiny$\pm 7.7$} & \cellcolor{baselinesecond}$\underline{80.1}$\,{\tiny$\pm 20.6$} & \cellcolor{baselinebest}$\mathbf{87.45}^{\dagger}$\,{\tiny$\pm 1.25$} \\
 & & M-R & $15.4$\,{\tiny$\pm 5.6$} & $20.0$\,{\tiny$\pm 8.4$} & $37.9$\,{\tiny$\pm 12.3$} & $13.6$\,{\tiny$\pm 5.7$} & \cellcolor{baselinesecond}$\underline{46.02}$\,{\tiny$\pm 10.81$} & $30.3$\,{\tiny$\pm 2.5$} & \cellcolor{baselinebest}$\mathbf{50.4}$\,{\tiny$\pm 33.2$} & $39.50^{\dagger}$\,{\tiny$\pm 8.70$} \\
 & & Random & $9.8$\,{\tiny$\pm 4.9$} & $24.0$\,{\tiny$\pm 9.8$} & $34.4$\,{\tiny$\pm 5.3$} & $6.3$\,{\tiny$\pm 3.5$} & \cellcolor{baselinesecond}$\underline{49.86}$\,{\tiny$\pm 7.28$} & $6.9$\,{\tiny$\pm 3.1$} & $31.1$\,{\tiny$\pm 6.8$} & \cellcolor{baselinebest}$\mathbf{128.56}^{\dagger}$\,{\tiny$\pm 1.52$} \\
\cmidrule(lr){2-11}
 & \multirow{3}{*}{\textbf{Tag}} & Expert & $115.2$\,{\tiny$\pm 12.5$}$^{\mathrm{m}}$ & $93.9$\,{\tiny$\pm 14.0$}$^{\mathrm{m}}$ & $113.0$\,{\tiny$\pm 14.4$}$^{\mathrm{m}}$ & $116.2$\,{\tiny$\pm 19.8$}$^{\mathrm{m}}$ & \cellcolor{baselinesecond}$\underline{138.75}$\,{\tiny$\pm 12.02$} & $120.9$\,{\tiny$\pm 14.6$}$^{\mathrm{m}}$ & $119.97^{\mathrm{r}}$\,{\tiny$\pm 2.26$} & \cellcolor{baselinebest}$\mathbf{205.02}^{\dagger}$\,{\tiny$\pm 2.89$} \\
 & & Medium & $65.1$\,{\tiny$\pm 29.5$}$^{\mathrm{m}}$ & $61.7$\,{\tiny$\pm 23.1$}$^{\mathrm{m}}$ & $63.3$\,{\tiny$\pm 20.0$}$^{\mathrm{m}}$ & $66.7$\,{\tiny$\pm 23.2$}$^{\mathrm{m}}$ & \cellcolor{baselinesecond}$\underline{84.29}$\,{\tiny$\pm 25.36$} & $77.2$\,{\tiny$\pm 10.4$}$^{\mathrm{m}}$ & $80.47^{\mathrm{r}}$\,{\tiny$\pm 1.77$} & \cellcolor{baselinebest}$\mathbf{159.91}^{\dagger}$\,{\tiny$\pm 2.88$} \\
 & & Random & $5.7$\,{\tiny$\pm 3.5$}$^{\mathrm{m}}$ & $5.0$\,{\tiny$\pm 8.2$}$^{\mathrm{m}}$ & $2.2$\,{\tiny$\pm 2.6$}$^{\mathrm{m}}$ & $11.1$\,{\tiny$\pm 2.8$}$^{\mathrm{m}}$ & \cellcolor{baselinesecond}$\underline{112.18}$\,{\tiny$\pm 30.21$} & $3.2$\,{\tiny$\pm 4.0$}$^{\mathrm{m}}$ & $72.68^{\mathrm{r}}$\,{\tiny$\pm 3.20$} & \cellcolor{baselinebest}$\mathbf{228.15}^{\dagger}$\,{\tiny$\pm 4.04$} \\
\cmidrule(lr){2-11}
 & \multirow{3}{*}{\textbf{World}} & Expert & $110.3$\,{\tiny$\pm 21.3$}$^{\mathrm{m}}$ & $71.9$\,{\tiny$\pm 28.1$}$^{\mathrm{m}}$ & $109.5$\,{\tiny$\pm 22.8$}$^{\mathrm{m}}$ & $110.4$\,{\tiny$\pm 25.7$}$^{\mathrm{m}}$ & \cellcolor{baselinesecond}$\underline{123.17}$\,{\tiny$\pm 19.81$} & $122.6$\,{\tiny$\pm 14.4$}$^{\mathrm{m}}$ & $122.57^{\mathrm{r}}$\,{\tiny$\pm 2.91$} & \cellcolor{baselinebest}$\mathbf{244.50}^{\dagger}$\,{\tiny$\pm 5.16$} \\
 & & Medium & $73.4$\,{\tiny$\pm 9.3$}$^{\mathrm{m}}$ & $58.6$\,{\tiny$\pm 11.2$}$^{\mathrm{m}}$ & $71.9$\,{\tiny$\pm 20.0$}$^{\mathrm{m}}$ & $74.6$\,{\tiny$\pm 11.5$}$^{\mathrm{m}}$ & $105.79$\,{\tiny$\pm 27.11$} & \cellcolor{baselinesecond}$\underline{123.5}$\,{\tiny$\pm 4.5$}$^{\mathrm{m}}$ & $89.82^{\mathrm{r}}$\,{\tiny$\pm 2.25$} & \cellcolor{baselinebest}$\mathbf{172.64}^{\dagger}$\,{\tiny$\pm 4.85$} \\
 & & Random & $2.8$\,{\tiny$\pm 5.5$}$^{\mathrm{m}}$ & $0.6$\,{\tiny$\pm 2.0$}$^{\mathrm{m}}$ & $1.0$\,{\tiny$\pm 3.2$}$^{\mathrm{m}}$ & $5.9$\,{\tiny$\pm 5.2$}$^{\mathrm{m}}$ & $54.23$\,{\tiny$\pm 16.57$} & $2.0$\,{\tiny$\pm 3.0$}$^{\mathrm{m}}$ & \cellcolor{baselinesecond}$\underline{94.43}^{\mathrm{r}}$\,{\tiny$\pm 3.72$} & \cellcolor{baselinebest}$\mathbf{273.53}^{\dagger}$\,{\tiny$\pm 3.80$} \\
\cmidrule(lr){2-11}
\multicolumn{3}{l}{\textbf{Spread average}} & $40.7$ & $44.1$ & $58.8$ & $38.3$ & \cellcolor{baselinesecond}$\underline{70.74}$ & $49.2$ & $65.8$ & \cellcolor{baselinebest}$\mathbf{93.09}^{\dagger}$\,{\tiny$\pm 2.88$} \\
\bottomrule
\end{tabular*}}
\endgroup
\par\vspace{2pt}
\begingroup
\scriptsize
\renewcommand{\tiny}{\fontsize{5}{6}\selectfont}
\setlength{\tabcolsep}{0.35pt}
\renewcommand{\arraystretch}{1.10}
\newlength{\mujocotablewidth}
\setlength{\mujocotablewidth}{\dimexpr1.16\linewidth+12pt\relax}
\newlength{\mujocobaselinewidth}
\setlength{\mujocobaselinewidth}{\dimexpr(\mujocotablewidth-13pt-51pt-31pt-64pt-18\tabcolsep-\arrayrulewidth)/6\relax}
\resizebox{\linewidth}{!}{%
\begin{tabular}{@{}p{13pt}p{51pt}p{31pt}*{6}{>{\centering\arraybackslash}p{\mujocobaselinewidth}}|>{\centering\arraybackslash}p{64pt}@{}}
\toprule
\multicolumn{2}{c}{\multirow{2}{*}{\textbf{Scenarios}}} & \multirow{2}{*}{\textbf{Dataset}} & \multicolumn{2}{c}{Offline single-agent extensions} & \multicolumn{4}{c|}{Offline MARL} & \mbox{\textbf{Off. + On.}} \\
\cmidrule(lr){4-5}\cmidrule(lr){6-9}\cmidrule(lr){10-10}
 & & & BC & MA-TD3+BC & OMAR & MADiff-D & MADiff-C & MAC-Flow$^{\mathrm{r}}$ & \textbf{MA-FPPO} \\
\midrule
\multirow{9}{*}{\rotatebox[origin=c]{90}{\textbf{OG-MARL}}} & \multirow{3}{*}{\textbf{2ant}} & Good & $2697$\,{\tiny$\pm 267$} & $2922$\,{\tiny$\pm 194$} & $464$\,{\tiny$\pm 469$} & $2946$\,{\tiny$\pm 77$} & \cellcolor{baselinesecond}$\underline{3069}$\,{\tiny$\pm 60$} & $1628.46^{\mathrm{r}}$\,{\tiny$\pm 1.72$} & \cellcolor{baselinebest}$\mathbf{3436.15}$\,{\tiny$\pm 18.40$} \\
 & & Medium & $1145$\,{\tiny$\pm 126$} & $744$\,{\tiny$\pm 283$} & $799$\,{\tiny$\pm 186$} & $1211$\,{\tiny$\pm 69$} & \cellcolor{baselinesecond}$\underline{1243}$\,{\tiny$\pm 37$} & $1077.94^{\mathrm{r}}$\,{\tiny$\pm 42.96$} & \cellcolor{baselinebest}$\mathbf{2967.64}$\,{\tiny$\pm 12.29$} \\
 & & Poor & $954$\,{\tiny$\pm 80$} & \cellcolor{baselinesecond}$\underline{1256}$\,{\tiny$\pm 122$} & $857$\,{\tiny$\pm 73$} & $946$\,{\tiny$\pm 66$} & $1038$\,{\tiny$\pm 26$} & $912.21^{\mathrm{r}}$\,{\tiny$\pm 23.51$} & \cellcolor{baselinebest}$\mathbf{2383.11}$\,{\tiny$\pm 5.82$} \\
\cmidrule(lr){2-10}
 & \multirow{3}{*}{\textbf{2halfcheetah}} & Good & $6846$\,{\tiny$\pm 574$} & $7025$\,{\tiny$\pm 439$} & $1434$\,{\tiny$\pm 1903$} & $8246$\,{\tiny$\pm 342$} & $8514$\,{\tiny$\pm 336$} & \cellcolor{baselinesecond}$\underline{8569.00}^{\mathrm{r}}$\,{\tiny$\pm 42.59$} & \cellcolor{baselinebest}$\mathbf{12149.08}^{\ddagger}$\,{\tiny$\pm 13.68$} \\
 & & Medium & $1627$\,{\tiny$\pm 187$} & \cellcolor{baselinesecond}$\underline{2561}$\,{\tiny$\pm 82$} & $1892$\,{\tiny$\pm 220$} & $2207$\,{\tiny$\pm 23$} & $2203$\,{\tiny$\pm 65$} & $2477.22^{\mathrm{r}}$\,{\tiny$\pm 4.11$} & \cellcolor{baselinebest}$\mathbf{6744.68}^{\ddagger}$\,{\tiny$\pm 1.03$} \\
 & & Poor & $465$\,{\tiny$\pm 59$} & $736$\,{\tiny$\pm 72$} & $384$\,{\tiny$\pm 420$} & $759$\,{\tiny$\pm 18$} & $760$\,{\tiny$\pm 15$} & \cellcolor{baselinesecond}$\underline{937.09}^{\mathrm{r}}$\,{\tiny$\pm 8.23$} & \cellcolor{baselinebest}$\mathbf{1803.67}$\,{\tiny$\pm 7.50$} \\
\cmidrule(lr){2-10}
 & \multirow{3}{*}{\textbf{4ant}} & Good & $2802$\,{\tiny$\pm 133$} & $2628$\,{\tiny$\pm 971$} & $344$\,{\tiny$\pm 631$} & \cellcolor{baselinesecond}$\underline{3080}$\,{\tiny$\pm 38$} & $3068$\,{\tiny$\pm 44$} & $1750.85^{\mathrm{r}}$\,{\tiny$\pm 29.29$} & \cellcolor{baselinebest}$\mathbf{3744.73}$\,{\tiny$\pm 6.64$} \\
 & & Medium & $1617$\,{\tiny$\pm 153$} & $1843$\,{\tiny$\pm 494$} & $929$\,{\tiny$\pm 349$} & $1649$\,{\tiny$\pm 100$} & \cellcolor{baselinesecond}$\underline{1871}$\,{\tiny$\pm 52$} & $1254.43^{\mathrm{r}}$\,{\tiny$\pm 35.45$} & \cellcolor{baselinebest}$\mathbf{3095.20}$\,{\tiny$\pm 9.51$} \\
 & & Poor & $1033$\,{\tiny$\pm 122$} & $1075$\,{\tiny$\pm 96$} & $518$\,{\tiny$\pm 112$} & $1295$\,{\tiny$\pm 57$} & \cellcolor{baselinesecond}$\underline{1353}$\,{\tiny$\pm 44$} & \resizebox{.98\mujocobaselinewidth}{!}{$1010.19^{\mathrm{r}}$\,{\tiny$\pm 39.19$}} & \cellcolor{baselinebest}$\mathbf{2881.69}$\,{\tiny$\pm 45.62$} \\
\midrule[1pt]
\multicolumn{2}{c}{\multirow{2}{*}{\textbf{Scenarios}}} & \multirow{2}{*}{\textbf{Dataset}} & \multicolumn{2}{c}{Offline single-agent extensions} & \multicolumn{4}{c|}{Offline MARL} & \mbox{\textbf{Off. + On.}} \\
\cmidrule(lr){4-5}\cmidrule(lr){6-9}\cmidrule(lr){10-10}
 & & & MATD3BC & MACQL & OMAR & OMIGA & MADiff & MAC-Flow & \textbf{MA-FPPO} \\
\midrule
\multirow{12}{*}{\rotatebox[origin=c]{90}{\textbf{OMIGA}}} & \multirow{4}{*}{\textbf{HalfCheetah}} & Expert & $4401.6$\,{\tiny$\pm 169.1$} & $4589.5$\,{\tiny$\pm 98.5$} & $-206.7$\,{\tiny$\pm 161.1$} & $3383.6$\,{\tiny$\pm 552.7$} &\cellcolor{baselinesecond}$\underline{4711.4}$\,{\tiny$\pm 213.6$} & $4650.0$\,{\tiny$\pm 271.6$} & \cellcolor{baselinebest}$\mathbf{8124.03}$\,{\tiny$\pm 1.17$} \\
 & & Medium & $2620.8$\,{\tiny$\pm 69.9$} & $3189.4$\,{\tiny$\pm 306.9$} & $-265.7$\,{\tiny$\pm 147.0$} & $3608.1$\,{\tiny$\pm 237.4$} & $2650.0$\,{\tiny$\pm 365.4$} &\cellcolor{baselinesecond}$\underline{4358.5}$\,{\tiny$\pm 369.2$} & \cellcolor{baselinebest}$\mathbf{10223.11}$\,{\tiny$\pm 1.21$} \\
 & & M-R &\cellcolor{baselinesecond}$\underline{3528.9}$\,{\tiny$\pm 120.9$} & $3500.7$\,{\tiny$\pm 293.9$} & $-235.4$\,{\tiny$\pm 154.9$} & $2504.7$\,{\tiny$\pm 83.5$} & $2830.5$\,{\tiny$\pm 292.8$} & $3030.2$\,{\tiny$\pm 436.8$} & \cellcolor{baselinebest}$\mathbf{10114.65}$\,{\tiny$\pm 4.52$} \\
 & & M-E & $3518.1$\,{\tiny$\pm 381.0$} & $4738.2$\,{\tiny$\pm 181.1$} & $-253.8$\,{\tiny$\pm 63.9$} & $2948.5$\,{\tiny$\pm 518.9$} & $4410.9$\,{\tiny$\pm 836.8$} &\cellcolor{baselinesecond}$\underline{5139.9}$\,{\tiny$\pm 84.1$} & \cellcolor{baselinebest}$\mathbf{9029.89}$\,{\tiny$\pm 1.82$} \\
\cmidrule(lr){2-10}
 & \multirow{4}{*}{\textbf{Hopper}} & Expert & $3309.9$\,{\tiny$\pm 4.5$} &\cellcolor{baselinesecond}$\underline{3359.1}$\,{\tiny$\pm 513.8$} & $2.4$\,{\tiny$\pm 1.5$} & $859.6$\,{\tiny$\pm 709.5$} & $2853.3$\,{\tiny$\pm 593.8$} &\cellcolor{baselinebest}$\mathbf{3592.1}$\,{\tiny$\pm 8.9$} & $2322.14$\,{\tiny$\pm 25.98$} \\
 & & Medium & $870.4$\,{\tiny$\pm 156.7$} & $901.3$\,{\tiny$\pm 199.9$} & $21.3$\,{\tiny$\pm 24.9$} & $1189.3$\,{\tiny$\pm 544.3$} &\cellcolor{baselinesecond}$\underline{1436.8}$\,{\tiny$\pm 449.5$} & $1023.5$\,{\tiny$\pm 253.0$} & \cellcolor{baselinebest}$\mathbf{3857.48}$\,{\tiny$\pm 1.02$} \\
 & & M-R & $269.7$\,{\tiny$\pm 41.8$} & $31.4$\,{\tiny$\pm 15.2$} & $3.3$\,{\tiny$\pm 3.2$} & $774.2$\,{\tiny$\pm 494.3$} & $936.1$\,{\tiny$\pm 574.0$} &\cellcolor{baselinesecond}$\underline{1166.3}$\,{\tiny$\pm 451.9$} & \cellcolor{baselinebest}$\mathbf{3382.91}$\,{\tiny$\pm 21.37$} \\
 & & M-E &\cellcolor{baselinesecond}$\underline{2904.3}$\,{\tiny$\pm 477.4$} & $2751.8$\,{\tiny$\pm 123.3$} & $1.4$\,{\tiny$\pm 0.9$} & $709.0$\,{\tiny$\pm 595.7$} & $2810.4$\,{\tiny$\pm 723.2$} &\cellcolor{baselinebest}$\mathbf{2988.3}$\,{\tiny$\pm 480.2$} & $2386.84$\,{\tiny$\pm 49.69$} \\
\cmidrule(lr){2-10}
 & \multirow{4}{*}{\textbf{Ant}} & Expert & $2046.9$\,{\tiny$\pm 17.1$} &\cellcolor{baselinesecond}$\underline{2082.4}$\,{\tiny$\pm 21.7$} & $312.5$\,{\tiny$\pm 297.5$} & $2055.5$\,{\tiny$\pm 1.6$} & $2060.0$\,{\tiny$\pm 10.3$} & $2060.2$\,{\tiny$\pm 20.0$} & \cellcolor{baselinebest}$\mathbf{6615.92}$\,{\tiny$\pm 13.66$} \\
 & & Medium & $1422.6$\,{\tiny$\pm 21.1$} & $1033.9$\,{\tiny$\pm 66.4$} & $-1710.0$\,{\tiny$\pm 1589.0$} & $1418.4$\,{\tiny$\pm 5.4$} & $1428.4$\,{\tiny$\pm 14.7$} &\cellcolor{baselinesecond}$\underline{1432.4}$\,{\tiny$\pm 17.8$} & \cellcolor{baselinebest}$\mathbf{5640.90}$\,{\tiny$\pm 27.51$} \\
 & & M-R & $995.2$\,{\tiny$\pm 52.8$} & $434.6$\,{\tiny$\pm 108.3$} & $-2014.2$\,{\tiny$\pm 844.7$} & $1105.1$\,{\tiny$\pm 88.9$} & $1294.5$\,{\tiny$\pm 360.2$} &\cellcolor{baselinesecond}$\underline{1498.4}$\,{\tiny$\pm 20.3$} & \cellcolor{baselinebest}$\mathbf{4738.50}$\,{\tiny$\pm 9.16$} \\
 & & M-E & $1636.1$\,{\tiny$\pm 96.0$} & $1800.2$\,{\tiny$\pm 21.5$} & $-2992.8$\,{\tiny$\pm 7.0$} & $1720.3$\,{\tiny$\pm 110.6$} & $1740.2$\,{\tiny$\pm 158.9$} & \cellcolor{baselinesecond}$\underline{2053.3}$\,{\tiny$\pm 20.4$} & \cellcolor{baselinebest}$\mathbf{6704.97}$\,{\tiny$\pm 16.96$} \\
\cmidrule(lr){2-10}
\multicolumn{3}{l}{\textbf{MA-MuJoCo average}} & $2293.7$ & $2367.7$ & $-611.5$ & $1856.4$ & $2430.21$ & \cellcolor{baselinesecond}$\underline{2749.4}$ & \cellcolor{baselinebest}$\mathbf{6095.11}$\,{\tiny$\pm 2.89$} \\
\bottomrule
\end{tabular}}
\endgroup
\par\smallskip
{\footnotesize\raggedright Multi-agent Twin Delayed Deep Deterministic Policy Gradient with behavioral cloning (MATD3BC, also MA-TD3+BC) and multi-agent Conservative Q-Learning (MACQL) extend single-agent methods. Decentralized and centralized MADiff variants are MADiff-D and MADiff-C. Medium-Replay (M-R) and Medium-Expert (M-E) label dataset qualities.\par}
\end{table}

\begin{table}[!t]
\centering
\caption{\textbf{Published baselines and MA-FPPO on discrete control.} Sources and notation are in Appendix~\ref{app:table-smac}.}
\label{tab:discrete-main}
\label{tab:published_discrete}
\begingroup
\scriptsize
\renewcommand{\tiny}{\fontsize{5}{6}\selectfont}
\setlength{\tabcolsep}{1.8pt}
\renewcommand{\arraystretch}{1.14}
\resizebox{\linewidth}{!}{%
\begin{tabular*}{1.16\linewidth}{@{\extracolsep{\fill}}lllcccccccc|c@{}}
\toprule
\multicolumn{2}{c}{\multirow{2}{*}{\textbf{Scenarios}}} & \multirow{2}{*}{\textbf{Dataset}} & \multicolumn{3}{c}{Gaussian policies} & \multicolumn{3}{c}{Diffusion policies} & \multicolumn{2}{c|}{Flow policies} & \textbf{Off. + On.} \\
\cmidrule(lr){4-6}\cmidrule(lr){7-9}\cmidrule(lr){10-11}\cmidrule(lr){12-12}
 & & & BC & MABCQ & MACQL & \shortstack{Diffusion\\BC} & MADiff & DoF & Flow BC & MAC-Flow & \textbf{MA-FPPO} \\
\midrule
\multirow{12}{*}{\rotatebox[origin=c]{90}{\textbf{SMACv1}}} & \multirow{3}{*}{\textbf{3m}} & Good & $16.0$\,{\tiny$\pm 1.0$} & $3.7$\,{\tiny$\pm 1.1$} & $19.1$\,{\tiny$\pm 0.1$} & $19.5$\,{\tiny$\pm 0.5$} & $19.3$\,{\tiny$\pm 0.5$} & \cellcolor{baselinesecond}$\underline{19.8}$\,{\tiny$\pm 0.2$} & \cellcolor{baselinebest}$\mathbf{20.0}$\,{\tiny$\pm 0.0$} & \cellcolor{baselinesecond}$\underline{19.8}$\,{\tiny$\pm 0.2$} & \cellcolor{baselinebest}$\mathbf{20.00}^{*}$\,{\tiny$\pm 0.00$} \\
 & & Medium & $8.2$\,{\tiny$\pm 0.8$} & $4.0$\,{\tiny$\pm 1.0$} & $13.7$\,{\tiny$\pm 0.3$} & $13.3$\,{\tiny$\pm 0.7$} & $16.4$\,{\tiny$\pm 2.6$} & \cellcolor{baselinesecond}$\underline{18.6}$\,{\tiny$\pm 1.2$} & $14.7$\,{\tiny$\pm 1.5$} & $18.0$\,{\tiny$\pm 3.2$} & \cellcolor{baselinebest}$\mathbf{20.00}^{*}$\,{\tiny$\pm 0.00$} \\
 & & Poor & $4.4$\,{\tiny$\pm 0.1$} & $3.4$\,{\tiny$\pm 1.0$} & $4.2$\,{\tiny$\pm 0.1$} & $4.2$\,{\tiny$\pm 0.2$} & $10.3$\,{\tiny$\pm 6.1$} & \cellcolor{baselinesecond}$\underline{10.9}$\,{\tiny$\pm 1.1$} & $4.5$\,{\tiny$\pm 0.1$} & $10.6$\,{\tiny$\pm 2.2$} & \cellcolor{baselinebest}$\mathbf{19.68}^{*}$\,{\tiny$\pm 0.09$} \\
\cmidrule(lr){2-12}
 & \multirow{3}{*}{\textbf{8m}} & Good & $16.7$\,{\tiny$\pm 0.4$} & $4.8$\,{\tiny$\pm 0.6$} & $18.9$\,{\tiny$\pm 0.9$} & $19.4$\,{\tiny$\pm 0.5$} & $18.9$\,{\tiny$\pm 1.1$} & $19.6$\,{\tiny$\pm 0.3$} & $19.5$\,{\tiny$\pm 0.2$} & \cellcolor{baselinesecond}$\underline{19.7}$\,{\tiny$\pm 0.3$} & \cellcolor{baselinebest}$\mathbf{19.98}^{\dagger}$\,{\tiny$\pm 0.03$} \\
 & & Medium & $10.7$\,{\tiny$\pm 0.5$} & $5.6$\,{\tiny$\pm 0.6$} & $15.5$\,{\tiny$\pm 1.5$} & $18.6$\,{\tiny$\pm 0.6$} & $16.8$\,{\tiny$\pm 1.6$} & $18.6$\,{\tiny$\pm 0.8$} & $18.2$\,{\tiny$\pm 0.8$} & \cellcolor{baselinesecond}$\underline{19.4}$\,{\tiny$\pm 0.6$} & \cellcolor{baselinebest}$\mathbf{20.00}^{\dagger}$\,{\tiny$\pm 0.00$} \\
 & & Poor & $5.3$\,{\tiny$\pm 0.1$} & $3.6$\,{\tiny$\pm 0.8$} & $7.5$\,{\tiny$\pm 1.0$} & $4.8$\,{\tiny$\pm 0.2$} & $9.8$\,{\tiny$\pm 0.9$} & \cellcolor{baselinesecond}$\underline{12.0}$\,{\tiny$\pm 1.2$} & $4.9$\,{\tiny$\pm 0.1$} & $11.5$\,{\tiny$\pm 0.8$} & \cellcolor{baselinebest}$\mathbf{17.94}$\,{\tiny$\pm 1.17$} \\
\cmidrule(lr){2-12}
 & \multirow{3}{*}{\textbf{2s3z}} & Good & $18.2$\,{\tiny$\pm 0.4$} & $7.7$\,{\tiny$\pm 0.9$} & $17.4$\,{\tiny$\pm 0.3$} & $18.0$\,{\tiny$\pm 1.0$} & $15.9$\,{\tiny$\pm 1.2$} & $18.5$\,{\tiny$\pm 0.8$} & \cellcolor{baselinesecond}$\underline{19.5}$\,{\tiny$\pm 0.1$} & \cellcolor{baselinesecond}$\underline{19.5}$\,{\tiny$\pm 0.5$} & \cellcolor{baselinebest}$\mathbf{20.00}^{*}$\,{\tiny$\pm 0.00$} \\
 & & Medium & $12.3$\,{\tiny$\pm 0.7$} & $7.6$\,{\tiny$\pm 0.7$} & $15.6$\,{\tiny$\pm 0.4$} & $13.4$\,{\tiny$\pm 1.4$} & $15.6$\,{\tiny$\pm 0.3$} & \cellcolor{baselinesecond}$\underline{18.1}$\,{\tiny$\pm 0.9$} & $15.1$\,{\tiny$\pm 2.0$} & $17.6$\,{\tiny$\pm 0.6$} & \cellcolor{baselinebest}$\mathbf{19.99}^{*}$\,{\tiny$\pm 0.01$} \\
 & & Poor & $6.7$\,{\tiny$\pm 0.3$} & $6.6$\,{\tiny$\pm 0.2$} & $8.4$\,{\tiny$\pm 0.8$} & $6.2$\,{\tiny$\pm 1.2$} & $8.5$\,{\tiny$\pm 1.3$} & \cellcolor{baselinesecond}$\underline{10.0}$\,{\tiny$\pm 1.1$} & $6.9$\,{\tiny$\pm 0.8$} & $8.5$\,{\tiny$\pm 0.6$} & \cellcolor{baselinebest}$\mathbf{20.01}^{*}$\,{\tiny$\pm 0.01$} \\
\cmidrule(lr){2-12}
 & \multirow{3}{*}{\textbf{5m6m}} & Good & $15.8$\,{\tiny$\pm 3.6$} & $2.4$\,{\tiny$\pm 0.4$} & $16.2$\,{\tiny$\pm 1.6$} & $16.8$\,{\tiny$\pm 2.3$} & $16.5$\,{\tiny$\pm 2.8$} & $17.7$\,{\tiny$\pm 1.1$} & $14.7$\,{\tiny$\pm 2.1$} & \cellcolor{baselinebest}$\mathbf{18.6}$\,{\tiny$\pm 3.5$} & \cellcolor{baselinesecond}$\underline{17.72}$\,{\tiny$\pm 0.23$} \\
 & & Medium & $12.4$\,{\tiny$\pm 0.9$} & $3.8$\,{\tiny$\pm 0.5$} & $15.1$\,{\tiny$\pm 2.9$} & $12.5$\,{\tiny$\pm 2.1$} & $15.2$\,{\tiny$\pm 2.6$} & \cellcolor{baselinesecond}$\underline{16.2}$\,{\tiny$\pm 0.9$} & $12.8$\,{\tiny$\pm 0.8$} & $15.6$\,{\tiny$\pm 1.3$} & \cellcolor{baselinebest}$\mathbf{18.50}$\,{\tiny$\pm 0.13$} \\
 & & Poor & $7.5$\,{\tiny$\pm 0.2$} & $3.3$\,{\tiny$\pm 0.5$} & $10.5$\,{\tiny$\pm 3.1$} & $8.0$\,{\tiny$\pm 1.0$} & $8.9$\,{\tiny$\pm 1.3$} & \cellcolor{baselinesecond}$\underline{10.8}$\,{\tiny$\pm 0.3$} & $7.7$\,{\tiny$\pm 0.8$} & $9.8$\,{\tiny$\pm 2.1$} & \cellcolor{baselinebest}$\mathbf{19.13}$\,{\tiny$\pm 0.04$} \\
\midrule
\multirow{4}{*}{\rotatebox[origin=c]{90}{\fontsize{5}{6}\selectfont\textbf{SMACv2}}} & \textbf{Terran 5v5} & Replay & $7.3$\,{\tiny$\pm 1.0$} & $13.8$\,{\tiny$\pm 4.4$} & $11.8$\,{\tiny$\pm 0.9$} & $9.3$\,{\tiny$\pm 0.9$} & $13.3$\,{\tiny$\pm 1.8$} & $15.4$\,{\tiny$\pm 1.3$} & $8.3$\,{\tiny$\pm 1.9$} & \cellcolor{baselinesecond}$\underline{16.6}$\,{\tiny$\pm 4.3$} & \cellcolor{baselinebest}$\mathbf{16.91}$\,{\tiny$\pm 0.58$} \\
\cmidrule(lr){2-12}
 & \textbf{Zerg 5v5} & Replay & $6.8$\,{\tiny$\pm 0.6$} & $10.3$\,{\tiny$\pm 1.2$} & $10.3$\,{\tiny$\pm 3.4$} & $8.1$\,{\tiny$\pm 1.7$} & $10.2$\,{\tiny$\pm 1.1$} & \cellcolor{baselinesecond}$\underline{12.0}$\,{\tiny$\pm 1.1$} & $4.6$\,{\tiny$\pm 0.5$} & $9.8$\,{\tiny$\pm 1.5$} & \cellcolor{baselinebest}$\mathbf{15.56}$\,{\tiny$\pm 0.47$} \\
\cmidrule(lr){2-12}
 & \textbf{Terran 10v10} & Replay & $7.4$\,{\tiny$\pm 0.5$} & $12.7$\,{\tiny$\pm 2.0$} & $11.8$\,{\tiny$\pm 2.0$} & $5.5$\,{\tiny$\pm 1.5$} & $13.8$\,{\tiny$\pm 1.3$} & \cellcolor{baselinesecond}$\underline{14.6}$\,{\tiny$\pm 1.1$} & $5.8$\,{\tiny$\pm 1.7$} & $13.0$\,{\tiny$\pm 4.7$} & \cellcolor{baselinebest}$\mathbf{14.98}$\,{\tiny$\pm 0.32$} \\
\cmidrule(lr){2-12}
\multicolumn{3}{l}{\textbf{SMACv2 average}} & $7.2$ & $12.3$ & $11.3$ & $7.6$ & $12.4$ & \cellcolor{baselinesecond}$\underline{14.0}$ & $6.2$ & $13.1$ & \cellcolor{baselinebest}$\mathbf{15.82}$\,{\tiny$\pm 0.23$} \\
\bottomrule
\end{tabular*}}
\endgroup
\par\smallskip
{\footnotesize\raggedright Multi-agent Batch-Constrained Q-learning (MABCQ) extends batch-constrained learning to multiple agents.\par}
\end{table}

\subsubsection{Team Advantages and Policy Updates}
A new centralized critic $V_\psi(s_t)$ supplies shared, normalized advantages $\widetilde A_t$ computed using generalized advantage estimation (GAE)~\citep{gae} and is fitted to detached return targets $\widehat R_t$. Appendix~\ref{app:online-details} specifies the targets and termination handling.

Use $(\mu_\theta,w_t^i)=(\pi_\theta,a_t^i)$ for discrete actions and $(q_\theta,u_t^i)$ for continuous actions. The collection parameters $\theta_{\mathrm{old}}$ remain fixed during each rollout update. Online parameters $\theta$ include the student and, for continuous actions, $\log\sigma_\theta$. With binary mask $m_t^i$ indicating whether agent $i$ is active at step $t$, define
\begin{equation}
 \rho_t^i(\theta)=\frac{\mu_\theta(w_t^i\mid h_t^i)}{\mu_{\theta_{\mathrm{old}}}(w_t^i\mid h_t^i)},
 \qquad Z=\max\!\left(1,\sum_{t,i}m_t^i\right).
 \label{eq:ratio}
\end{equation}
The PPO objective averages over active agents:
\begin{align}
 L_{\mathrm{clip}}(\theta)
 &=\frac{\sum_{t,i}m_t^i\min\!\left(\rho_t^i\widetilde A_t,
 \clip(\rho_t^i,1-\epsilon,1+\epsilon)\widetilde A_t\right)}
 {Z},\label{eq:ppo}\\
 \mathcal L_{\mathrm{actor}}
 &=-L_{\mathrm{clip}}+\beta\mathcal R_{\mathrm{ref}}-\eta\mathcal H,
 \label{eq:actor}
\end{align}
where $\epsilon>0$ is the PPO clipping parameter, $\beta\geq0$ weights reference KL, $\eta\geq0$ weights entropy, $\mathcal H$ is the entropy of the sampling policy and $\mathcal R_{\mathrm{ref}}$ optionally penalizes deviation from a frozen copy of the initial online actor.
All continuous agents are active. Log probabilities are summed across each agent's action coordinates before forming the likelihood ratio. The centralized critic minimizes $\frac12\E[(V_\psi-\widehat R)^2]$.

Setting $\beta=0$ removes reference KL while retaining PPO clipping and the KL stopping rule from the old to the current policy. Appendix~\ref{app:online-details} specifies the update rules and hyperparameters.

Algorithms~\ref{alg:offline}--\ref{alg:inference} detail the procedures. Appendix~\ref{app:proofs} gives the theoretical analysis.

\section{Experiments}
\definecolor{experimentquestion}{RGB}{176,78,12}
\newcommand{\exprq}[2]{\noindent\textcolor{experimentquestion}{\textbf{RQ#1. #2}}}
\newcommand{\expanswer}[2]{\noindent\textbf{A#1.} \textcolor{experimentquestion}{\textbf{#2}}}
We test broad applicability across environments and offline data qualities, focusing on coordination in situations not covered by the offline data. We examine online improvement, pretraining alternatives, and optimization components through three research questions (RQs):
\nopagebreak[4]

\exprq{1}{Can online fine-tuning improve coordination over offline pretraining?}

\exprq{2}{Do flow pretraining and the Gaussian online policy each improve final returns over the tested alternatives?}

\exprq{3}{How do reference KL regularization and centralized value estimation affect online fine-tuning?}

\subsection{Experimental Setup}

We evaluate discrete control on the StarCraft Multi-Agent Challenge (SMAC)~\citep{smac} and SMAC version 2 (SMACv2)~\citep{smacv2}, and continuous control on Multi-Agent Particle Environments (MPE)~\citep{mpe,omar} and Multi-Agent MuJoCo (MA-MuJoCo)~\citep{facmac}. The OG-MARL and OMIGA MuJoCo collections~\citep{ogmarl,omiga} are reported separately. Tables~\ref{tab:continuous-main} and~\ref{tab:discrete-main} include imitation, offline reinforcement learning (RL), and generative policy baselines reported by MAC-Flow, MADiff, and DOM2~\citep{macflow,madiff,dom2}. Baselines vary because prior studies cover different benchmarks. Some methods report no results on our selected datasets and have no publicly available code, preventing reproduction of the missing results. $^{\mathrm r}$ marks locally pretrained students. \method{} uses 1M offline updates followed by online fine-tuning, with online budgets marked in the tables. The ablations use four settings and 50M online steps. Main-table MPE scores are normalized. Other results use raw returns. Appendix~\ref{app:experimental-protocol} specifies tasks, sources, and budgets. We use three training seeds, with error bands computed across the three seeds. Final results are evaluated over 200 episodes per seed, while learning curves are evaluated over 20 episodes per seed.

\begin{figure}[!t]
\centering
\includegraphics[width=\linewidth]{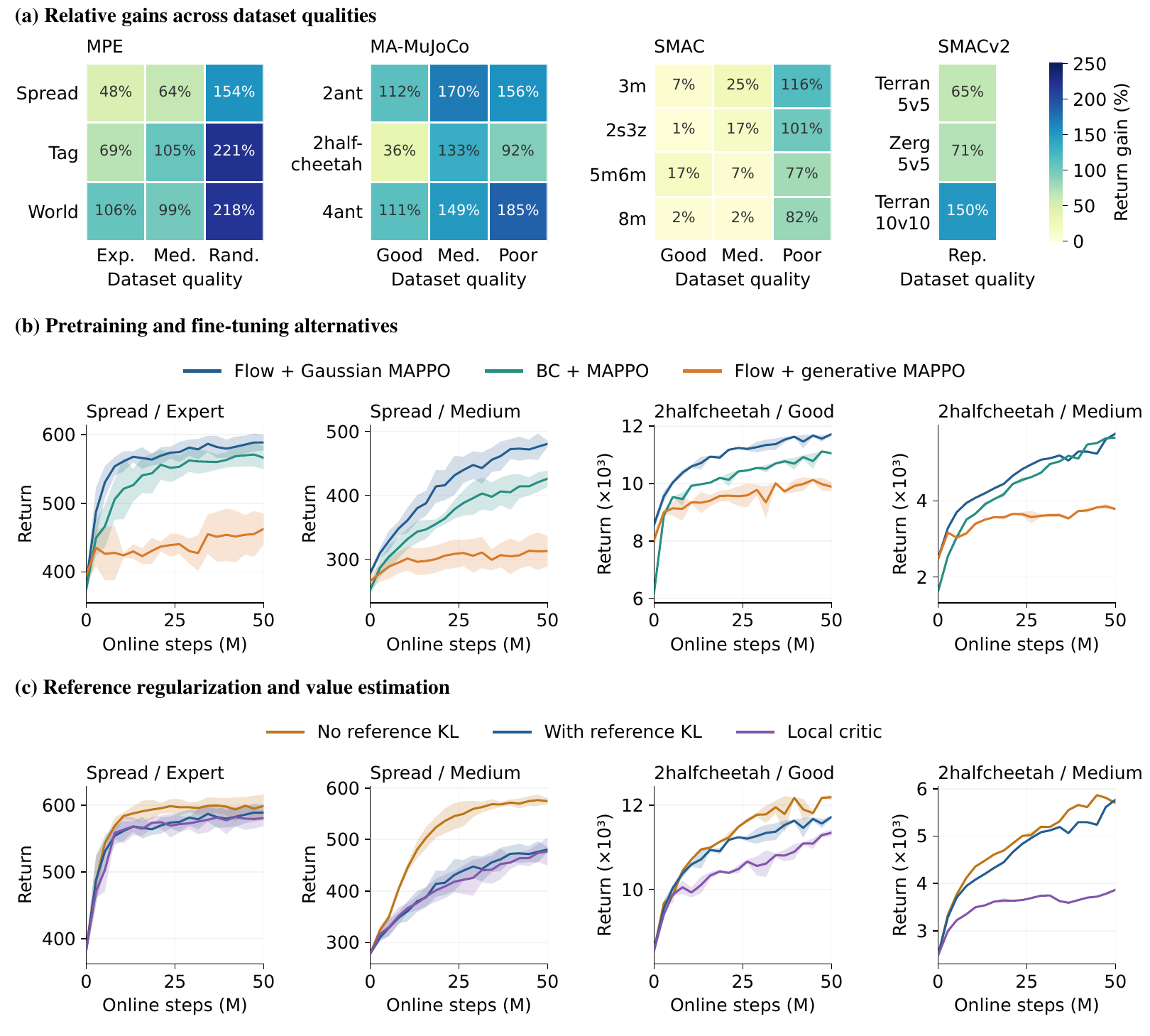}
\caption{\textbf{Online gains and learning dynamics.} (a) Percentage gains over each pretrained policy across dataset qualities. (b) Learning curves for Gaussian MAPPO initialized from flow pretraining or BC, and generative MAPPO initialized from flow pretraining. (c) Ablations of reference KL and the local critic. In (b,c), columns show Spread Expert/Medium and 2halfcheetah Good/Medium. Appendix Figure~\ref{fig:ablation-endpoints} shows endpoint bars.}
\label{fig:dataset-quality-gain}
\label{fig:parameterization-combined}
\label{fig:zero-step-ablation}
\label{fig:parameterization-ablation}
\label{fig:component-ablation}
\end{figure}

\begin{figure}[!t]
\centering
\includegraphics[width=\linewidth]{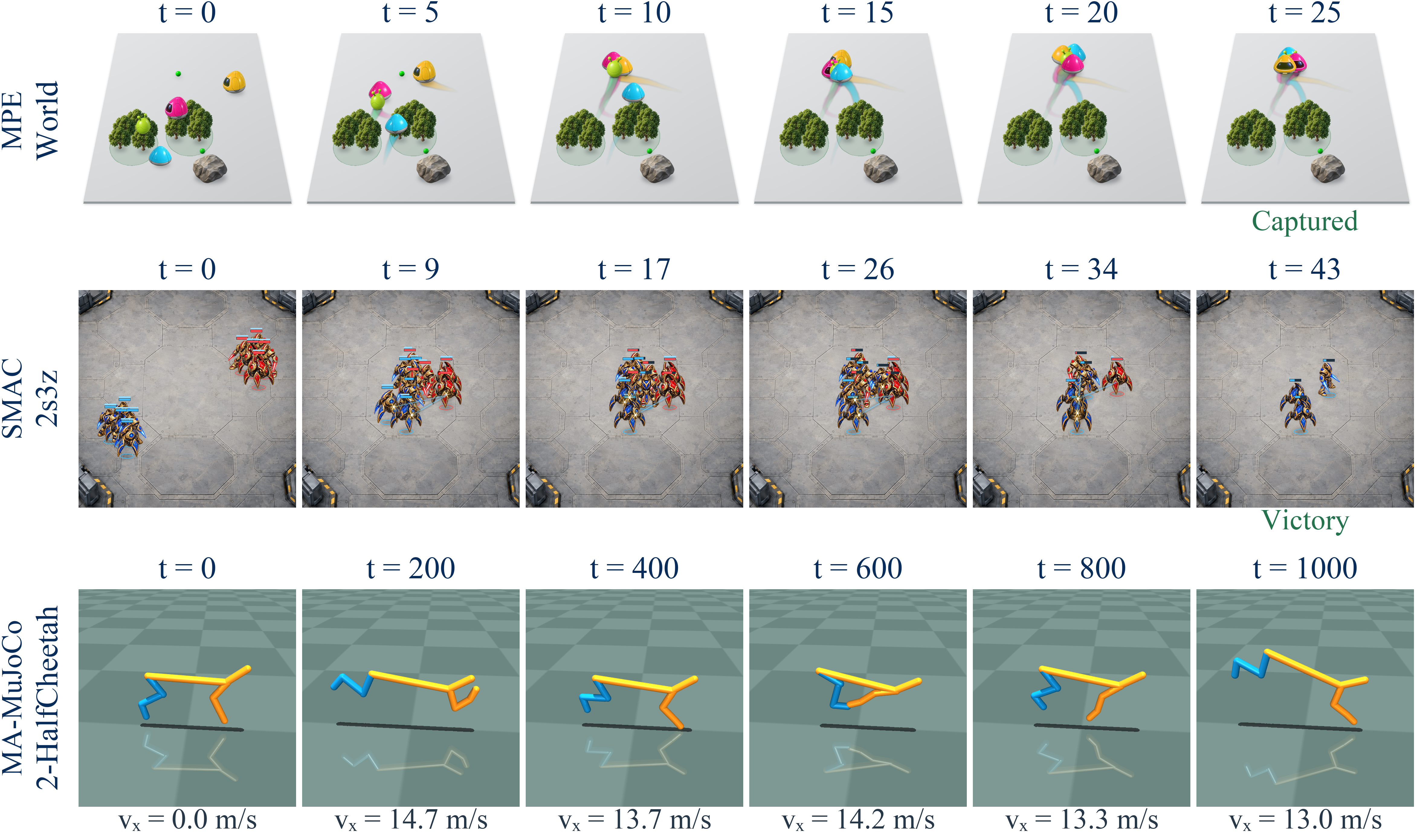}
\caption{\textbf{Actual MA-FPPO rollouts.} Six recorded states show capture in World, victory with no surviving enemies in 2s3z, and locomotion in 2-HalfCheetah. $t$: environment step. $v_x$: forward velocity in m/s. Poses follow recorded trajectories. Trails and combat effects illustrate movement and actions.}
\label{fig:main-rollouts}
\end{figure}

\subsection{Results and Research Questions}
\expanswer{1}{Online fine-tuning improves the coordination learned offline.}
Across the 30 settings , \method{} improves on the strongest listed offline reference by an average of 52.8\%, with a median gain of 34.7\%. Settings receive equal weight. MA-FPPO attains the highest displayed mean, including ties, in 13 of 14 discrete settings, eight of nine MPE settings, and all seven OG-MARL settings. SMACv2 averages 15.82, compared with the strongest offline average of 14.0. The exceptions are Spread Expert, where DOM2 has the highest displayed mean, and 5m6m Good, where \method{} scores 17.72 versus 18.6 for MAC-Flow.

\textbf{Combining offline experience and online interaction.} Appendix Figures~\ref{fig:main-training-curves} and~\ref{fig:main-smac-winrate} compare continuous and discrete control. Pretraining segments show our offline MAC-Flow reproduction. Black dashed curves show purely online MAPPO with the same architecture and random initialization. Over shared online budgets, the return and win-rate curves show how flow pretraining provides useful initialization and online interaction improves beyond offline performance, with benefits varying across tasks and data qualities. Offline collection and pretraining costs are excluded from the online comparison. All 13 paired endpoints with local offline reproductions improve, averaging 129.8\%. Figure~\ref{fig:main-rollouts} illustrates successful episodes.

\textbf{Effect of data quality.} \method{} substantially improves policies pretrained on low-quality datasets, demonstrating that online fine-tuning can overcome performance limitations of offline pretraining.
Relative gains over the strongest listed offline reference are larger for Random than Expert MPE data, and for Poor than Good SMAC data. Appendix~\ref{app:relative-gains} provides the full breakdown and aggregation protocol.

Figure~\ref{fig:dataset-quality-gain} reports gains over each pretrained policy. On Poor data, 2halfcheetah, 4ant, and 8m gain 92\%, 185\%, and 82\%, respectively. Lower-quality data often leave more room for improvement, but the trend is not monotonic and percentage gains depend on the starting return.

\expanswer{2}{Flow pretraining and the Gaussian online policy each improve final mean returns.}
\label{sec:zero-step-ablation}
\label{sec:parameterization-ablation}
Figure~\ref{fig:ablation-endpoints}(a) compares three combinations under the same 50M online budget. \textbf{Choice of pretraining.} Holding the Gaussian policy and online optimization algorithm fixed compares flow and BC initialization. Flow pretraining gives higher final means in all four settings: 577.02 versus 568.19 on Spread Expert, 471.99 versus 416.78 on Spread Medium, 11712.77 versus 10989.10 on HalfCheetah Good, and 5764.36 versus 5654.21 on HalfCheetah Medium. The largest relative gap occurs on Spread Medium. The comparison with BC evaluates the complete initialization from flow pretraining, including value guidance and distillation.

\textbf{Choice of online policy.} Keeping flow pretraining, the Gaussian policy also outperforms generative MAPPO in all four endpoint comparisons. On Spread Medium, the means are 472.0 versus 311.3. On HalfCheetah Medium, the means are approximately 5.76k versus 3.79k. The comparisons support inheriting flow pretrained behavior and using Gaussian exploration with explicit action likelihoods for online fine-tuning. Appendix~\ref{app:zero-step-ablation} defines the policies. The observed advantage is consistent with the mechanism in Appendix~\ref{app:latent-return} without isolating its contribution.

\textbf{Training from scratch.} Figure~\ref{fig:main-training-curves} compares the complete procedure with randomly initialized MAPPO using the same actor architecture. Appendix~\ref{app:quality-gain} specifies budgets and reference KL configurations. The comparison evaluates the complete pretraining and online fine-tuning procedure.

\expanswer{3}{Reference KL regularization has mixed effects. The centralized critic improves all four settings.}
\textbf{Reference KL regularization.}
\label{sec:components}
Figure~\ref{fig:ablation-endpoints}(b) removes reference KL with the centralized critic fixed. Final means rise in three settings. Spread Medium improves from 471.99 to 577.77, an increase of 22.4\%. HalfCheetah Medium slightly favors reference KL, 5764.36 versus 5710.82. Reference KL regularization thus has effects that differ across tasks. The main tables retain $\beta=0.01$.

\textbf{Centralized value estimation.}
Figure~\ref{fig:ablation-endpoints}(b) shows that replacing the centralized critic with a local critic lowers all four final means when reference KL is retained. On HalfCheetah Medium, the return falls from 5764.36 to 3867.10, a decrease of 32.9\%. On Spread Expert, the return falls from 577.02 to 573.28. Joint information for value estimation helps most on the tested HalfCheetah Medium setting. Actors remain decentralized in both variants.

\section{Conclusion and Limitations}
We presented \method{}, which initializes online policies from pretrained behavior and optimizes the student using explicit action likelihoods. Our method achieves average relative gains of 52.8\% over the strongest listed offline baselines across 30 settings and 29.8\% over purely online learning across 38 comparisons with matched online budgets and evaluation protocols. Flow pretraining outperforms behavioral cloning pretraining when both are followed by online fine-tuning with Gaussian policies in all four tested comparisons. After flow pretraining, online fine-tuning with Gaussian policies also outperforms directly fine-tuning the flow policy. MA-FPPO substantially improves policies pretrained on low-quality datasets, showing that online interaction can overcome performance limitations of offline pretraining. Removing reference KL improves three of four component comparisons.

\paragraph{Limitations.}
 Offline references receive no online interaction, while the comparison with training from scratch also differs in reference KL regularization. The comparison with behavioral cloning pretraining evaluates complete pretraining procedures, including distillation and value guidance. The fixed latent policy omits the full generative action distribution, and comparisons cover the tested implementations. A performance guarantee for the complete algorithm remains unestablished. We hypothesize that fixing the latent and controlling exploration separately aid optimization and coordination, but Figure~\ref{fig:direct-flow-policy-finetuning} does not isolate the causal effects of latent randomness or multimodality.

\clearpage
\noindent\textbf{AI use.} We used AI tools only to polish the language of the manuscript. All technical content, experiments, analyses, claims, and final wording were reviewed and approved by the authors.\par
\bibliography{references}
\bibliographystyle{plainnat}
\clearpage
\appendix
\addtocontents{toc}{\protect\setcounter{tocdepth}{2}}
\begingroup
\renewcommand{\contentsname}{Appendix Contents}
\pdfbookmark[0]{Appendix Contents}{appendix-contents}
\tableofcontents
\endgroup
\clearpage
\section{Experimental Details and Results}
\label{app:experimental-protocol}

\subsection{Experimental Setup}
\label{app:baseline-attribution}
We study discrete and continuous cooperative control. \textbf{SMAC}~\citep{smac} includes 3m, 8m, 2s3z, and 5m6m with Good/Medium/Poor datasets. \textbf{SMACv2}~\citep{smacv2} includes Terran 5v5, Zerg 5v5, and Terran 10v10 with Replay datasets. \textbf{MPE}~\citep{mpe,omar} includes Spread, Tag, and World with Expert/Medium/Random datasets, plus Medium-Replay for Spread. \textbf{MA-MuJoCo}~\citep{facmac} partitions a robot's joints among agents. The OG-MARL collection~\citep{ogmarl} includes 2ant with Good/Medium/Poor data and 2halfcheetah and 4ant with Good/Medium data. Table~\ref{tab:mujoco_combined} separately lists the OMIGA collection~\citep{omiga} used by MAC-Flow~\citep{macflow}: 6-HalfCheetah, 3-Hopper, and 2-Ant, each with Expert/Medium/Medium-Replay/Medium-Expert data. The two collections are reported separately. MPE uses the OMAR environment implementation compatible with the dataset~\citep{omar}. Tag controls the predator team against a fixed pretrained prey policy. Each initialization uses 1M offline optimizer updates. The online budgets are specified below.

The primary comparison is the \emph{same student's deployment policy} before and after online fine-tuning. Discrete deployment selects the available action with the highest logit. Continuous deployment executes the clipped mean. Separately, we evaluate the stochastic sampling policy used during PPO. The checkpoint at the end of the training budget is evaluated without selecting the best checkpoint on the test set. We report SMAC win rates and raw episodic returns, and raw team returns for continuous tasks. MPE returns average over the three controlled agents. The ablations report raw returns and Table~\ref{tab:published_continuous} reports normalized scores. Comparisons of raw returns are made within each environment. SMAC's offline recurrent encoder and action masking, MuJoCo timeout handling, and MPE's fixed opponent and reward implementation are part of the protocol. Comparisons across studies require matching the stated environment and evaluation choices.

Good, Medium, Poor, Expert, and Replay identify dataset quality categories. SMACv1 uses the original SMAC environment. SMACv2 uses maps with randomized unit capabilities and separate data. MPE uses the original reward and action conversion compatible with the dataset. Tag and World control predators while holding the prey policy fixed. The 2ant environment uses the Ant-v2 two-by-four partition. Rewards are raw returns, which can be nonzero even for a random policy. Differences in environments and rewards prevent direct comparison of absolute returns across benchmark families.

Figure~\ref{fig:offline-dataset-returns} characterizes the offline data through undiscounted trajectory returns. The analysis covers 16 tasks and 48 groups defined by task and data quality, including available quality groups whose online results are not reported in the main tables. MPE rewards are averaged over the three controlled agents, with trajectories ending when all agents terminate or after 25 steps. SMAC, SMACv2, and both MuJoCo collections use the stored trajectory boundaries, counting the shared team reward once per step. The figure contains 853,065 trajectories. SMACv2 includes Replay only. The OG-MARL and OMIGA MuJoCo collections are labeled separately. OMIGA contributes 6-HalfCheetah, 3-Hopper, and 2-Ant with Expert, Medium, Medium-Replay, and Medium-Expert data, totaling 25,872 trajectories. The converted OMIGA vault data exclude reset delimiters with zero reward before the calculation of trajectory returns. The distributions show variation and overlap within quality categories and provide context for the initialization data. Offline return distributions do not evaluate learned policies.

\begin{figure}[!htb]
\centering
\includegraphics[width=\linewidth]{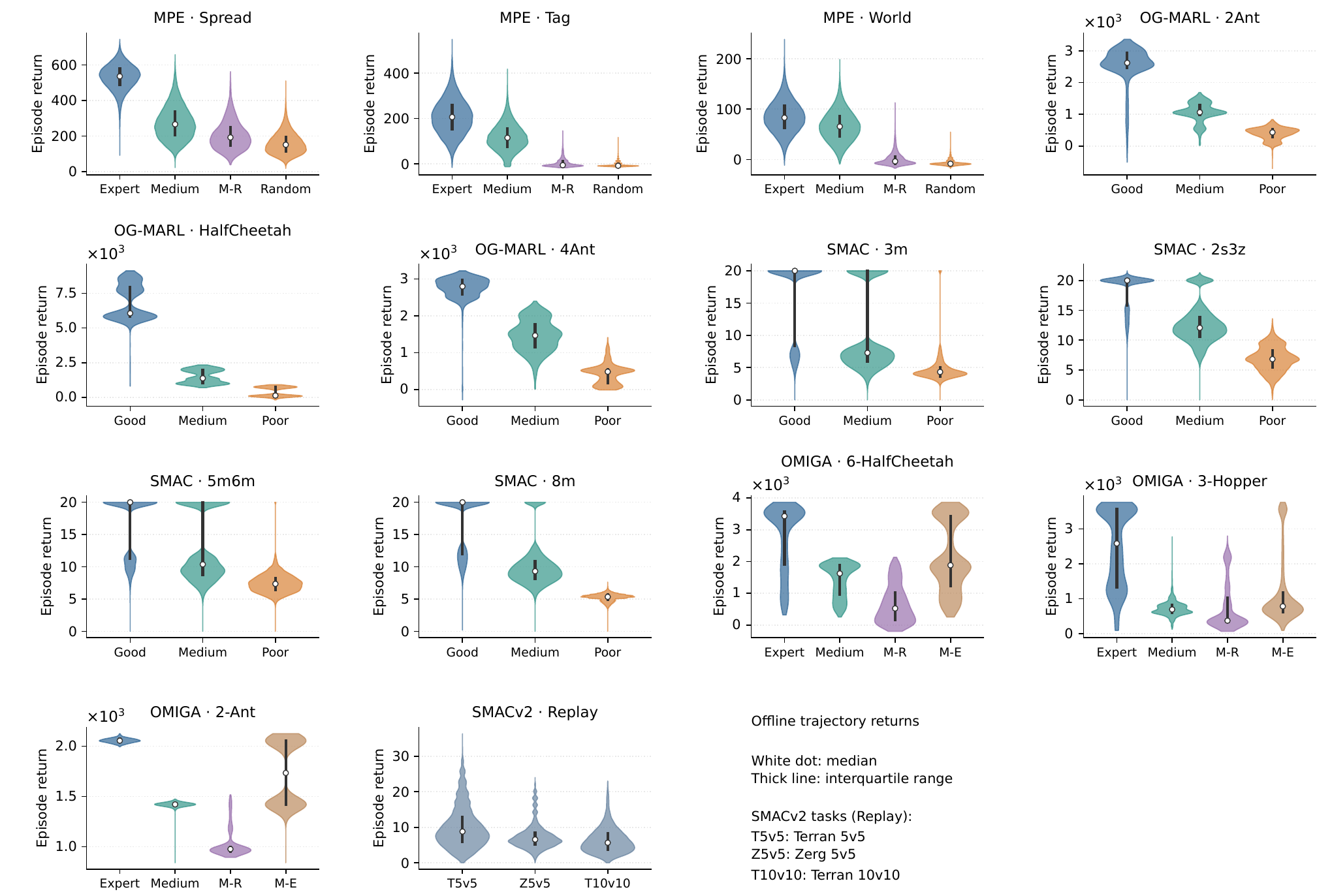}
\caption{\textbf{Offline distributions of trajectory returns.} Each panel shows one task, except the combined SMACv2 panel, which compares three Replay datasets. White dots mark medians and thick lines mark interquartile ranges. Violins use Gaussian kernel density estimates with Scott's bandwidth, restricted to the observed return range and normalized to equal maximum width within each panel. Width represents density rather than sample count. M-R and M-E denote Medium-Replay and Medium-Expert, respectively. Returns use raw scales of the respective tasks.}
\label{fig:offline-dataset-returns}
\end{figure}

For a figure showing pretraining and online fine-tuning, the final offline evaluation and evaluation at the start of online fine-tuning use matching evaluation and deployment conventions. Offline steps count gradient updates. Online steps count environment transitions. Axes of equal width display the two stages using optimizer updates and environment transitions, respectively. Saved random initialization is used when available. The original SMACv2 offline zero point was reconstructed using the initialization procedure and is labeled as reconstructed. No line is extended beyond the latest observed online evaluation.

BC denotes supervised behavioral cloning~\citep{pomerleau1991efficient}. MABCQ and MACQL are multi-agent adaptations of Batch-Constrained Q-learning (BCQ)~\citep{bcq} and Conservative Q-Learning (CQL)~\citep{kumar2020conservative}. MA-TD3+BC and MATD3BC denote adaptations of Twin Delayed Deep Deterministic Policy Gradient (TD3) with BC~\citep{fujimoto2021minimalist}. We cite the originating algorithms separately from the published multi-agent implementation and numerical sources~\citep{macflow,madiff}. ICQ, OMAR, and OMIGA refer to the original offline multi-agent methods~\citep{yang2021believe,omar,omiga}. Decentralized multi-agent diffusion (MADiff-D) and centralized multi-agent diffusion (MADiff-C) are the two MADiff variants~\citep{madiff}. DoF is the method for diffusion factorization of \citet{li2025dof}. Diffusion BC and Flow BC are the baselines trained only by imitation defined in MAC-Flow~\citep{macflow}, built on denoising diffusion~\citep{ddpm} and flow matching~\citep{fm}. The locally reproduced MAC-Flow entries retain the same method attribution~\citep{macflow}. The BC + MAPPO control combines behavioral cloning with MAPPO~\citep{pomerleau1991efficient,mappo}. Generative MAPPO and the variants with reference KL or a local critic are controls defined in this paper.

\begin{table}[ht]
\centering
\caption{Online fine-tuning settings. Online budgets count joint environment transitions, with one transition per joint environment step.}
\label{tab:hyper}
\small
\begingroup
\renewcommand{\arraystretch}{1.15}
\renewcommand{\tabularxcolumn}[1]{m{#1}}
\begin{tabularx}{\linewidth}{@{}>{\raggedright\arraybackslash}m{.40\linewidth}>{\centering\arraybackslash}X>{\centering\arraybackslash}X@{}}
\toprule
Parameter & SMAC / SMACv2 & MPE / MA-MuJoCo\\
\midrule
Actor learning rate & $2\times10^{-5}$ & $2\times10^{-5}$\\
Critic learning rate & $3\times10^{-4}$ & $3\times10^{-4}$\\
Discount / GAE $\lambda$ & 0.99 / 0.95 & 0.99 / 0.95\\
PPO clipping / KL stopping threshold & 0.05 / 0.02 & 0.05 / 0.02\\
Reference KL / entropy coefficients & 0.01 / 0.001 & 0.01 / 0.001\\
Update epochs / minibatch size & 4 / 128 & 4 / 128\\
Parallel environments / rollout length & 120 / 128 & 120 / 128\\
Gradient norm clipping & 0.5 & 0.5\\
Critic warmup & 2640 transitions & 2640 transitions\\
Reference KL direction & reference $\|$ current & current $\|$ reference\\
Initial exploration & categorical, $T=1$ & Gaussian, $\sigma=0.2$\\
Local actor input & frozen recurrent features & current local observation\\
Centralized critic & state, 128--128 multilayer perceptron (MLP) & joint observation/state, 256--256 MLP\\
\bottomrule
\end{tabularx}
\endgroup
\end{table}
Continuous environment simulation uses four worker processes per training group. With hardware shared across experiments, we compare performance by budget of environment transitions rather than wall-clock time.

Before an actor update, the stored likelihood is recomputed using the exact collected input, mask, and raw sample. The implementation checks a maximum absolute difference in log probabilities below 0.002. The first 2640 environment transitions train only the new centralized critic. The warmup fits the critic before actor updates begin.

At the final budget boundary, only the required parallel environments advance. For 50M transitions with the present rollout schedule, the final partial segment contains 80 actual environment transitions. The remaining environments retain physical state, random number generator state, and recurrent state. Advancing only the required environments matches simulator interactions to the specified budget.

\subsection{Main Results and Reporting Conventions}
\label{app:table-smac}
\label{app:table-mpe}
\label{app:table-mujoco}
\label{app:relative-gains}
\label{app:quality-gain}
Additional comparisons by task illustrate the trends in the main tables. On 3m Poor, return reaches 19.68 versus 4.4 for BC and 10.9 for the strongest listed offline baseline. 5m6m Poor reaches 19.13 versus 10.8. Tag Medium increases from the local pretrained student's 80.47 to 159.91, and World Medium from 89.82 to 172.64. In OG-MARL MuJoCo, 2ant Medium increases from 1077.94 to 2967.64 and 4ant Medium from 1254.43 to 3095.20.

Figure~\ref{fig:dataset-quality-gain} shows that initialization from lower quality data generally leaves more room for online improvement. In MPE, Random datasets yield gains of 154--221\%, compared with 48--106\% for Expert datasets. In SMAC, Poor datasets yield 77--116\% gains on 3m, 2s3z, and 5m6m, exceeding gains from Good and Medium data. All three MA-MuJoCo tasks also improve more in relative terms from Medium than from Good data. The relationship is not strictly monotonic: for example, 2ant gains 156\% from Poor data versus 170\% from Medium data. The observed gains support a tendency toward larger relative gains from weaker initial policies, rather than a universal ordering. A lower starting return also increases the percentage gain for a given absolute improvement. The SMACv2 panel with Replay data only demonstrates improvement but does not test a quality trend. The Poor entries for 2halfcheetah, 4ant, and 8m use mean returns across evaluations before and after fine-tuning. Their gains are computed as 100 times the difference in mean return divided by the pretrained mean. The 2halfcheetah Good and Medium cells retain the 50M endpoints for comparison with Poor data.

For SMAC and SMACv2, unmarked baseline values are reproduced from MAC-Flow \citep{macflow}, arXiv v2, Table 1. Offline baselines and our results of pretraining followed by online fine-tuning use different budgets. All MA-FPPO runs use 1M offline updates. $^{*}$ denotes 15M online environment steps. $^{\dagger}$ denotes training stopped at 31.122M transitions. Unmarked entries use 50M. --: result not yet available or incomplete aggregate. Unfilled baseline cells remain blank. Light green shading and boldface mark the largest displayed mean. Light gray shading and underlining mark the second-largest distinct mean, including MA-FPPO where available. Equal displayed means share ranks. Ranks describe the displayed means under the listed training budgets. 
Values are raw team episode returns. The SMACv2 average covers all three Replay tasks. No partial SMACv1 average is reported.

The Diffusion Offline Multi-agent Model (DOM2) results are from Table 1 of \citet{dom2}. Cooperative Navigation and Predator Prey correspond to Spread and Tag. The source uses its own collected datasets. DOM2 is therefore a published reference under a different data collection protocol, rather than a controlled comparison on identical offline data. 
For MPE, baselines other than DOM2 and marked entries are from MAC-Flow \citep{macflow}, arXiv v2, Table 2. Entries marked $^{\mathrm{m}}$ are from MADiff \citep{madiff}, Table 1.  Offline baselines and MA-FPPO use different training budgets. All MA-FPPO runs use 1M offline updates and the displayed 50M online endpoints. --: result not yet available or incomplete aggregate. Unfilled baseline cells remain blank. Light green shading and boldface mark the largest displayed mean. Light gray shading and underlining mark the second-largest distinct mean, including MA-FPPO where available. Equal displayed means share ranks. Ranks describe the displayed means under the listed training budgets. $^{\mathrm{r}}$: our MAC-Flow student after 1M offline updates, before RL. The score scale matches the corresponding MA-FPPO entry.
M-R/M-E: Medium-Replay/Medium-Expert. $^{\dagger}$: normalized MPE scores. Spread uses $100(R-159.8)/(516.8-159.8)$, where $R$ is the raw mean deployment return. Our raw returns averaged over evaluations are 577.02 for Expert, 471.99 for Medium, and 618.76 for Random. Tag and World also use normalized MPE scores $100(R-R_{\mathrm{random}})/(R_{\mathrm{expert}}-R_{\mathrm{random}})$, where $R_{\mathrm{expert}}$ and $R_{\mathrm{random}}$ are the expert and random reference returns, with reference returns of 185.6 and -4.1 for Tag and 79.5 and -6.8 for World. Scores exceed 100 when returns exceed the expert reference return. The offset affects only the means. $^{\mathrm{m}}$ in the MADiff column specifically denotes decentralized MADiff-D. Other MADiff entries retain the variant reported by MAC-Flow. Published Spread averages retain the original coverage. The MA-FPPO Spread average gives equal weight to Expert, Medium, Medium-Replay, and Random.

OG-MARL and OMIGA use separate baseline headers. OG-MARL: Published baselines follow MADiff \citep{madiff}, Table 1. OMIGA: baselines follow MAC-Flow \citep{macflow}, Table 2. All values are raw team returns.

OG-MARL baselines are from MADiff~\citep{madiff}, Table 1, including the Poor datasets for 2halfcheetah and 4ant.  $^{\mathrm{r}}$ denotes our MAC-Flow student after 1M offline updates, before online RL.  MA-FPPO uses 50M online steps, except $^{\ddagger}$ for 2halfcheetah, which continues from 50M to 100M. The 2ant Good and Medium entries use fresh 50M runs from the original pretrained checkpoints. At 50M, 2halfcheetah Good and Medium score $11712.77\pm16.94$ and $5764.36\pm1.77$, respectively, under the same evaluation protocol. The 100M entries have a larger online budget. MADiff-D is decentralized. MADiff-C uses centralized execution and is a separate reference. OMIGA uses 6-HalfCheetah, 3-Hopper, and 2-Ant. The MA-MuJoCo average for MA-FPPO gives equal weight to the 12 OMIGA task and dataset combinations. Published baseline averages retain their reported values. M-R/M-E: Medium-Replay/Medium-Expert. Light green shading and boldface mark the largest displayed mean. Light gray shading and underlining mark the second-largest distinct mean, as in the discrete control and MPE panels. Numerical ranks order the displayed means under the stated source protocols and budgets. Offline baselines and MA-FPPO have different budgets. The two panels use distinct collections of tasks and datasets.

Let $\mathcal C$ contain the 30 SMAC, SMACv2, MPE, and OG-MARL rows with completed \method{} scores in Tables~\ref{tab:continuous-main} and~\ref{tab:discrete-main}. The separate OMIGA collection and Spread Medium-Replay, 2halfcheetah Poor, 4ant Poor, and 8m Poor are excluded from this aggregate. For each row $c$, let $S_c$ be the displayed \method{} mean and $B_c$ the largest displayed offline mean in the same row. The comparison includes published references and local offline reproductions marked $^{\mathrm r}$. A centralized reference is included where listed. The selected reference can vary across rows. We compute
\begin{equation}
 \overline G_{\mathcal C}
 =\frac{1}{|\mathcal C|}\sum_{c\in\mathcal C}
 100\frac{S_c-B_c}{B_c}.
\end{equation}
All selected $B_c$ are positive. Each setting receives equal weight. Published average rows and settings without a completed online score are excluded. We use displayed rounded means so the calculation can be reproduced from the tables. The resulting average is 52.8\%, and the median relative gain is 34.7\%. The aggregation gives equal weight to the relative gain in each setting. SMAC/SMACv2 and MA-MuJoCo use episodic returns. MPE uses the normalized scores defined in Appendix~\ref{app:table-mpe}. MPE percentage gains are relative changes in normalized score and depend on the normalization offset. Relative gains are descriptive comparisons under the displayed budgets, including 100M online transitions for the two HalfCheetah rows. The online method receives additional environment interactions. Published references retain the source protocols.

\begin{figure}[!t]
\centering
\includegraphics[width=\linewidth]{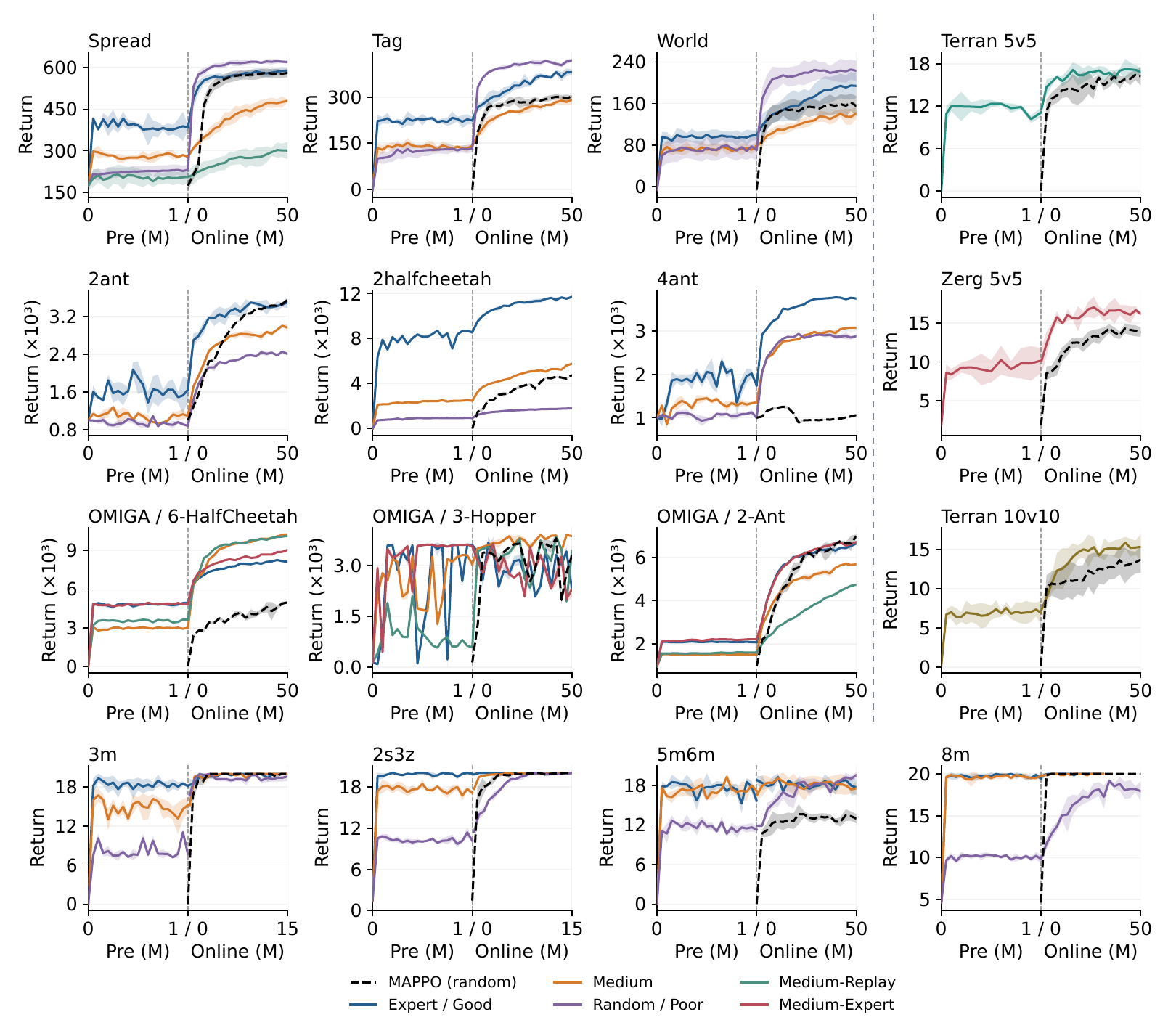}
\caption{\textbf{Learning curves during pretraining and online fine-tuning.} Sixteen tasks are arranged in four rows. Each benchmark occupies a separate row, with all four SMAC tasks shown together. OMIGA environments are separate from OG-MARL. Vertical dashed lines separate offline updates and online environment steps, both measured in millions. Dashed curves add MAPPO with the same architecture and random initialization without reference KL, online only, shown as black dashed curves in all panels. Budgets differ. All 16 MAPPO runs reach 50M online transitions. The 3m and 2s3z panels show the first 15M online steps. The 2halfcheetah panel shows the first 50M steps for all methods.}
\label{fig:main-training-curves}
\end{figure}

Across the completed comparisons outside OMIGA in Figure~\ref{fig:main-training-curves}, \method{} achieves final returns comparable to or higher than MAPPO trained from random initialization. MAPPO performs well on some tasks but does not consistently match \method{} across tasks.

By benchmark and dataset quality, the mean relative gains are: SMAC Good $-0.2\%$ across four tasks, Medium $+8.8\%$ across four tasks, and Poor $+85.9\%$ across three tasks. SMACv2 Replay gains $+11.4\%$ across three tasks. MPE gains are Expert $+45.1\%$, Medium $+46.2\%$, and Random $+150.3\%$, each across three tasks. MA-MuJoCo gains are Good $+25.1\%$ across three tasks, Medium $+122.5\%$ across three tasks, and Poor $+89.7\%$ for one task. The MPE groups additionally gain 57.72, 44.04, and 124.59 normalized score points on average, respectively. Coverage differs between groups. Spread Medium-Replay, 2halfcheetah Poor, 4ant Poor, and 8m Poor are excluded from these aggregates. The separate OMIGA MuJoCo collection is also excluded to retain the same benchmark coverage.

The comparison against the strongest reference is distinct from a before/after comparison with our own initialization. The 13 rows with local offline reproductions all improve after online fine-tuning, with a mean relative score gain of 129.8\%. The rows comprise Tag and World at three qualities each and seven OG-MARL MuJoCo settings. The paired statistic covers only the 13 settings.

Figure~\ref{fig:dataset-quality-gain} reports $100(R_{\mathrm{online}}-R_{\mathrm{pre}})/R_{\mathrm{pre}}$, where $R_{\mathrm{online}}$ and $R_{\mathrm{pre}}$ are raw deployment returns after online fine-tuning and offline pretraining, respectively. Every included pretrained return is positive. The percentages describe individual trained policies and use a different evaluation protocol from the aggregates in the main tables. Values are rounded to whole percentages only for display. MPE, MA-MuJoCo, and SMACv2 use 50M online steps, including the historical 50M endpoints for 2halfcheetah rather than the later 100M results. SMAC uses the existing budgets for each task: 15M for 3m and 2s3z, 50M for 5m6m, and 31.122M for 8m. Budgets are matched across qualities within each task, but not across all tasks. Missing quality levels are not interpolated. SMACv2 includes only Replay and is not used to infer an effect of dataset quality.

\paragraph{Comparing offline training, online training, and online fine-tuning.}
Figure~\ref{fig:main-training-curves} compares three training configurations across 46 combinations of tasks and datasets. Spread Medium-Replay and 2halfcheetah Poor complete 50M online steps. The 4ant Poor and 8m Poor curves show the available evaluations toward the same budget. The offline segment evaluates our pretrained student during 1M updates of joint flow learning, distillation with value guidance, and value learning \citep{macflow}. The final offline checkpoint supplies the \emph{MAC-Flow reference trained on offline data}, before any online fine-tuning. The offline results are local reproductions under the stated evaluation protocol. The colored online curves show \method{}, initialized from the same pretrained students and improved through online fine-tuning. Gains over the offline endpoint quantify the benefit of online fine-tuning.

The dashed curves supply the complementary \emph{MAPPO reference trained from random initialization}: the actor with the same architecture and centralized critic are trained from random initialization, without offline data or reference KL. The random initialization baseline uses the MA-FPPO policy construction and online optimization algorithm. Comparing training from random initialization with online fine-tuning tests the practical benefit of the complete pretraining and online fine-tuning procedure. When the full configuration includes reference KL regularization, the comparison reflects both initialization and regularization. The reference KL ablation in Figure~\ref{fig:component-ablation}(c) separately examines regularization. The comparison of BC initialization remains in the ablation in the main text, Figure~\ref{fig:parameterization-combined}(b): both flow and BC initializations use the same online optimization algorithm, testing the choice of offline initialization.

The figure thus connects two questions: how much online fine-tuning improves an offline MAC-Flow policy, and how the resulting policy compares with learning online from scratch. The observed improvement over offline initialization supports the first claim. Performance relative to MAPPO trained from random initialization varies across tasks and datasets at matched online steps. Comparisons of online sample efficiency count environment interactions and exclude the cost of collecting the offline data and performing pretraining.

\paragraph{Training budgets.}
The offline and online segments have separate horizontal scales and occupy equal visual widths. Segment lengths do not compare computational cost. The original \method{} budgets are 50M online transitions for MPE, 2ant, 4ant, 5m6m, and SMACv2, 100M for 2halfcheetah, 15M for 3m and 2s3z, and 31.122M for 8m, where training was stopped early. MAPPO trained from random initialization completes 50M online transitions on all 13 tasks. Comparisons across methods use the shared observed budget. SMACv2 includes Replay only, and the Terran and Zerg 5v5 offline zero points use the recorded initialization reconstruction.

All curves show raw team returns. Return gains and remaining gaps depend on task and dataset quality: MPE initialization from lower quality data can catch up with initialization from higher quality data, while several SMAC tasks approach similar plateaus at high returns after online fine-tuning. Returns on 3-Hopper fluctuate substantially during both offline pretraining and online fine-tuning, and the precise cause remains unclear.

\begin{figure}[!htb]
\centering
\includegraphics[width=\linewidth]{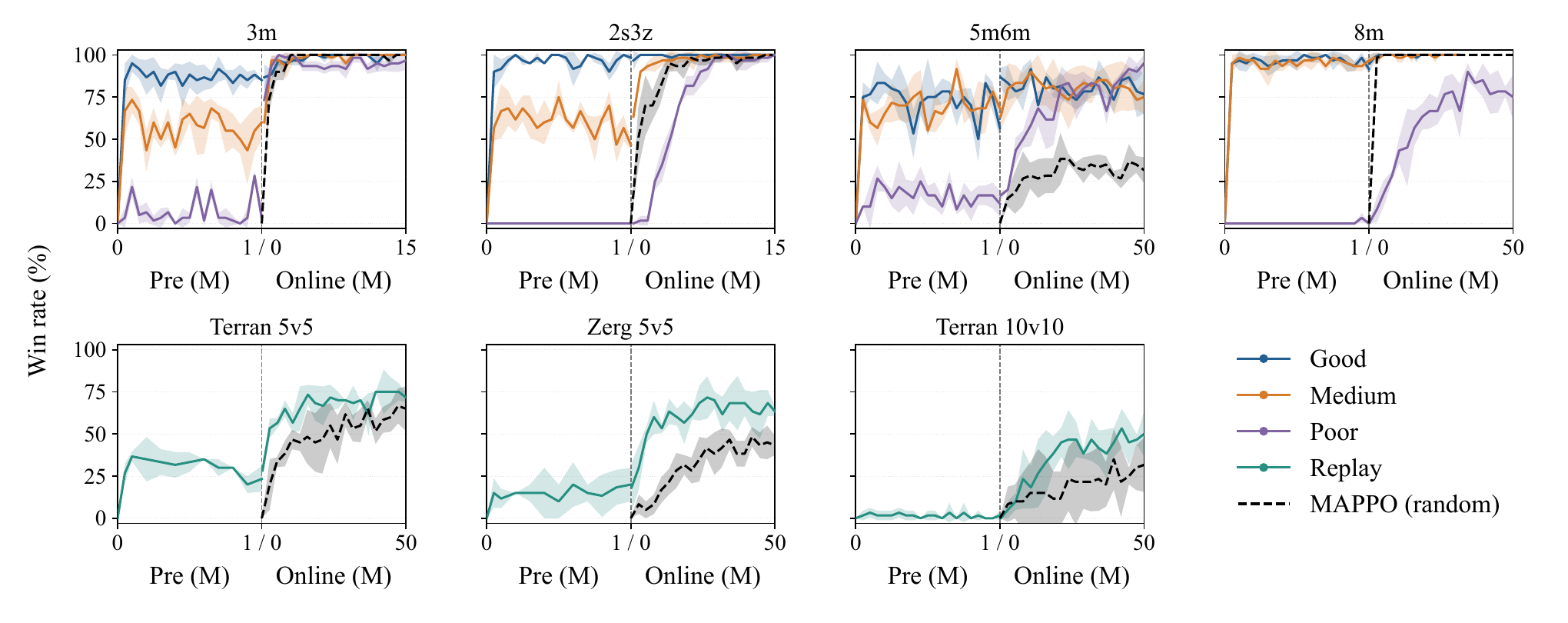}
\caption{\textbf{Deployment win rates on SMAC and SMACv2.} Available SMAC Good/Medium/Poor settings and SMACv2 Replay tasks. Vertical dashed lines separate offline updates and online transitions, both measured in millions. Black dashed curves show MAPPO with the same architecture and random initialization.}
\label{fig:main-smac-winrate}
\end{figure}

Figure~\ref{fig:main-smac-winrate} complements the return curves with the fraction of evaluation episodes won. The 3m and 2s3z settings approach high win rates after online fine-tuning, including initialization from lower quality data. The 8m Good and Medium curves are near the observed ceiling, whereas 5m6m and the SMACv2 tasks retain more visible variation during online fine-tuning. Win rate measures completed battles and can therefore distinguish task success from improvements in shaped return. The reconstruction of the pretrained SMACv2 starting point and the differing online budgets follow the conventions of Figure~\ref{fig:main-training-curves}.

\paragraph{Direct flow policy fine-tuning.}
Figure~\ref{fig:direct-flow-policy-finetuning} compares \method{} with Direct Flow Policy Fine-tuning, which directly updates the pretrained student generator while retaining random latent inputs. The comparison uses the generative MAPPO implementation described in Appendix~\ref{app:zero-step-ablation}. All curves cover the same 50M online budget.

\begin{figure}[!htbp]
\centering
\includegraphics[width=\linewidth]{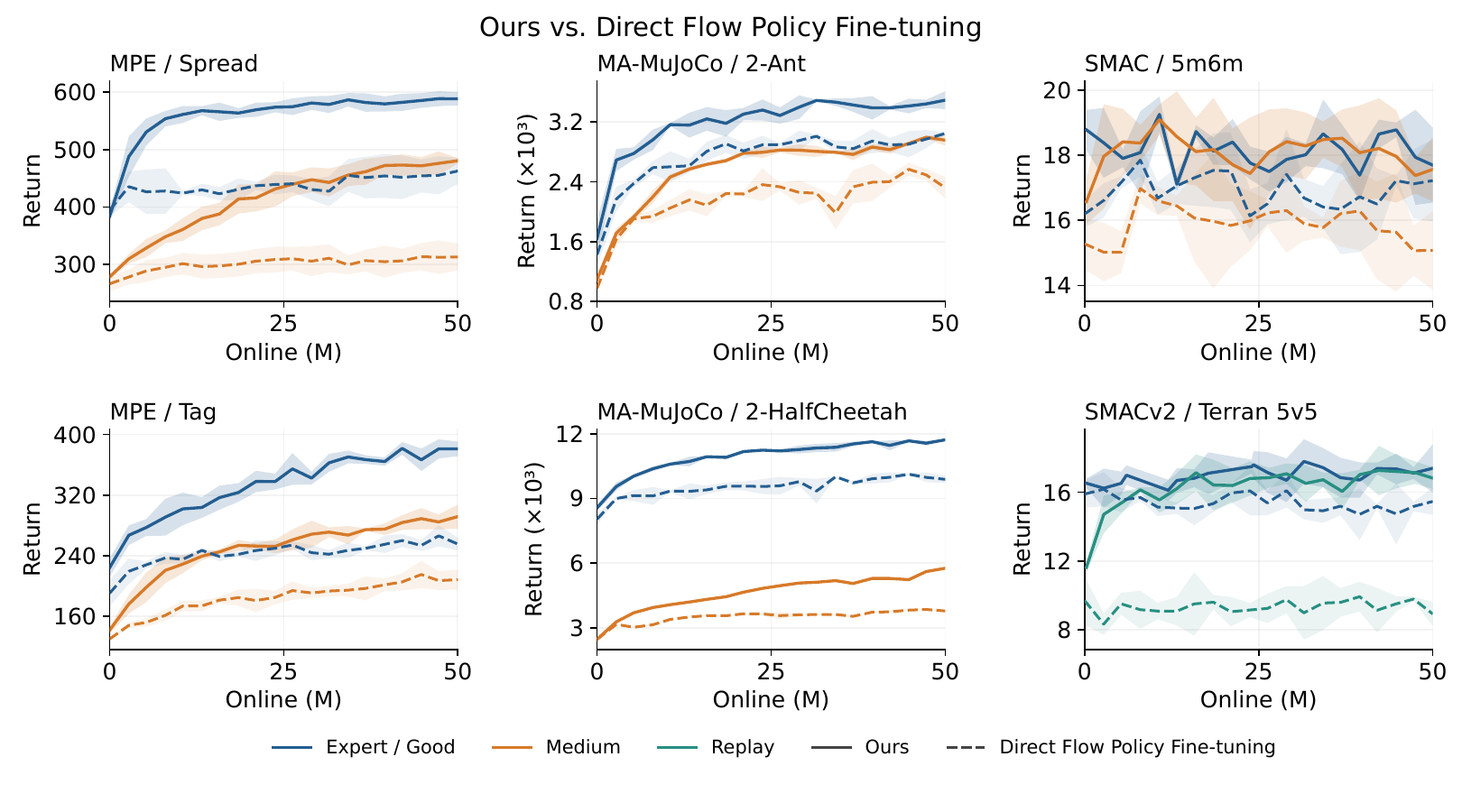}
\caption{\textbf{Comparison with direct flow policy fine-tuning.} Solid curves show \method{} and dashed curves show Direct Flow Policy Fine-tuning. Matching colors identify the same offline data quality. Horizontal axes show online environment steps in millions.}
\label{fig:direct-flow-policy-finetuning}
\end{figure}

\subsubsection{Rationale for Fixing the Latent during Online Fine-Tuning}
\label{app:exploration-rationale}
Fixing the latent retains a useful pretrained action mapping while making online exploration directly controllable. Flow pretraining learns from the behavior represented in the offline data, whereas online fine-tuning optimizes team return through further interaction. The distribution used to learn behavior and the distribution used for online exploration therefore serve different purposes. We explain the choice through evidence from motor learning, the properties of our policy construction, and the experimental comparisons.

\paragraph{Evidence from motor learning.}
\citet{dhawale2019adaptive} trained rats to press a joystick for rewards determined by movement direction. To distinguish the influence of reward from movement accuracy, the authors inserted trials and blocks in which reward was assigned independently of direction. Reward delivery reduced subsequent movement variability, whereas withholding reward increased it. Keeping the target fixed over extended training reduced variability regulation, and returning to changing targets restored it. The analyses identified a fast response to recent rewards and a slower adjustment to task uncertainty. Their policy gradient simulations used a Gaussian policy with an adaptive mean, regulated exploratory variance, and a separate source of motor noise. Appropriate regulation improved learning and reward accumulation. Increasing the regulation gain above its optimum accelerated learning but reduced accumulated reward. The animal experiments established that reward can causally regulate movement variability, while the simulations explained how regulating exploration can support both learning and performance. The relevant principle is adaptive control of exploration, rather than either eliminating variation or maximizing it.

\paragraph{What fixing the latent changes.}
During continuous online fine-tuning, the student output $g_\theta(h,0)$ is the Gaussian mean in Eq.~\eqref{eq:gaussian}, where $h$ denotes an agent's local input. Zero is the latent used by the pretrained student's deterministic deployment policy, so retaining it preserves that initial action mapping. The student parameters continue to change with team advantages, and different observations can produce different actions. Fixing the latent consequently preserves an initialization without freezing the policy. It also differs from averaging the generator's outputs over latent samples, which need not equal $g_\theta(h,0)$.

For fixed $h$ and parameters, let $Z$ follow the generator's base distribution and assume that $g_\theta(h,Z)$ has finite second moments. Define the covariance induced by latent sampling as $C_\theta(h)=\operatorname{Cov}_Z[g_\theta(h,Z)]$. To distinguish the sources of variation, consider raw continuous actions with independent Gaussian noise $\eta\sim\Normal(0,\Sigma)$:
\begin{align*}
 U_Z&=g_\theta(h,Z)+\eta,
 &U_0&=g_\theta(h,0)+\eta,\\
 \operatorname{Cov}(U_Z\mid h)&=C_\theta(h)+\Sigma,
 &\operatorname{Cov}(U_0\mid h)&=\Sigma.
\end{align*}
Independence makes the cross covariance zero, giving the first identity, while the fixed generator output gives the second. With the same $\Sigma$, fixing the latent removes a positive semidefinite component of raw action covariance. In \method{}, $\Sigma=\operatorname{diag}(\sigma_\theta^2)$ is learned directly. This calculation holds the additive noise covariance fixed and concerns raw actions before clipping. The implemented policies can learn different noise scales. The action means generally differ, and returns additionally depend on the environment and subsequent learning, so the covariance identity alone does not rank the policies by return.

When the latent is sampled, generator updates can change both the action mean and the variation induced by the latent. Fixing the latent removes that additional source of variation, leaving the exploration scale under separate control through $\sigma_\theta$. The Gaussian policy also provides the explicit raw action likelihood used by PPO. Direct Flow Policy Fine-tuning learns transition noise as well, so the distinction concerns the additional variation from the base latent and the resulting policy construction. The analogy with motor learning is the separation of learned behavior from adjustable exploration. Our optimizer does not implement the biological reward regulation rule, and the neuroscience study did not compare fixed and random latent policies.

\paragraph{Evidence from our experiments.}
Figure~\ref{fig:direct-flow-policy-finetuning} compares online policy constructions after flow pretraining. Direct Flow Policy Fine-tuning retains the random base latent and updates the student generator, whereas \method{} uses a fixed latent with an explicit action policy. At 50M online steps, \method{} achieves higher return in all 12 displayed combinations across Spread, Tag, 2ant, 2halfcheetah, 5m6m, and Terran 5v5. Eight comparisons concern continuous control with Gaussian policies, and four concern discrete control with categorical policies. The continuous results directly support the Gaussian construction, while the discrete results show that the benefit extends to the corresponding categorical construction.

For example, final returns are 480.62 versus 312.84 on Spread Medium and 5766.14 versus 3787.57 on 2halfcheetah Medium, with \method{} listed first. The gap is smaller on 5m6m Good, at 17.69 versus 17.22. The advantage therefore varies with the task and data quality. These results support the complete online policy construction at the tested budget. Because the constructions also use different deployment rules, the return comparison reflects the combined effects of learning and execution.

The BC comparison in Figure~\ref{fig:ablation-endpoints}(a) addresses a complementary question: whether flow pretraining remains useful after fixing the latent. With Gaussian online fine-tuning in both cases, flow pretraining produces higher final returns than BC pretraining in all four tested combinations. Thus, retaining the full random generator during online fine-tuning is not required to obtain an empirical benefit from the flow pretraining procedure. Figure~\ref{fig:main-training-curves} additionally compares the complete procedure with MAPPO trained from random initialization. That comparison assesses the overall benefit of pretraining and the configured regularization, rather than isolating the latent choice.

\paragraph{Interpretation of the combined evidence.}
The mathematical identity identifies the variation removed by fixing the latent. The biological evidence motivates adapting exploration to task demands, and the experiments support using the pretrained action mapping with an explicit online policy. We hypothesize that removing latent sampling makes the relationship between observations, actions, and team outcomes easier to refine while the learned standard deviation retains useful exploration. When independent latent samples select different behavior patterns across agents or successive decisions, they can introduce variation that complicates coordination. Fixing the latent removes this source, but shared team advantages and interaction are still needed to learn coordinated actions. The existing comparisons do not separate the contributions of latent fixing, likelihood computation, exploration scale, and deployment. They establish an empirical advantage of the tested construction, while the exact cause of the return improvement remains unresolved.

\subsection{Ablation Studies}
\label{app:zero-step-ablation}
\label{app:component-ablation}
\begin{figure}[!t]
\centering
\includegraphics[width=\linewidth]{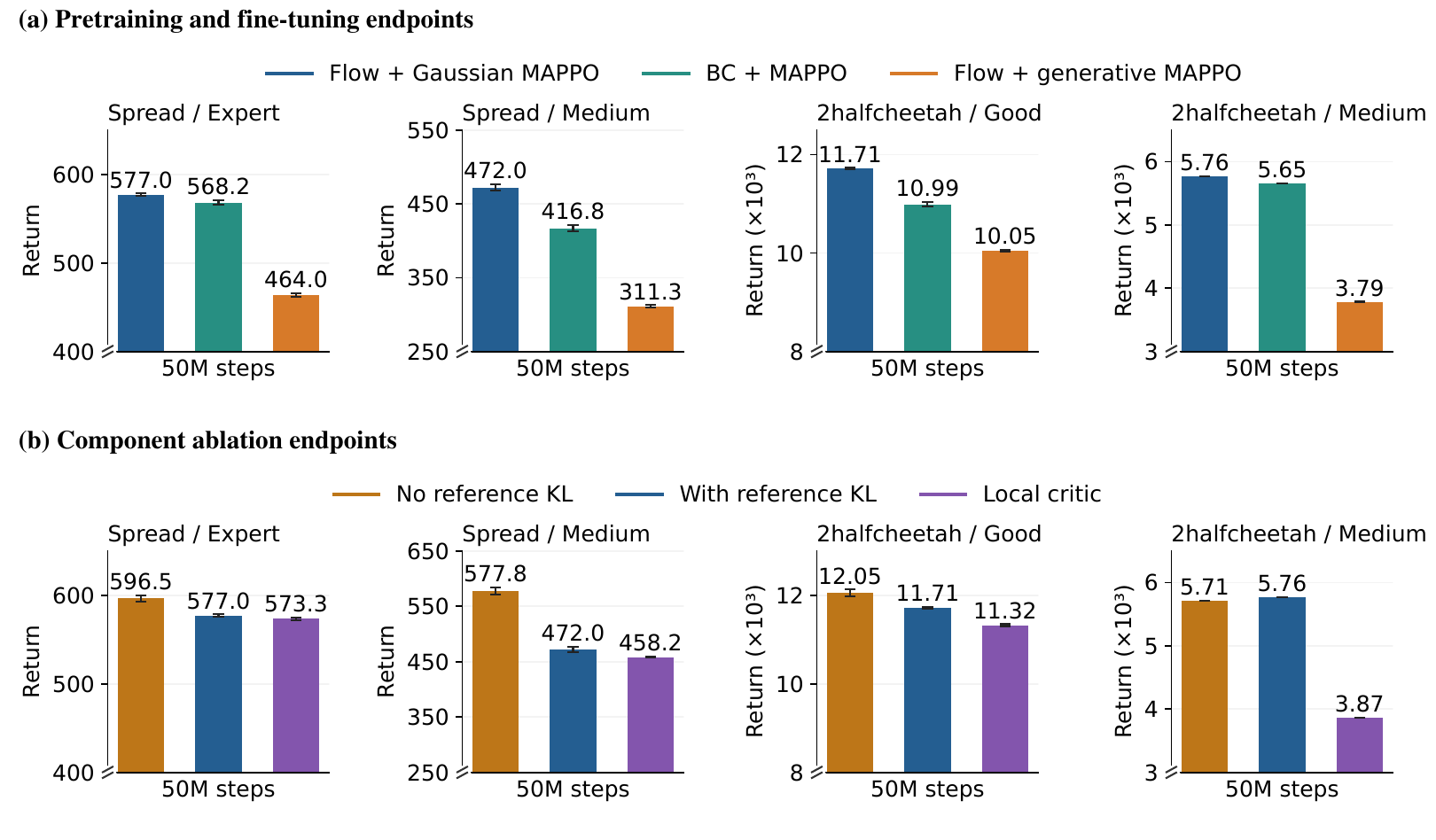}
\caption{\textbf{Endpoint comparisons at 50M online steps.} (a) Gaussian MAPPO initialized from flow pretraining, Gaussian MAPPO initialized from BC, and generative MAPPO initialized from flow pretraining. (b) No reference KL, the main configuration with reference KL, and local critic. Columns show Spread Expert/Medium and 2halfcheetah Good/Medium. Vertical axes are truncated. Corresponding curves appear in Figure~\ref{fig:parameterization-combined}(b,c).}
\label{fig:ablation-endpoints}
\end{figure}

Figure~\ref{fig:ablation-endpoints}(a) evaluates the same 50M checkpoints as the curves in Figure~\ref{fig:parameterization-combined}(b), on Spread Expert/Medium and 2halfcheetah Good/Medium. Deployment with a fixed latent uses the clipped mean at $z=0$. Generative MAPPO samples the base latent with additional transition noise disabled. All three methods use the same environment budget. Deployment follows each policy definition. Gaussian MAPPO uses raw action likelihoods at a fixed latent, while generative MAPPO conditions generation on a sampled latent and computes the likelihoods of the sampled transitions. The truncated vertical axes are marked explicitly.

The HalfCheetah comparison is limited to 50M online steps. 

The BC curves in Figure~\ref{fig:parameterization-combined}(b) cover Spread Expert/Medium and 2halfcheetah Good/Medium. The BC policies use direct BC initialization followed by Gaussian MAPPO. Each column corresponds to one combination of task and dataset. Bars are ordered Gaussian MAPPO initialized from flow pretraining, Gaussian MAPPO initialized from BC, and generative MAPPO. Matching blue, green, and orange colors identify methods in the curves and bars. Tag and 2ant are not included in this figure.

Our choice of flow matching is motivated primarily by performance after online fine-tuning. With the Gaussian policy, online optimization algorithm, and 50M online budget held fixed, flow pretraining yields higher final mean return than BC pretraining in all four settings in Figure~\ref{fig:ablation-endpoints}(a): 577.02 versus 568.19 on Spread Expert, 471.99 versus 416.78 on Spread Medium, 11712.77 versus 10989.10 on HalfCheetah Good, and 5764.36 versus 5654.21 on HalfCheetah Medium. Thus, the benefit of the chosen initialization remains observable after substantial online fine-tuning. The four comparisons provide the direct empirical reason for using the flow pretraining procedure in \method{}.

A plausible explanation is that the offline pretraining procedure combines learning the behavior distribution with policy improvement guided by value estimates. For continuous actions, BC with squared error targets the conditional mean of the dataset actions. When distinct successful behaviors require different actions, the mean action can be less effective than either behavior. A conditional flow teacher can represent multiple action modes, while distillation and value guidance train a student toward the teacher's behavior and actions with higher estimated value. A pretrained policy with higher return can then generate more useful team trajectories for subsequent online policy updates under a finite interaction budget. The proposed explanation concerns the quality of the student's deployed action at $z=0$. The teacher represents multiple modes during pretraining.

The BC comparison measures the value of the complete flow pretraining procedure under the same Gaussian online algorithm. The generative MAPPO comparison addresses the complementary choice of online policy after flow pretraining. Higher final returns from flow initialization remain observable despite the Gaussian online distribution. The results support using the pretrained behavior with explicit action likelihoods in the tested settings.

Figure~\ref{fig:ablation-endpoints}(b) compares the deployed policies in the four displayed settings. Each bar shows the return at the 50M checkpoint of the corresponding curve in Figure~\ref{fig:component-ablation}(c). Truncated vertical axes are marked explicitly. The variant with a local critic retains reference KL regularization. The variant without reference KL retains the centralized critic.

\subsection{Scope of the Comparisons}
The comparisons use the budgets and protocols in Appendix~\ref{app:relative-gains}. The flow versus BC comparison evaluates complete pretraining procedures, including distillation and value guidance. Gaussian and generative MAPPO are compared as implemented, without isolating the effect of multimodal action modeling on exploration. Unavailable published results are excluded from aggregates.

\subsection{Recorded Task Trajectories}
\label{app:recorded-trajectories}
Figures~\ref{fig:trajectories-continuous} and~\ref{fig:trajectories-discrete} show recorded trajectories for continuous and discrete tasks, respectively. Each row follows one episode through six selected states.

The two figures serve different purposes. The continuous trajectories show behavior present in the offline datasets and provide context for the actions learned during pretraining. The discrete trajectories show successful execution by fine-tuned policies. Reading them together connects the source of pretrained behavior with its use in cooperative control, rather than presenting the two figures as a before-and-after comparison of the same policy.

For MPE, the position trails show how agents move relative to teammates, landmarks, and targets. Landmark coverage and capture contact help relate the movements to the task objective. For MuJoCo, each view is centered on the robot, so forward progress should be read from the reported velocity together with the sequence of body configurations. Colors identify the actuator groups controlled by different agents, making their contributions to the shared motion visible.

For the discrete tasks, unit positions, health, and survivor counts show how the engagement develops over time. Attack links help interpret which opponents the allied units target. The six states summarize selected moments; the corresponding supplementary videos show the intervening motion. These examples complement the return curves by illustrating behavior within individual episodes. Aggregate performance comparisons remain based on the quantitative results.
\begin{figure}[p]
\centering
\includegraphics[width=\linewidth,height=.875\textheight,keepaspectratio]{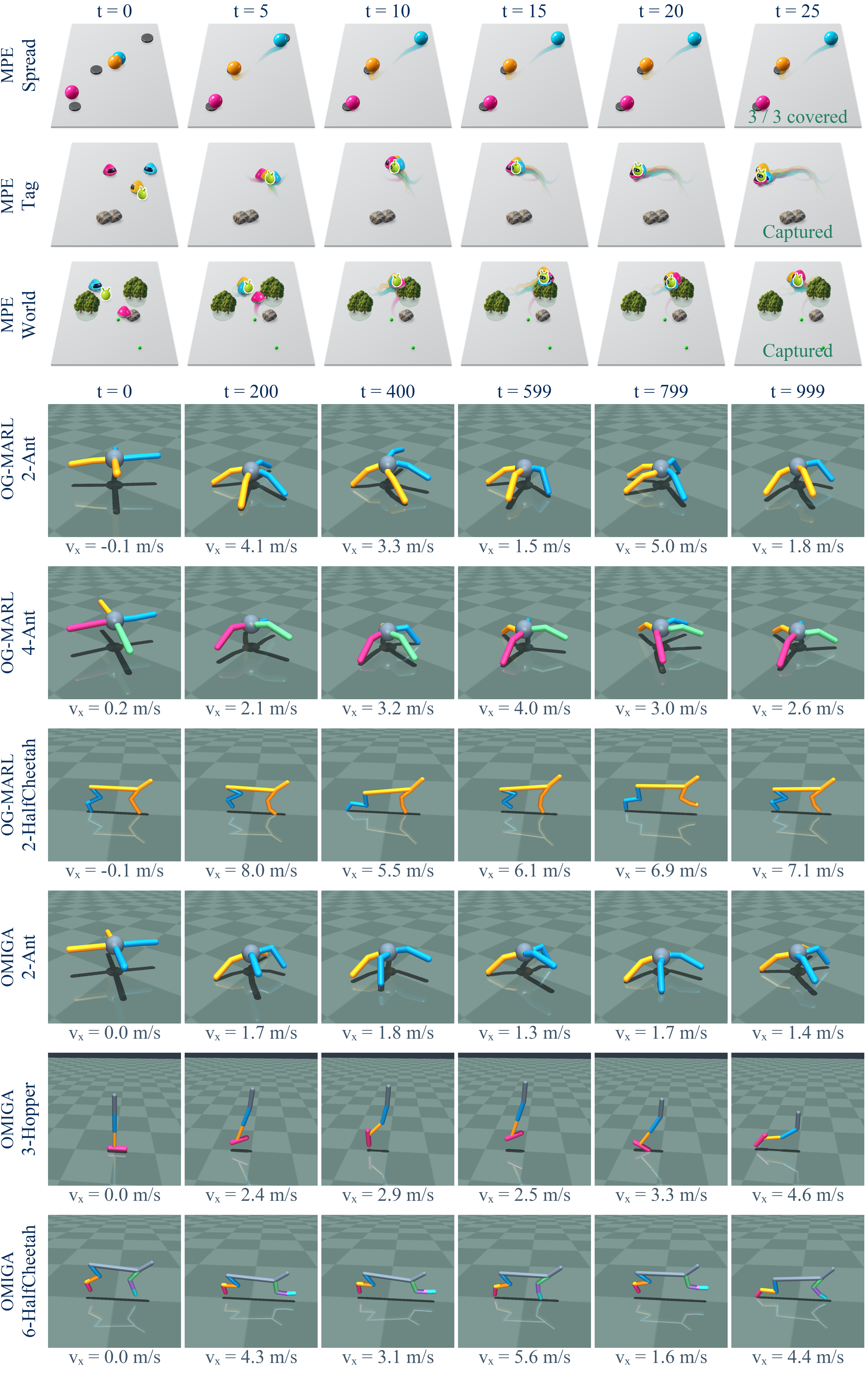}
\caption{Recorded trajectories in tasks with continuous actions. Each row shows six states from an episode with high return: MPE Expert, OG-MARL Good, or OMIGA Expert. Time headers are shared within MPE and MuJoCo. MPE trails follow recorded positions. Green labels mark landmark coverage or capture contact. MuJoCo poses use recorded joint configurations and forward kinematics, with the body centered in each view. Both OG-MARL and OMIGA report the recorded forward velocity $v_x$ in m/s. Colors distinguish actuator groups. Task prefixes count agents controlling one robot.}
\label{fig:trajectories-continuous}
\end{figure}
\begin{figure}[p]
\centering
\includegraphics[width=\linewidth,height=.858\textheight,keepaspectratio]{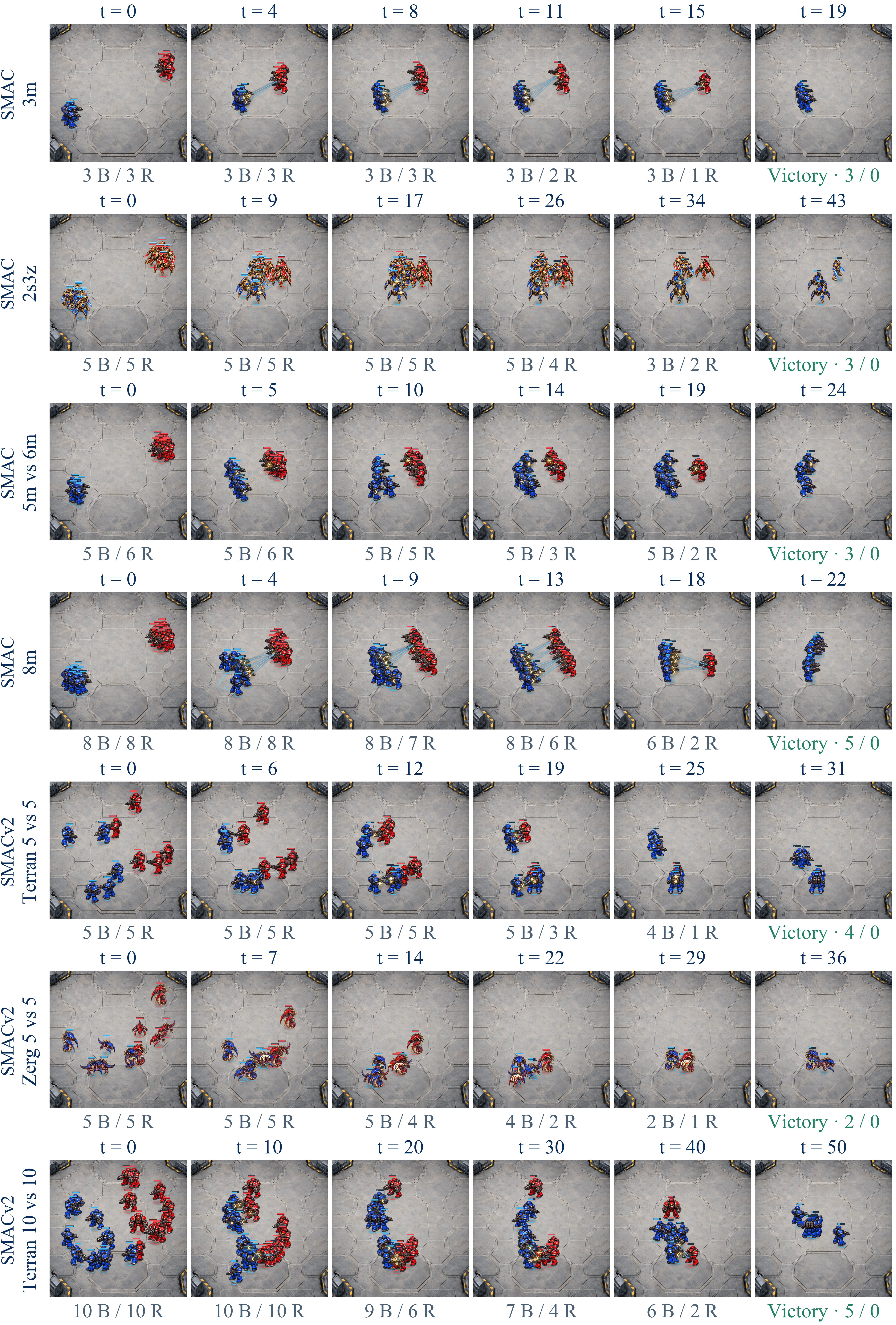}
\caption{Recorded trajectories in tasks with discrete actions. Each row shows six states from a successful episode from a fine-tuned MA-FPPO policy, ending with no surviving enemies. Blue and red indicate allied and enemy units. Counts report survivors. Positions, health, and disappearance follow the records. Sprite facing uses recorded allied action targets when available, otherwise displacement or the previous direction. Zerg body motion, hydralisk spines, and zergling claw strokes illustrate recorded allied attack commands. In 5m vs 6m, missing attack targets are illustrated by dashed links inferred from damage and proximity. Other links follow recorded commands. The selected successful episodes illustrate behavior rather than average performance.}
\label{fig:trajectories-discrete}
\end{figure}

\clearpage

\section{Method Details and Theoretical Foundations}
\label{app:method-details}
\subsection{Notation and Index Conventions}
\label{app:notation}
Environment time $t$, flow time $\tau$, Euler step $k$, and optimizer update index $n$ describe different operations. A teacher integrates over $K=10$ flow steps for one target. A deployed student uses one forward pass for one agent decision. Online budgets count joint environment transitions across parallel environments. Subscripts such as $h_i$ and $u_i$ suppress environment time when discussing a single decision. The abbreviated symbols correspond to $h_t^i$ and $u_t^i$ in rollout equations. Bold symbols collect all controlled agents. Superscripts $\mathrm{off}$, $\mathrm{raw}$, $\mathrm{disc}$, and $\mathrm{cont}$ identify quantities or branches. The superscripts are labels rather than powers. All logarithms are natural.

\small
\begin{longtable}{@{}p{.25\linewidth}p{.70\linewidth}@{}}
\caption{Symbols used in training, deployment, and analysis.}\label{tab:notation}\\
\toprule Symbol & Meaning \\\midrule
\endfirsthead
\multicolumn{2}{l}{\textit{Table~\ref{tab:notation} continued}}\\
\toprule Symbol & Meaning \\\midrule
\endhead
\bottomrule\endfoot
\multicolumn{2}{l}{\textbf{Indices, clocks, and budgets}}\\
$i,N$ & Index of a controlled agent and number of controlled agents.\\
$t$ & Environment time along an episode or rollout. Recurrent states advance at environment steps.\\
$\tau\in[0,1]$ & Continuous flow interpolation time: 0 is latent noise and 1 is the data endpoint.\\
$k,K,\tau_k$ & Euler integration index, number of integration steps, and grid time $\tau_k=k/K$. $K=10$. Euler steps are generation steps within one teacher query.\\
$n,U$ & Offline optimizer update index and total offline updates. The reported runs use $U=10^6$.\\
$C,B_{\mathrm{env}}$ & Number of collected joint transitions and online transition budget in Algorithm~\ref{alg:ma-fppo}.\\
$j,\mathcal B$ & Offline minibatch sample index and minibatch. $|\mathcal B|$ is the batch size.\\
$l$ & Index of future offsets in the GAE sum. The product index $j$ in Eq.~\eqref{eq:gae} is a local dummy index for time offsets.\\
$b,e_b$ & Discrete action index and corresponding one-hot encoding, distinct from the bootstrap mask $b_t$.\\
\multicolumn{2}{l}{\textbf{Environment and offline generation}}\\
$\D$ & Offline trajectory dataset.\\
$o_t^i,h_t^i,s_t$ & Local observation, actor input including agent identity, and centralized critic input. $h_t^i$ includes recurrent features where used. $s_{t+1}^{+}$ is the successor critic input before reset.\\
$a_t^i,\mathbf a_t,r_t$ & Executed local action, joint action, and shared team reward.\\
$d_a$ & Continuous action dimension or number of one-hot coordinates for a discrete action.\\
$z_i,I$ & Standard Gaussian latent and identity covariance matrix. Online students use $z_i=0$.\\
$x_1^i,x_\tau^i$ & Dataset action vector and interpolated flow input $(1-\tau)z_i+\tau x_1^i$. The subscript 1 is a flow endpoint.\\
$\widehat x^{i,(k)}$ & Numerically integrated teacher state at Euler step $k$. Parentheses mark an integration index.\\
$v_\phi,F_\phi,g_\theta$ & Teacher velocity field, integrated teacher target, and student output from one forward pass.\\
$\phi,\theta,\omega,\xi$ & Parameters of the teacher head, student/actor, offline Q functions, and shared encoder. Online $\theta$ also includes the Gaussian log standard deviation where used.\\
$\omega^\dagger,\sg[\cdot]$ & Detached Q weights and stop-gradient. Action derivatives through Q remain active for continuous guidance.\\
$\bar Q_\omega,G_j$ & Offline action value averaged over the ensemble and continuous Q value averaged across agents for minibatch sample $j$.\\
$p_\theta^{\mathrm{off}},y_i$ & Offline categorical student probabilities and teacher argmax label.\\
$\mathcal L_Q,\mathcal L_{\mathrm{FM}}$ & Offline temporal-difference and flow matching losses.\\
$\mathcal L_{\mathrm{distill}},\mathcal L_{\mathrm{guide}},\mathcal L_{\mathrm{off}}$ & Distillation, value guidance, and combined offline losses.\\
$\alpha$ & Distillation coefficient, set to 3 for discrete tasks and 1 for continuous tasks.\\
\multicolumn{2}{l}{\textbf{Online policies, advantages, and updates}}\\
$\ell_\theta,\mathcal A_i(h_i),T$ & Student logits, set of legal actions, and categorical temperature, with $T=1$.\\
$\pi_\theta,\Pi_\theta$ & Local distribution of executed actions and the conditional joint product. For continuous control, clipping induces $\pi_\theta$ from $q_\theta$.\\
$q_\theta,\sigma_\theta$ & Raw Gaussian sampling density and vector of standard deviations, shared across states and agents.\\
$u_t^i$ & Raw continuous action before clipping. $a_t^i=\clip(u_t^i,-1,1)$. The raw action is distinct from flow time $\tau$.\\
$\mu_\theta,w_t^i$ & Unified sampling policy and stored sample: $(\pi_\theta,a_t^i)$ for discrete control, $(q_\theta,u_t^i)$ for continuous control.\\
$\theta_{\mathrm{old}},\psi_{\mathrm{old}}$ & Actor and critic parameters saved for rollout collection and target construction, refreshed between rollouts.\\
$\theta_{\mathrm{ref}},\pi_{\mathrm{ref}},q_{\mathrm{ref}}$ & Frozen initial online actor and corresponding discrete or raw Gaussian reference policy, fixed across all online updates.\\
$V_\psi,\psi$ & Online centralized state-value function and corresponding parameters, separate from offline action-value parameters $\omega$.\\
$\gamma,\lambda$ & Return discount and GAE trace coefficient.\\
$b_t,c_t$ & Bootstrap and trace continuation masks. True terminals stop bootstrap. Resets stop traces. Time limits use final states before reset for bootstrap.\\
$\delta_t$ & Temporal-difference residual using the critic at collection.\\
$\widehat A_t^{\mathrm{raw}},\overline A,\widetilde A_t$ & Raw GAE, rollout mean GAE, and normalized actor advantage shared across agents. Normalization uses the population standard deviation of rollout samples.\\
$\widehat R_t$ & Detached value target formed before advantage normalization.\\
$m_t^i,Z$ & Actor activity mask and count normalization $\max(1,\sum_{t,i}m_t^i)$. Continuous agents are active. Discrete agents with only one legal action are excluded.\\
$\rho_t^i,\epsilon$ & Ratio of sample likelihoods under the current and collection policies and PPO clipping parameter. $\widehat D_{\mathrm{old}}$ is the discrete sampled KL estimate for stopping.\\
$L_{\mathrm{clip}},\mathcal L_{\mathrm{actor}}$ & Maximized PPO surrogate and minimized full actor loss.\\
$\mathcal R_{\mathrm{ref}},\beta$ & Reference KL regularization and coefficient. Discrete KL is from reference to current. Continuous KL is from current to reference.\\
$\mathcal H,\eta$ & Entropy of the sampling policy and entropy coefficient. Gaussian entropy is differential entropy of the raw sample.\\
$\E^m_{t,i},\E,\KL,\clip$ & Average weighted by activity, expectation, KL divergence, and clipping operator.\\
\multicolumn{2}{l}{\textbf{Theoretical analysis and reported scores}}\\
$\chi,M_\theta,d_\theta$ & Markov information state, joint sampling policy, and normalized discounted occupancy in the analysis. The flow input is $x_\tau^i$.\\
$J,A^\theta,A^{\mathrm{old}}$ & Expected discounted team return, exact team advantage, and advantage of the collection policy. Estimated GAE uses a hat.\\
$f,f_\#q$ & Action map independent of the parameters and induced distribution after applying the action map. Continuous execution uses clipping.\\
$\mathsf P_1,\mathsf P_2,\mathsf P_j^a$ & Probability measures of raw actions and corresponding marginals after execution in the clipping proof. $\mathsf P_j(\cdot\mid a)$ denotes conditioning on the executed action.\\
$\Pi,\widetilde\Pi,\pi_i,p_i$ & Generic joint and local product distributions in the proof of KL factorization.\\
$S_{\mathrm{agent}},S_{\mathrm{joint}}$ & Centered surrogates using individual and joint likelihood ratios.\\
$M,M',P_M$ & Old/new joint sampling policies and the environment transition kernel induced by $M$. A prime denotes a second policy.\\
$A_*,D_{\mathrm{TV}},\varepsilon_{\mathrm{TV}}$ & Uniform absolute advantage bound, total variation distance, and maximum policy total variation distance. The teacher velocity field is $v_\phi$.\\
$\mathcal G(M',M),\zeta(\chi)$ & Surrogate gain for the joint policy and expected advantage at each state in the proof. Offline value guidance uses $G_j$. The student network is $g_\theta$.\\
$\delta_i,\delta_{\mathrm{joint}}$ & Uniform individual and joint KL bounds, distinct from the temporal-difference residual $\delta_t$. $\delta$ bounds KL averaged across agents at each history.\\
$C_{\mathrm{clip}},\Delta$ & Unnormalized ideal clipped surrogate and residual between the joint surrogate and the sum of individual surrogates.\\
$\mathcal C,c,S_c,B_c,\overline G_{\mathcal C}$ & Completed settings, a setting index, online score, strongest listed offline score, and average relative gain. $B_c$ is a score, while $B_{\mathrm{env}}$ is a transition budget.\\
$R,R_{\mathrm{random}},R_{\mathrm{expert}}$ & Mean episode return and MPE reference returns for normalization. The score quantities differ from value target $\widehat R_t$.\\
$R_{\mathrm{pre}},R_{\mathrm{online}}$ & Pretrained and online endpoint returns used in the heatmap of gains across data qualities.\\
$\Pi_F,\Pi_B,\widehat Q,\varepsilon_Q,m$ & Deployed flow and BC policies, estimated BC team action value, uniform estimation error bound, and estimated value margin.\\
$\mathcal R_{\mathrm{ret}}$ & Return functional for a single decision in the proof of the policy gradient for raw actions.\\
\end{longtable}
\normalsize

\subsection{Offline Objectives and Network Architectures}
\label{app:offline}
Teacher integration uses $K=10$ Euler steps indexed by $k$, with $\tau_k=k/K$:
\begin{equation}
 \widehat x^{i,(0)}=z_i,\qquad
 \widehat x^{i,(k+1)}=\widehat x^{i,(k)}+
 K^{-1}v_\phi(h_i,\widehat x^{i,(k)},\tau_k),\quad k=0,\ldots,K-1.
 \label{eq:euler}
\end{equation}
The teacher target $F_\phi(h_i,z_i)$ is the final $\widehat x^{i,(K)}$, clipped to $[-1,1]$ for continuous control. A student that predicts actions in one forward pass $g_\theta(h_i,z_i)$ learns from the teacher target and from offline value estimates.

\paragraph{Student distillation.}
The student learns to reproduce the output of teacher integration in a single forward pass. Teacher and student receive the same local information and latent, so distillation matches teacher and student outputs for the same input. Let $\omega$ denote offline Q parameters and $\xi$ any shared encoder parameters. The teacher and student heads have parameters $\phi$ and $\theta$. $h_i$ depends on $\xi$ when a shared encoder is used. Write $\sg[\cdot]$ for stop-gradient, $p_\theta^{\mathrm{off}}(\cdot\mid h_i,z_i)=\operatorname{softmax}(g_\theta(h_i,z_i))$, and $\bar Q_\omega$ for the Q value averaged over the ensemble. The distillation terms are
\begin{align}
 \mathcal L_{\mathrm{distill}}^{\mathrm{disc}}
 &=-\E_{\D,z,i}\log p_\theta^{\mathrm{off}}(y_i\mid h_i,z_i),
 &y_i&=\arg\max_b\sg[F_\phi(h_i,z_i)]_b,\label{eq:distill-disc}\\
 \mathcal L_{\mathrm{distill}}^{\mathrm{cont}}
 &=\E_{\D,z,i}\frac{\|g_\theta(h_i,z_i)-\sg[F_\phi(h_i,z_i)]\|_2^2}{d_a},
 &&\label{eq:distill-cont}
\end{align}
where $y_i$ is the action label selected by the teacher, $b$ indexes discrete actions, and $F_\phi$ includes clipping to $[-1,1]$ in continuous control. Distillation uses the same noise for the teacher target and the corresponding student prediction. \paragraph{Student learning with value guidance.}
Distillation encourages agreement with the teacher, whereas value guidance encourages actions with higher estimated return. Distillation and value guidance train the student to follow the teacher while favoring actions with higher offline Q estimates. For discrete actions, the student raises the probability of actions with high Q values by minimizing the negative expected Q value. The Q values are detached:
\begin{equation}
 \mathcal L_{\mathrm{guide}}^{\mathrm{disc}}
 =-\E_{\D,z}\sum_i\sum_b
 p_\theta^{\mathrm{off}}(b\mid h_i,z_i)\,\sg[\bar Q_\omega(h_i,e_b)],
 \label{eq:guide-disc}
\end{equation}
where $e_b$ is a one-hot action. For continuous actions, gradients through Q with respect to the proposed action move the student toward outputs with higher Q values. Define $\omega^\dagger=\sg[\omega]$ to freeze Q weights while retaining derivatives with respect to the action argument. For offline minibatch $\mathcal B$, let
\begin{align}
 G_j&=\frac1N\sum_i\bar Q_{\omega^\dagger}
       (h_{j,i},\clip(g_\theta(h_{j,i},z_{j,i}),-1,1)),\nonumber\\
 \mathcal L_{\mathrm{guide}}^{\mathrm{cont}}
 &=-\frac{|\mathcal B|^{-1}\sum_{j\in\mathcal B}G_j}
 {\sg[\max(|\mathcal B|^{-1}\sum_{j\in\mathcal B}|G_j|,10^{-6})]}.
 \label{eq:guide-cont}
\end{align}
In the continuous guidance loss, $j$ indexes minibatch samples, $|\mathcal B|$ is the batch size, and $G_j$ is the mean action value across agents for sample $j$. The normalization scales the continuous guidance term without changing the gradient direction within a minibatch. Q weights are fixed during guidance, but the derivative through the student's action is retained. Value guidance uses an offline action-value function. Online fine-tuning uses a separate centralized critic.

In the discrete implementation, one-hot dataset actions train the local velocity field in Eq.~\eqref{eq:fm}. A local long short-term memory (LSTM) encoder has hidden dimension 64, and the actor and value MLPs have four hidden layers of width 256. The teacher integrates for 10 Euler steps and supplies hard action labels to the student's cross-entropy distillation objective. Two Q estimates are averaged before summing action values across agents. Q targets use the teacher's next action. Value guidance computes the categorical expectation of Q values for each action with Q values detached. The default distillation coefficient is 3. The online LSTM parameters remain frozen, but recurrent state is updated and reset with the actual rollout histories.

The continuous implementation has four hidden layers of 512 units in the teacher, student, and each of two Q networks. Teacher and student receive the local observation and a latent with the same dimension as the action. The teacher also receives flow time. Agent identity is included in the observation representation. Teacher outputs are obtained by 10 Euler steps and clipped to the legal action interval before distillation using squared error. Q values are averaged across the ensemble and controlled agents. The actor's negative Q objective is normalized by a detached mean absolute Q value, with a numerical lower bound of $10^{-6}$. The distillation coefficient is 1. MuJoCo transitions with unavailable information about the next state are excluded from temporal-difference (TD) targets by a critic mask, while the observed actions can still support imitation.

\paragraph{Joint optimization and gradient routing.}
The student distillation and guidance objectives are given in Eqs.~\eqref{eq:distill-disc}--\eqref{eq:guide-cont}, where $\omega$ denotes offline Q parameters and $\xi$ shared encoder parameters. Local features $h_i$ can depend on $\xi$. Teacher and student heads use $\phi$ and $\theta$, respectively. The continuous guidance objective freezes Q weights while retaining action derivatives, as defined above.

Detaching $G_j$ in the numerator would incorrectly remove the student's gradient from value guidance. Both TD targets and teacher distillation targets are detached. In the discrete TD loss, action values averaged over the ensemble are summed across agents and fitted with one-half mean squared error. Continuous TD learning instead averages action values across agents, then averages the squared TD errors of the two critics. The continuous TD target averages the ensemble of target Q functions and uses a next action from the current student. The discrete target uses the current teacher. Target Q weights receive soft updates.

At every offline update, the implementation sums $\mathcal L_Q+\mathcal L_{\mathrm{FM}}+\alpha\mathcal L_{\mathrm{distill}}+\mathcal L_{\mathrm{guide}}$, calls backward once, and takes one optimizer step. The discrete guidance coefficient is 1. The teacher head receives flow matching gradients, the student head receives distillation and guidance gradients, and the Q heads receive TD gradients. In SMAC, the shared encoder can receive gradients from multiple terms through paths where gradients remain active. The loss terms therefore share encoder parameters. The four loss components are optimized in one offline pretraining stage. Figure~\ref{fig:motivation} schematically depicts the functional components.

Online fine-tuning initializes the actor from the final student, optionally freezes a copy as the reference when $\beta>0$, and introduces a new centralized critic. The teacher and offline Q functions no longer participate. Actor and value losses are optimized separately. Eq.~\eqref{eq:actor} includes PPO, reference KL, and entropy terms.

\paragraph{Conditions for Higher Return after Flow Pretraining.}
We give a sufficient condition under which flow pretraining produces a policy with higher return than BC pretraining. Let $\Pi_F$ and $\Pi_B$ denote the actual deployed joint policies from flow and BC pretraining, including action selection with a fixed latent. Let $\chi$ denote a Markov information state containing joint histories and any required environment state. Let $Q^{\Pi_B}(\chi,\mathbf a)$ be the true team action value of BC deployment, and suppose an estimate $\widehat Q$ satisfies $|\widehat Q-Q^{\Pi_B}|\leq\varepsilon_Q$ for actions of both policies at every information state visited by $\Pi_F$. The constant $\varepsilon_Q$ bounds estimation error. For a uniform margin in estimated values $m$, assume at every visited information state that
\[
 \mathbb E_{\mathbf a\sim\Pi_F}\widehat Q(\chi,\mathbf a)
 -\mathbb E_{\mathbf a\sim\Pi_B}\widehat Q(\chi,\mathbf a)\geq m.
\]
Under the assumptions for discounted returns in Appendix~\ref{app:team-gradient}, the performance-difference identity used in Appendix~\ref{app:performance-bound} gives
\[
 J(\Pi_F)-J(\Pi_B)\geq\frac{m-2\varepsilon_Q}{1-\gamma}.
\]
Indeed, the estimated and true expected values of each policy differ by at most $\varepsilon_Q$, so the true expected BC advantage under $\Pi_F$ is at least $m-2\varepsilon_Q$. Averaging over the normalized discounted occupancy of the flow policy proves the bound. Thus $m>2\varepsilon_Q$ is sufficient for a better initialization. The bound concerns the pretrained policies before online fine-tuning and assumes a uniform value margin and error bound, which are not measured in the experiments. The empirical comparisons support the complete pretraining procedure in the four tested settings.

\subsection{Online Fine-Tuning Details}
\label{app:online-details}
We initialize fresh actor and critic optimizers when transferring the pretrained student. The discrete temperature is $T=1$. Agents with only one legal action are excluded from actor loss normalization. All continuous agents are active. Continuous exploration is initialized with $\sigma_\theta=0.2$, and $\log\sigma_\theta$ is constrained to $[-4,0]$. The buffer stores raw Gaussian samples and corresponding log probabilities before clipping. For SMAC and SMACv2, a pretrained recurrent encoder maps each agent's observation history to the local features $h_t^i$. The encoder weights remain frozen during online fine-tuning, while the hidden state updates with observations and resets at episode boundaries. For MPE and MA-MuJoCo, $h_t^i$ is the current local observation.

A new centralized critic $V_\psi(s_t)$ predicts team return. We store critic parameters at rollout collection as $\psi_{\mathrm{old}}$ when constructing rollout targets. The critic uses the environment state in SMAC and MuJoCo and concatenated observations of controlled agents in MPE. GAE uses a common team reward and critic \citep{gae}:
\begin{equation}
 \delta_t=r_t+\gamma b_tV_{\psi_{\mathrm{old}}}(s_{t+1}^{+})-V_{\psi_{\mathrm{old}}}(s_t),
 \qquad
 \widehat A_t^{\mathrm{raw}}=\sum_{l\geq0}(\gamma\lambda)^l
 \left(\prod_{j=0}^{l-1}c_{t+j}\right)\delta_{t+l},
 \label{eq:gae}
\end{equation}
where $\delta_t$ is the temporal-difference residual, $\widehat A_t^{\mathrm{raw}}$ the unnormalized GAE estimate, $\lambda\in[0,1]$ the trace parameter, and $l$ the offset in future steps. The successor input $s_{t+1}^{+}$ is taken before any reset. The binary mask $b_t$ disables bootstrapping at true terminals and $c_t$ cuts traces at resets. The sum is truncated at the rollout boundary, and an empty product equals 1. At a time limit, $b_t=1$ and $c_t=0$. Before actor normalization, we form detached value targets $\widehat R_t=\sg[\widehat A_t^{\mathrm{raw}}+V_{\psi_{\mathrm{old}}}(s_t)]$. Only actor advantages are normalized over the rollout, $\widetilde A_t=(\widehat A_t^{\mathrm{raw}}-\overline A)/\max(\operatorname{std}(\widehat A^{\mathrm{raw}}),10^{-8})$, and shared across agents. The quantities $\overline A$ and $\operatorname{std}(\widehat A^{\mathrm{raw}})$ are the rollout mean and standard deviation, computed with the number of rollout samples as the variance divisor.

The coefficient $\beta\geq0$ controls optional reference KL regularization. $\beta=0$ removes reference KL while retaining PPO clipping and a stopping rule based on KL from the old policy to the current policy. When reference KL is enabled, the reference has fixed parameters $\theta_{\mathrm{ref}}$, copied once from the initial online actor, including the initial exploration scale. Write $\pi_{\mathrm{ref}}=\pi_{\theta_{\mathrm{ref}}}$ and $q_{\mathrm{ref}}=q_{\theta_{\mathrm{ref}}}$. The implementations use different KL directions:
\begin{equation}
 \mathcal R_{\mathrm{ref}}=
 \begin{cases}
  \E_{t,i}^{m}\!\left[\KL(\pi_{\mathrm{ref}}\|\pi_\theta)\right], & \text{SMAC/SMACv2},\\
  \E_{t,i}\!\left[\KL(q_\theta\|q_{\mathrm{ref}})\right], & \text{continuous control}.
 \end{cases}
 \label{eq:anchor}
\end{equation}
Both policies in each KL use the same stored local input $h_t^i$ and, for discrete actions, the same legal mask. We use $\E^m_{t,i}[f]=Z^{-1}\sum_{t,i}m_t^if_t^i$. $\E_{t,i}$ is the ordinary average for continuous agents. The entropy term is $\mathcal H=\E^m_{t,i}[\mathrm H(\mu_\theta(\cdot\mid h_t^i))]$, using categorical entropy or differential entropy of the raw Gaussian policy. Time sums span the current rollout or minibatch, across parallel environments. The flow teacher and offline Q functions remain fixed. The configuration in the main tables uses $\epsilon=0.05$, $\beta=0.01$, and $\eta=0.001$. The ablation without reference KL uses $\beta=0$. Reference KL regularization penalizes deviation from initialization and is optional when transferring the student to online fine-tuning. Section~\ref{sec:components} shows that the tested endpoint returns favor removing reference KL in three of four settings. For discrete actions, the minibatch estimate $\widehat D_{\mathrm{old}}=\E^m_{t,i}[(\rho_t^i-1)-\log\rho_t^i]$ is checked before each actor update. If the estimate exceeds 0.02, the remaining actor and critic updates for that rollout stop. For continuous actions, analytic Gaussian KL is checked on the minibatch after each actor update. A minibatch value above 0.02 triggers a full rollout check, which is also performed at each epoch end. A full rollout value above 0.02 stops further actor updates while critic updates continue. Both checks measure divergence from the old policy to the current policy and allow the threshold to be exceeded after an update.

\subsection{Training and Deployment Algorithms}
\label{app:algorithms}
Algorithms~\ref{alg:offline}--\ref{alg:inference} summarize offline pretraining, online fine-tuning, and decentralized execution. The student is transferred directly between stages. Loss definitions and implementation details for each task are given above. Online hyperparameters are in Table~\ref{tab:hyper}.

\begin{algorithm}[t]
\caption{MA-FPPO: offline pretraining}
\label{alg:offline}
\begin{algorithmic}[1]
\Require Dataset $\D$, update budget $U$, teacher integration steps $K=10$
\State Initialize teacher $\phi$, student $\theta$, encoder $\xi$, Q networks $\omega$, and target Q networks
\For{$n=1,\ldots,U$}
 \State Sample a minibatch from $\D$ and construct local inputs $h_i$ and action vectors $x_1^i$
 \Statex \hspace{\algorithmicindent}\textbf{Flow matching}
 \State Sample $z_i\sim\Normal(0,I)$, $\tau\sim\mathcal U[0,1]$ and set $x_\tau^i=(1-\tau)z_i+\tau x_1^i$
 \State Compute $\mathcal L_{\mathrm{FM}}$ using Eq.~\eqref{eq:fm}
 \Statex \hspace{\algorithmicindent}\textbf{Student learning with value guidance}
 \State Sample fresh $z_i\sim\Normal(0,I)$ and obtain detached $F_\phi(h_i,z_i)$ using Eq.~\eqref{eq:euler}
 \State Compute $\mathcal L_Q$, $\mathcal L_{\mathrm{distill}}$, and $\mathcal L_{\mathrm{guide}}$ with the objectives for each task
 \State Jointly update trainable parameters using $\mathcal L_{\mathrm{off}}$ defined in Eq.~\eqref{eq:offline}
 \State Update the target Q networks by a weighted average
\EndFor
\State \Return Pretrained student and local encoder
\end{algorithmic}
\end{algorithm}

\begin{algorithm}[t]
\caption{MA-FPPO: online fine-tuning}
\label{alg:ma-fppo}
\begin{algorithmic}[1]
\Require Pretrained student, transition budget $B_{\mathrm{env}}$, reference KL coefficient $\beta\geq0$
\State Construct actor $\theta$ at $z_i=0$, initialize exploration and centralized critic $\psi$
\State Freeze reference $\theta_{\mathrm{ref}}\gets\theta$ if $\beta>0$
\State Set transition counter $C=0$
\While{$C<B_{\mathrm{env}}$}
 \Statex \hspace{\algorithmicindent}\textbf{Collect trajectories}
 \State Store collection parameters $\theta_{\mathrm{old}},\psi_{\mathrm{old}}$ and clear the rollout buffer
 \For{each joint step in the rollout, within the remaining budget}
  \State Construct local inputs $h_t^i$ and centralized input $s_t$
  \State Sample $w_t^i\sim\mu_{\theta_{\mathrm{old}}}(\cdot\mid h_t^i)$ for each agent
  \State Execute $a_t^i=w_t^i$ for discrete actions or $a_t^i=\clip(w_t^i,-1,1)$ for continuous actions
  \State Store inputs, $w_t^i$, old policy statistics, reward, legal/activity and bootstrap/trace masks
  \State Advance/reset local states and increment $C$ by the number of environments advanced
 \EndFor
 \State Compute GAE advantages and value targets with $\psi_{\mathrm{old}}$ using Eq.~\eqref{eq:gae}
 \Statex \hspace{\algorithmicindent}\textbf{Optimize actor and critic}
 \For{each optimization epoch and minibatch}
  \State Update $\psi$ by regression to the value targets
  \If{critic warmup is complete and the KL stopping threshold has not been exceeded}
   \State Apply the KL checks in Appendix~\ref{app:online-details} and update $\theta$ using Eq.~\eqref{eq:actor} while permitted
   \State On exceeding the threshold, stop further actor updates for this rollout, and also the remaining critic updates for discrete actions
  \EndIf
 \EndFor
\EndWhile
\State \Return Online actor and local encoder
\end{algorithmic}
\end{algorithm}
The sampling policy $\mu_\theta=\pi_\theta$ is the legally masked categorical policy for discrete actions, and $\mu_\theta=q_\theta$ is the Gaussian policy for continuous actions. The buffer retains raw Gaussian samples before clipping and bootstrap values from final states before reset. Fresh optimizers, the frozen recurrent encoder used in SMAC and SMACv2, exploration initialization and bounds, critic warmup, checks of old log probabilities during replay, and the KL stopping rule within each rollout follow the implementation conventions above and Table~\ref{tab:hyper}. The optional reference KL regularization uses the direction defined for each action space in Eq.~\eqref{eq:anchor}.

\begin{algorithm}[t]
\caption{MA-FPPO: decentralized execution}
\label{alg:inference}
\begin{algorithmic}[1]
\Require Online student $g_\theta$ and local encoder
\State Initialize each agent's local state
\For{environment steps $t=0,1,\ldots$ until termination}
 \For{each controlled agent $i$}
  \State Observe $o_t^i$ and update local input $h_t^i$
  \State Evaluate $g_\theta(h_t^i,0)$ once
  \State Choose $a_t^i=\arg\max_{b\in\mathcal A_i(h_t^i)}[g_\theta(h_t^i,0)]_b$ for discrete actions,
  \Statex \hspace{2\algorithmicindent}or $a_t^i=\clip(g_\theta(h_t^i,0),-1,1)$ for continuous actions
 \EndFor
 \State Execute joint action $\mathbf a_t$ and receive the next observations
\EndFor
\end{algorithmic}
\end{algorithm}
Local states are reset at every episode boundary. Algorithm~\ref{alg:inference} is the deterministic deployment protocol used for reported results. Stochastic evaluation uses the sampling rule in Algorithm~\ref{alg:ma-fppo} with parameters fixed.

\subsection{Analysis of Policy Gradients and Policy Changes}
\label{app:proofs}
\subsubsection{Optimizing Raw Actions under Clipping}
\begin{proposition}[Optimizing raw actions under clipping]
Let $a=f(u)$ be a deterministic measurable action map independent of the parameters. Assume $q_\theta(u\mid h)$ is positive and differentiable on a support independent of $\theta$, and differentiation can pass through the return expectation. Then the gradient of expected return can be expressed using $\nabla_\theta\log q_\theta(u\mid h)$. Write $f_\#q$ for the action distribution obtained by applying $f$ to samples from $q$, and $\KL$ for Kullback--Leibler divergence. For two raw policies $q,p$, $\KL(f_\#q\|f_\#p)\leq\KL(q\|p)$ whenever the KL between raw policies is finite.
\label{prop:clip}
\end{proposition}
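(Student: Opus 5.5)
The proposition has two parts, and I would prove them separately.

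\textbf{Gradient part.} The plan is to reduce the return to an expectation over raw actions, so that the policy-gradient identity applies. Fix a history $h$. Consider the single-decision return functional $\mathcal R_{\mathrm{ret}}(\theta)=\E_{u\sim q_\theta(\cdot\mid h)}[R(f(u))]$, where $R(a)$ is the expected subsequent return after executing $a$. Since $f$ is deterministic and does not depend on $\theta$, the map $u\mapsto R(f(u))$ is a fixed measurable function of the raw action. The $\theta$-dependence therefore enters only through the density $q_\theta$. Because the support is independent of $\theta$, the density is positive there, and differentiation may pass under the integral, we get
\[
\nabla_\theta\mathcal R_{\mathrm{ret}}=\int R(f(u))\nabla_\theta q_\theta(u\mid h)\,du=\E_{u\sim q_\theta}\!\left[R(f(u))\nabla_\theta\log q_\theta(u\mid h)\right].
\]
For the full discounted objective, I would write $J$ as an integral over trajectories of raw samples $(u_0,u_1,\ldots)$. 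The transition kernel seen by the raw actions is the environment kernel composed with $f$, and it does not depend on $\theta$. The same log-derivative argument then yields $\nabla_\theta J=\E[\sum_t\gamma^t A(\chi_t,f(u_t))\nabla_\theta\log q_\theta(u_t\mid h_t)]$, and the induced density of the executed action $a$ never needs to be computed. This matters because under clipping the executed-action law has atoms at $\pm1$ and no Lebesgue density.

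\textbf{KL part.} This is the data-processing inequality for a deterministic channel. I would prove it via the chain rule on the joint law of $(u,a)=(u,f(u))$. Let $\mathsf P_1$ and $\mathsf P_2$ be the raw measures with densities $q$ and $p$, and let $\mathsf P_1^a=f_\#q$ and $\mathsf P_2^a=f_\#p$ be their pushforwards. Since $a$ is a function of $u$, the joint law of $(u,a)$ is determined by $u$, so
\[
\KL(q\|p)=\KL\bigl(\mathsf P_1^{(u,a)}\|\mathsf P_2^{(u,a)}\bigr)=\KL(f_\#q\|f_\#p)+\E_{a\sim f_\#q}\KL\bigl(\mathsf P_1(\cdot\mid a)\|\mathsf P_2(\cdot\mid a)\bigr).
\]
The conditional term is nonnegative, which gives the inequality.

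To avoid measure-theoretic subtleties with regular conditional distributions, I would instead use the Donsker--Varadhan variational formula $\KL(\mu\|\nu)=\sup_g\{\E_\mu g-\log\E_\nu e^{g}\}$, taking the supremum over bounded measurable $g$. Every test function $g(a)$ on the action space pulls back to the bounded measurable function $g\circ f$ on raw actions, so the supremum defining $\KL(f_\#q\|f_\#p)$ ranges over a subset of the functions admissible for $\KL(q\|p)$. Finiteness of the raw KL ensures absolute continuity, and hence $f_\#q\ll f_\#p$.

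\textbf{Main obstacle.} I expect the hardest step to be rigor rather than ideas: justifying the interchange of gradient and expectation over the infinite-horizon trajectory. The statement takes this as an assumption, so I would cite it explicitly, noting that a bounded reward with $\gamma<1$ makes the series converge. I would also handle the atoms of the clipped law cleanly, which the variational route does without requiring densities.
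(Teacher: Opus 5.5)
Your proposal is correct. The gradient half follows the paper's argument: a score-function identity for one decision, then the same identity over trajectories of raw samples, using that the environment sees only $f(u)$, so the induced kernel does not depend on $\theta$. In your one-step display, $R$ must be a fixed functional independent of $\theta$, as the paper stipulates. If $R(a)$ is the expected continuation return under the same policy, differentiating it adds an extra term, so it is your trajectory-level argument that actually proves the claim.

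For the KL half, your first display is exactly the paper's proof: the chain rule on the joint law of $(u,f(u))$, dropping the nonnegative conditional term. Your preferred Donsker--Varadhan route is genuinely different. Because $\E_{f_\#q}[g]=\E_q[g\circ f]$ and $\E_{f_\#p}[e^{g}]=\E_p[e^{g\circ f}]$, every value in the supremum defining $\KL(f_\#q\|f_\#p)$ already appears in the supremum defining $\KL(q\|p)$. That route needs no disintegration and works on any measurable space. Your absolute-continuity remark is unnecessary, since the supremum formula holds without it and the inequality is trivial when the raw KL is infinite.

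The chain-rule route, in exchange, identifies the loss exactly as $\E_{a\sim f_\#q}\KL(\mathsf P_1(\cdot\mid a)\|\mathsf P_2(\cdot\mid a))$. For coordinatewise clipping, an executed action with every coordinate in $(-1,1)$ gives the same point-mass conditional under both laws. So the gap comes only from actions on the boundary, through how the two raw laws distribute mass beyond the clipping bounds. Raw actions live in $\mathbb R^{d_a}$, where regular conditional distributions exist, so the paper's route is already rigorous there and the subtlety you set out to avoid is mild.
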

\paragraph{Proof.}
Consider first a single decision with a fixed return functional $\mathcal R_{\mathrm{ret}}$ independent of $\theta$. Since the executed action is $f(u)$,
\begin{align*}
 \nabla_\theta\E_{u\sim q_\theta}[\mathcal R_{\mathrm{ret}}(f(u))]
 &=\int \mathcal R_{\mathrm{ret}}(f(u))\nabla_\theta q_\theta(u)\,du\\
 &=\E_{u\sim q_\theta}\left[\mathcal R_{\mathrm{ret}}(f(u))\nabla_\theta\log q_\theta(u)\right],
\end{align*}
under the usual conditions for differentiating under the integral. Applying the identity to finite trajectories and then taking the discounted limit gives the policy-gradient score at each raw sample under the stated regularity. The environment transition distribution depends on $f(u)$ and not directly on $\theta$. A state-dependent baseline can be subtracted without changing the exact score-function expectation. Approximate critics, GAE, minibatches, and PPO clipping introduce estimation error and change the optimization updates.

The KL inequality follows from the chain rule for relative entropy. Let $\mathsf P_1$ and $\mathsf P_2$ be the raw distributions and $a=f(u)$. Write $\mathsf P_1^a$ and $\mathsf P_2^a$ for the induced marginals of executed actions, and $\mathsf P_j(\cdot\mid a)$ for the conditional measure over raw actions for $j=1,2$. Then
\[
 \KL(\mathsf P_1\|\mathsf P_2)=\KL(\mathsf P_1^a\|\mathsf P_2^a)
 +\E_{a\sim \mathsf P_1^a}\left[\KL(\mathsf P_1(\cdot\mid a)\|\mathsf P_2(\cdot\mid a))\right]
 \geq\KL(\mathsf P_1^a\|\mathsf P_2^a).
\]
The induced action distribution accounts for non-invertible clipping and point masses at the action bounds. Evaluating a Gaussian density at the clipped sample would, in general, use a different random variable and would not reproduce the likelihood of the collected raw action. The proposition concerns likelihoods of raw actions and the induced distribution after clipping.

\subsubsection{Factorized Joint Policy Change}
\begin{proposition}[Factorized joint change]
For fixed local histories and product policies $\Pi=\prod_i\pi_i$ and $\widetilde\Pi=\prod_ip_i$ with each $\pi_i$ absolutely continuous with respect to $p_i$ and finite KL divergences,
$\KL(\Pi\|\widetilde\Pi)=\sum_i\KL(\pi_i\|p_i)$.
\label{prop:joint}
\end{proposition}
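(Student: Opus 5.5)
The plan is to reduce the statement to the fact that the log of a product of densities is a sum of logs. To do this without assuming every local action space carries Lebesgue measure (discrete SMAC actions, raw Gaussian continuous actions), I will work with Radon--Nikodym derivatives. Since each $\pi_i\ll p_i$, let $\rho_i=d\pi_i/dp_i$. First I will show that $\Pi\ll\widetilde\Pi$ with
\[
 \frac{d\Pi}{d\widetilde\Pi}(\mathbf a)=\prod_{i=1}^N\rho_i(a^i).
\]
I will check this on measurable rectangles $A_1\times\cdots\times A_N$, where Fubini--Tonelli gives $\int_{\prod A_i}\prod_i\rho_i\,d\widetilde\Pi=\prod_i\int_{A_i}\rho_i\,dp_i=\prod_i\pi_i(A_i)=\Pi(\prod A_i)$. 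Rectangles form a $\pi$-system generating the product $\sigma$-algebra, and both sides are finite measures, so the identity extends to all measurable sets. Conditioning on the fixed local histories $\mathbf h$ only fixes each factor, so the argument applies pointwise in $\mathbf h$.

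Next I will write $\KL(\Pi\|\widetilde\Pi)=\E_{\Pi}[\log\prod_i\rho_i(a^i)]=\E_\Pi[\sum_i\log\rho_i(a^i)]$, noting that $\rho_i(a^i)>0$ holds $\Pi$-a.s. Under $\Pi$ the marginal of $a^i$ is $\pi_i$, so the $i$-th term has expectation $\E_{\pi_i}[\log\rho_i]=\KL(\pi_i\|p_i)$. Linearity then yields the claimed sum.

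The main obstacle is justifying linearity of expectation when the summands $\log\rho_i$ can be negative and are not obviously integrable; finiteness of KL alone controls $\E[\log\rho_i]$, not the negative part separately. I will handle this with the standard bound $\E_{\pi_i}[(\log\rho_i)^-]=\E_{p_i}[\rho_i(\log\rho_i)^-]\le e^{-1}$, using $x\log x\ge -e^{-1}$. Combined with finiteness of $\KL(\pi_i\|p_i)$, this makes each $\log\rho_i(a^i)$ integrable under $\Pi$, so the sum is integrable and its expectation splits termwise. As a by-product, this also shows that $\KL(\Pi\|\widetilde\Pi)$ is finite.
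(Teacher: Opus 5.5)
Your proposal is correct and follows the same route as the paper: write the joint likelihood ratio as the product of per-agent ratios, take the logarithm to get a sum, and split the expectation using linearity and the fact that the $i$-th marginal of $\Pi$ is $\pi_i$. The paper does this in two lines with densities against a common dominating measure, so your $\pi$-system verification that $d\Pi/d\widetilde\Pi=\prod_i\rho_i$ and your $e^{-1}$ bound on $\E_{\pi_i}[(\log\rho_i)^-]$ simply make explicit the absolute-continuity and integrability steps that the paper leaves implicit.
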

\paragraph{Proof.}
At fixed histories, write probabilities or densities with respect to common dominating measures. The product factorization gives
\begin{align*}
 \KL\!\left(\prod_i\pi_i\middle\|\prod_ip_i\right)
 &=\E_{\mathbf a\sim\prod_i\pi_i}\left[\sum_i\log\frac{\pi_i(a_i)}{p_i(a_i)}\right]\\
 &=\sum_i\E_{a_i\sim\pi_i}\left[\log\frac{\pi_i(a_i)}{p_i(a_i)}\right].
\end{align*}
The identity also holds with shared network parameters, because the factorization concerns conditional distributions. Agents' trajectories remain coupled through the environment. In discrete tasks the legal masks must be identical for collection and replay at the same history. Agents with a single legal action contribute zero KL. If the KL averaged over active agents is bounded by $\delta$, the corresponding sum is bounded by the number of active agents times $\delta$, with one contribution per active agent. The conversion applies at each fixed history. Empirical KL checks on collected samples do not enforce a uniform bound over histories.

\subsubsection{Team Policy Gradient}
\label{app:team-gradient}
The following analysis concerns the online \emph{sampling} policy. Collection parameters $\theta_{\mathrm{old}}$ change between rollouts, while $\theta_{\mathrm{ref}}$ remains fixed throughout online fine-tuning. Let $\chi$ be a Markov information state containing the joint histories and, when needed, the environment state and recurrent states. Using full histories avoids assuming that the implemented critic input alone is a sufficient statistic. Each $h_i$ is a projection of $\chi$ that is independent of the policy parameters during online fine-tuning. Write $\mu_{\theta,i}(w_i\mid h_i)$ for the categorical or raw Gaussian policy from Eq.~\eqref{eq:ratio}, and $M_\theta(\mathbf w\mid \chi)=\prod_i\mu_{\theta,i}(w_i\mid h_i)$. The executed action is a fixed function of $\mathbf w$. Define the normalized discounted occupancy
\begin{equation}
 d_\theta(\chi)=(1-\gamma)\sum_{t\geq0}\gamma^t\Pr_\theta(\chi_t=\chi),
 \label{eq:occupancy}
\end{equation}
where $\Pr_\theta$ is the state probability under $M_\theta$, with the analogous measure definition for continuous $\chi$. Assume $0<\gamma<1$, bounded rewards, a common initial distribution and transition mechanism, both independent of the policy, fixed legal supports at each history, and the regularity needed to interchange differentiation and expectation. Episodes are extended by absorbing terminal states with zero future reward. Let $A^\theta(\chi,\mathbf w)$ be the true team advantage, equal to the exact action value minus the exact state value.

\begin{proposition}[Team policy gradient with shared parameters]
Under the stated sampling and regularity assumptions,
\begin{equation}
 \nabla_\theta J(\theta)=\frac1{1-\gamma}
 \E_{\chi\sim d_\theta,\,\mathbf w\sim M_\theta}
 \left[A^\theta(\chi,\mathbf w)\sum_i\nabla_\theta\log\mu_{\theta,i}(w_i\mid h_i)\right].
 \label{eq:team-gradient}
\end{equation}
\label{prop:team-gradient}
\end{proposition}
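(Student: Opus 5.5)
We prove Proposition~\ref{prop:team-gradient} by reducing it to the standard single-agent policy gradient theorem on the Markov information state $\chi$, and then factorizing the joint score. Treat the multi-agent system as a single-agent MDP with state $\chi$, action $\mathbf w$ (the stored raw samples), and policy $M_\theta(\mathbf w\mid\chi)=\prod_i\mu_{\theta,i}(w_i\mid h_i)$. The executed joint action is a fixed, parameter-independent map of $\mathbf w$, so it can be absorbed into the transition kernel and reward. This is exactly the reduction used for a single decision in Proposition~\ref{prop:clip}. Under this reduction, the initial distribution and the effective transition kernel $P(\chi'\mid\chi,\mathbf w)$ do not depend on $\theta$. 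Rewards are bounded, $\gamma<1$, and terminal states are absorbing with zero reward.

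\textbf{Step 1: exact policy gradient.} Define $V^\theta(\chi)=\E_{\mathbf w\sim M_\theta}[Q^\theta(\chi,\mathbf w)]$ and differentiate this identity. The product rule gives
\[
\nabla V^\theta(\chi)=\E_{\mathbf w}\bigl[Q^\theta\nabla\log M_\theta\bigr]+\gamma\,\E_{\mathbf w,\chi'}\bigl[\nabla V^\theta(\chi')\bigr],
\]
since the reward and the kernel are $\theta$-free. Unrolling this recursion produces the sum $\sum_t\gamma^t\Pr_\theta(\chi_t=\chi)$. Bounded rewards and $\gamma<1$ make the Neumann series converge, and the assumed regularity justifies interchanging differentiation with the infinite sum. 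Normalizing by the occupancy $d_\theta$ of Eq.~\eqref{eq:occupancy} gives
\[
\nabla_\theta J=\frac{1}{1-\gamma}\E_{d_\theta,M_\theta}\bigl[Q^\theta\nabla\log M_\theta\bigr].
\]
For continuous $\chi$ the same argument runs with measures in place of probabilities.

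\textbf{Step 2: replace $Q^\theta$ by $A^\theta$.} Subtract the baseline $V^\theta(\chi)$, which does not depend on $\mathbf w$. Its contribution vanishes because
\[
\E_{\mathbf w\sim M_\theta}\bigl[\nabla\log M_\theta\bigr]=\nabla\!\int M_\theta=\nabla 1=0.
\]
This requires supports independent of $\theta$, which holds because legal sets are fixed at each history and Gaussian densities are positive everywhere.

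\textbf{Step 3: factorize the score.} Take logs of the product policy:
\[
\log M_\theta(\mathbf w\mid\chi)=\sum_i\log\mu_{\theta,i}(w_i\mid h_i).
\]
Each $h_i$ is a $\theta$-independent projection of $\chi$, so differentiation passes through each term. Shared parameters simply mean each summand contributes its own gradient to the same $\theta$. For discrete actions, the masked softmax is differentiable in $\theta$ on the fixed legal set. Substituting the factorized score into Step 2 yields Eq.~\eqref{eq:team-gradient}.

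\textbf{Main obstacle.} The delicate part is Step 1's interchange of the gradient with the infinite discounted sum and the occupancy expectation. This is especially so for continuous $\chi$, which includes recurrent features, and for raw Gaussian actions. My first approach would be dominated convergence. The rewards are bounded, so $|Q^\theta|\le R_{\max}/(1-\gamma)$, and the regularity assumption should supply an integrable bound on $\nabla\log M_\theta$. I would then write the finite-horizon gradient via the likelihood-ratio trick on trajectory densities and pass to the limit $T\to\infty$, using the $\gamma^t$ decay. The remaining points to check are minor. Absorbing terminals contribute zero future return, and masked agents with a single legal action contribute zero score, so they drop out consistently.
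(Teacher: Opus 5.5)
Your proof is correct, but its core step takes a different route from the paper's.

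\textbf{The paper's route.} The paper works with trajectory likelihoods. On a finite horizon the trajectory log-likelihood is $\sum_t\sum_i\log\mu_{\theta,i}(w_t^i\mid h_t^i)$, so differentiating the expected discounted return gives the trajectory score times the return, and the per-agent factorization is built in from the start. Rewards preceding each score are then discarded by conditional expectation. The remaining reward-to-go is replaced by $Q^\theta$ via the tower property, and the action-independent value is subtracted to give $A^\theta$. Only after that is the discounted infinite-horizon limit taken and the sum rewritten through $d_\theta$.

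\textbf{Your route.} You lift the system to a single-agent MDP on $\chi$ and differentiate the Bellman identity $V^\theta=\E_{M_\theta}Q^\theta$. Unrolling the resulting recursion is the classical recursive proof of the policy gradient theorem, and you factorize the score only at the end.

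\textbf{What each buys.} Your route makes the occupancy measure appear directly and needs no causality argument about past rewards. However, it presupposes that $V^\theta(\chi)$ is differentiable at every information state. It also needs the unrolling remainder $\gamma^T\E[\nabla_\theta V^\theta(\chi_T)]$ to vanish, for example through a uniform bound on $\nabla_\theta V^\theta$. The Neumann-series convergence you invoke controls the partial sums, not this remainder. For continuous $\chi$ with recurrent features and raw Gaussian actions, this is the least transparent part, as you note yourself. The paper's route only differentiates finite-horizon expectations of bounded functionals and controls the limit through the $\gamma^t$ decay, so its regularity requirement is more elementary.

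The fallback in your last paragraph, which applies the likelihood-ratio trick on finite-horizon trajectory densities and then lets $T\to\infty$, is essentially the paper's proof. If you adopt it to justify Step~1, the recursion becomes a detour rather than an independent argument.
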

\begin{proof}
For a finite trajectory, the policy contributes $\sum_t\sum_i\log\mu_{\theta,i}(w_t^i\mid h_t^i)$ to the log-likelihood. Differentiating expected discounted return yields the trajectory score multiplied by return. Rewards preceding a score have zero contribution by conditional expectation. Replacing each remaining future return by the conditional expected return gives the action value. Subtracting the value baseline, which is independent of the action, gives $A^\theta$. Taking the discounted infinite-horizon limit and expressing the sum using $d_\theta$ gives Eq.~\eqref{eq:team-gradient}. With shared parameters, the product rule sums all agents' scores.
\end{proof}
A common team advantage is thus compatible with decentralized sampling. The exact identity uses sampling from the discounted occupancy and true advantages. Uniform rollout averages change the state weighting, and finite-$\lambda$ GAE with a learned critic introduces advantage estimation error.

\subsubsection{Why Ratios for Individual Agents Give the Correct Local Direction}
\label{app:first-order}
Fix collection parameters $\theta_{\mathrm{old}}$ and abbreviate $A^{\mathrm{old}}=A^{\theta_{\mathrm{old}}}$. All expectations in the surrogate analysis use $d_{\theta_{\mathrm{old}}}$ and $M_{\theta_{\mathrm{old}}}$. The collection distributions and advantage are held fixed when differentiating. Define $\rho_i=\mu_{\theta,i}/\mu_{\theta_{\mathrm{old}},i}$ and centered surrogates
\begin{align}
 S_{\mathrm{agent}}(\theta)&=\E\left[A^{\mathrm{old}}\sum_i(\rho_i-1)\right],\nonumber\\
 S_{\mathrm{joint}}(\theta)&=\E\left[A^{\mathrm{old}}\left(\prod_i\rho_i-1\right)\right].
 \label{eq:surrogates}
\end{align}
Centering removes constants without changing gradients.
\begin{proposition}[First-order consistency at the collection policy]
At $\theta=\theta_{\mathrm{old}}$, both surrogates are zero and
\begin{equation}
 \left.\nabla S_{\mathrm{agent}}\right|_{\theta_{\mathrm{old}}}
 =\left.\nabla S_{\mathrm{joint}}\right|_{\theta_{\mathrm{old}}}
 =(1-\gamma)\nabla J(\theta_{\mathrm{old}}).
 \label{eq:first-order}
\end{equation}
For any $\epsilon>0$, replacing each $\rho_i A^{\mathrm{old}}$ by the corresponding term with PPO clipping preserves the gradient at $\theta_{\mathrm{old}}$, provided differentiation under the expectation is justified.
\label{prop:first-order}
\end{proposition}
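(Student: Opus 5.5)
We evaluate both surrogates and their gradients directly at $\theta=\theta_{\mathrm{old}}$, then identify the common gradient with the team policy gradient of Proposition~\ref{prop:team-gradient}. At $\theta_{\mathrm{old}}$ every ratio satisfies $\rho_i=1$, so $\sum_i(\rho_i-1)=0$ and $\prod_i\rho_i-1=0$ pointwise. Both surrogates therefore vanish.

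For the gradients, we differentiate under the expectation. This is legitimate because the sampling distributions $d_{\theta_{\mathrm{old}}}$, $M_{\theta_{\mathrm{old}}}$ and the advantage $A^{\mathrm{old}}$ are held fixed, and we assume the stated regularity. The identity $\nabla_\theta\rho_i|_{\theta_{\mathrm{old}}}=\nabla_\theta\log\mu_{\theta,i}(w_i\mid h_i)|_{\theta_{\mathrm{old}}}$ then gives
\[
\left.\nabla S_{\mathrm{agent}}\right|_{\theta_{\mathrm{old}}}=\E\!\left[A^{\mathrm{old}}\sum_i\nabla_\theta\log\mu_{\theta_{\mathrm{old}},i}(w_i\mid h_i)\right].
\]
For the joint surrogate, the product rule gives $\nabla\prod_i\rho_i=\sum_i\bigl(\prod_{k\neq i}\rho_k\bigr)\nabla\rho_i$. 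Every cofactor equals $1$ at $\theta_{\mathrm{old}}$, so the same expression results; equivalently, $\nabla\prod_i\rho_i=\prod_i\rho_i\,\nabla\log\prod_i\rho_i$. Here shared parameters cause no difficulty: each $\nabla\rho_i$ is taken with respect to the full $\theta$, and the sum over agents collects all contributions, exactly as in the product-rule remark in the proof of Proposition~\ref{prop:team-gradient}. Comparing with Eq.~\eqref{eq:team-gradient} at $\theta=\theta_{\mathrm{old}}$, where $d_\theta=d_{\theta_{\mathrm{old}}}$ and $A^\theta=A^{\mathrm{old}}$, this common expression equals $(1-\gamma)\nabla J(\theta_{\mathrm{old}})$, which proves Eq.~\eqref{eq:first-order}.

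For the clipping claim, consider the per-sample term $\min(\rho_iA^{\mathrm{old}},\clip(\rho_i,1-\epsilon,1+\epsilon)A^{\mathrm{old}})$. Since $\epsilon>0$, $\rho_i=1$ lies in the open interval $(1-\epsilon,1+\epsilon)$ at $\theta_{\mathrm{old}}$. By continuity of $\rho_i$ in $\theta$, the clip acts as the identity on a neighbourhood of $\theta_{\mathrm{old}}$, and the minimum of two equal terms is that common term. The clipped and unclipped terms therefore coincide on a neighbourhood, so their gradients agree at $\theta_{\mathrm{old}}$.

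The main obstacle is this last step. The neighbourhood on which clipping is inactive depends on the sample $(\chi,\mathbf w)$, so pointwise agreement of gradients must be transferred to agreement of gradients of the expectations. We handle this by invoking the assumed justification of differentiation under the expectation: the gradient of each expectation equals the expectation of the pointwise gradients at $\theta_{\mathrm{old}}$, and these pointwise gradients agree almost surely. A dominating bound on $|A^{\mathrm{old}}|\,\|\nabla\rho_i\|$ near $\theta_{\mathrm{old}}$, available from bounded rewards, suffices for this.
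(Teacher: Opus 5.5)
Your proof is correct and takes essentially the same route as the paper. All ratios equal one at $\theta_{\mathrm{old}}$, the product rule reduces the joint gradient to $\sum_i\nabla\log\mu_{\theta_{\mathrm{old}},i}$, comparison with Proposition~\ref{prop:team-gradient} gives the factor $1-\gamma$, and because $1$ lies strictly inside $[1-\epsilon,1+\epsilon]$ the clipped and unclipped per-sample terms coincide near $\theta_{\mathrm{old}}$. Your care in passing from pointwise to expected gradients goes beyond the paper, but bounded rewards only control $|A^{\mathrm{old}}|$, not $\|\nabla\rho_i\|$ (the Gaussian score is unbounded in the raw sample), so the needed domination is part of the assumed regularity rather than a consequence of bounded rewards.
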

\begin{proof}
Every ratio equals 1 at $\theta_{\mathrm{old}}$, and
\[
 \left.\nabla\prod_i\rho_i\right|_{\theta_{\mathrm{old}}}
 =\left.\sum_i\nabla\rho_i\right|_{\theta_{\mathrm{old}}}
 =\sum_i\nabla\log\mu_{\theta_{\mathrm{old}},i}(w_i\mid h_i).
\]
Substitution into Eq.~\eqref{eq:surrogates} and Proposition~\ref{prop:team-gradient} gives the result. The clipping interval contains 1 strictly inside the interval, so the clipped and unclipped sample objectives have the same derivative there \citep{ppo}.
\end{proof}
For a fixed collected minibatch, the same derivative identity holds with any fixed advantage estimates, including normalized GAE. Dividing by the actor's fixed positive $Z$ only rescales the minibatch gradient. Agents with only one legal action have zero score, so masking agents with one legal action leaves the gradient unchanged. An entirely inactive minibatch contributes zero policy gradient. Connecting the finite-sample identity to the exact gradient of $J$ additionally requires the sampling and advantage assumptions above. Unbiasedness additionally requires accounting for empirical centering, random minibatch normalization, and error in advantage estimation.

The agreement is local to $\theta_{\mathrm{old}}$: products and sums of ratios differ during subsequent optimization epochs. Reference KL and entropy gradients also change the update direction. Proposition~\ref{prop:first-order} therefore establishes local gradient agreement for the PPO component. The full update also depends on regularization and finite optimization steps.

\subsubsection{A Conditional Bound on Performance Improvement}
\label{app:performance-bound}
We give an argument about policy changes and discounted returns of the type underlying trust-region methods \citep{trpo}, specialized to the team's sampling policy. The bound holds under the stated sampling, advantage, and uniform KL assumptions. Consider old and new product sampling policies $M$ and $M'$ satisfying the preceding assumptions. Set
\begin{equation}
 A_* = \sup_{\chi,\mathbf w}|A^M(\chi,\mathbf w)|,
 \qquad \varepsilon_{\mathrm{TV}}=\sup_\chi D_{\mathrm{TV}}(M'(\cdot\mid \chi),M(\cdot\mid \chi)),
 \label{eq:drift-assumptions}
\end{equation}
where $A^M$ is the true advantage under $M$, $A_*$ bounds the advantage magnitude, and $D_{\mathrm{TV}}$ is total variation, one-half the $L_1$ distance. The quantity $\varepsilon_{\mathrm{TV}}$ is the largest policy change over information states. Bounded rewards ensure $A_*<\infty$. Define the surrogate gain for the joint policy $\mathcal G(M',M)=\E_{\chi\sim d_M,\,\mathbf w\sim M'}A^M(\chi,\mathbf w)$.

\begin{proposition}[Conditional team improvement]
The expected discounted returns obey
\begin{equation}
 J(M')-J(M)\geq\frac{\mathcal G(M',M)}{1-\gamma}
 -\frac{4\gamma A_*}{(1-\gamma)^2}\varepsilon_{\mathrm{TV}}^2.
 \label{eq:performance-tv}
\end{equation}
If $\sup_\chi\KL(M(\cdot\mid \chi)\|M'(\cdot\mid \chi))\leq\delta_{\mathrm{joint}}$, then
\begin{equation}
 J(M')-J(M)\geq\frac{\mathcal G(M',M)}{1-\gamma}
 -\frac{2\gamma A_*}{(1-\gamma)^2}\delta_{\mathrm{joint}}.
 \label{eq:performance-kl}
\end{equation}
\label{prop:performance}
\end{proposition}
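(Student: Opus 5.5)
The plan is to follow the standard trust-region argument of \citet{trpo}, adapted to the joint sampling policy on the Markov information state $\chi$. First I will invoke the performance-difference identity under the assumptions of Appendix~\ref{app:team-gradient}:
\[
J(M')-J(M)=\frac{1}{1-\gamma}\E_{\chi\sim d_{M'},\,\mathbf w\sim M'}A^M(\chi,\mathbf w).
\]
It follows by telescoping $\sum_t\gamma^t\bigl(r_t+\gamma V^M(\chi_{t+1})-V^M(\chi_t)\bigr)$ along trajectories of $M'$, using the common initial distribution and transition kernel and the absorbing terminal states.

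Writing $\zeta(\chi)=\E_{\mathbf w\sim M'}A^M(\chi,\mathbf w)$, this gives
\[
J(M')-J(M)-\frac{\mathcal G(M',M)}{1-\gamma}=\frac{1}{1-\gamma}\bigl(\E_{d_{M'}}\zeta-\E_{d_M}\zeta\bigr).
\]
So the whole task reduces to bounding the effect of the occupancy shift on $\zeta$.

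Two estimates bound this shift.

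\textbf{Bound on $\zeta$.} Since $\E_{\mathbf w\sim M}A^M(\chi,\mathbf w)=0$, we have $\zeta(\chi)=\E_{M'}A^M-\E_MA^M$. Hence $|\zeta(\chi)|\le 2A_*\varepsilon_{\mathrm{TV}}$.

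\textbf{Bound on the occupancy shift.} I will use the coupling argument: draw joint actions of $M$ and $M'$ from a maximal coupling at each step, so they disagree with probability at most $\varepsilon_{\mathrm{TV}}$. Then the state marginals at time $t$ differ in total variation by at most $1-(1-\varepsilon_{\mathrm{TV}})^t\le t\varepsilon_{\mathrm{TV}}$. Equivalently, $\|d_{M'}-d_M\|_1\le 2\gamma\varepsilon_{\mathrm{TV}}/(1-\gamma)$, which follows from the resolvent identity $d_M=(1-\gamma)(I-\gamma P_M)^{-1}\nu$ together with $\|P_{M'}-P_M\|\le 2\varepsilon_{\mathrm{TV}}$.

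Combining the two estimates via $|\E_{d_{M'}}\zeta-\E_{d_M}\zeta|\le \|d_{M'}-d_M\|_1\sup|\zeta|$ gives the penalty
\[
\frac{1}{1-\gamma}\cdot 2A_*\varepsilon_{\mathrm{TV}}\cdot\frac{2\gamma\varepsilon_{\mathrm{TV}}}{1-\gamma}=\frac{4\gamma A_*\varepsilon_{\mathrm{TV}}^2}{(1-\gamma)^2}.
\]
This is exactly Eq.~\eqref{eq:performance-tv}.

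The main obstacle is the constant bookkeeping in this occupancy-shift step. One must be careful that the TV convention is one-half the $L_1$ distance, and one must get the $\gamma$ factor right. I would compute
\[
\sum_t(1-\gamma)\gamma^t\cdot 2t\varepsilon_{\mathrm{TV}}=\frac{2\gamma\varepsilon_{\mathrm{TV}}}{1-\gamma}
\]
directly, which yields the stated constant $4\gamma$.

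For Eq.~\eqref{eq:performance-kl}, I apply Pinsker's inequality at each $\chi$:
\[
D_{\mathrm{TV}}(M'(\cdot\mid\chi),M(\cdot\mid\chi))^2\le\tfrac12\KL(M(\cdot\mid\chi)\|M'(\cdot\mid\chi))\le\tfrac12\delta_{\mathrm{joint}}.
\]
Hence $\varepsilon_{\mathrm{TV}}^2\le\delta_{\mathrm{joint}}/2$, and substituting turns the constant $4$ into $2$. Finally, I will remark that Proposition~\ref{prop:joint} lets $\delta_{\mathrm{joint}}$ be taken as the sum of per-agent KL bounds at each history. Only the joint uniform KL assumption is used, so no claim is made that the empirical KL checks of Appendix~\ref{app:online-details} enforce it.
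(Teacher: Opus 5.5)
Your proposal is correct and follows essentially the same route as the paper: the performance-difference identity, the bound $|\zeta(\chi)|\leq 2A_*\varepsilon_{\mathrm{TV}}$ obtained from $\E_{\mathbf w\sim M}A^M=0$, the occupancy-shift bound $\|d_{M'}-d_M\|_1\leq 2\gamma\varepsilon_{\mathrm{TV}}/(1-\gamma)$, the $L_1$--sup combination, and Pinsker's inequality. The only difference is cosmetic: the paper derives the occupancy bound from the fixed-point relation $d_{M'}-d_M=\gamma(d_{M'}-d_M)P_{M'}+\gamma d_M(P_{M'}-P_M)$, which is your resolvent version, rather than from the per-step coupling, and both routes give the same constant.
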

\begin{proof}
Let $\zeta(\chi)=\E_{\mathbf w\sim M'}A^M(\chi,\mathbf w)$. Telescoping the old value along a trajectory of the new policy gives the performance-difference identity
\[
 J(M')-J(M)=\frac1{1-\gamma}\E_{\chi\sim d_{M'}}\zeta(\chi).
\]
Since $\E_{\mathbf w\sim M}A^M(\chi,\mathbf w)=0$, total variation bounds $\|\zeta\|_\infty\leq2A_*\varepsilon_{\mathrm{TV}}$. Let $P_M,P_{M'}$ denote induced transition kernels on $\chi$. Marginalizing actions through the common environment kernel cannot increase total variation, so $\sup_\chi\|P_{M'}(\cdot\mid \chi)-P_M(\cdot\mid \chi)\|_1\leq2\varepsilon_{\mathrm{TV}}$. The discounted occupancy equation gives
\[
 d_{M'}-d_M=\gamma(d_{M'}-d_M)P_{M'}+\gamma d_M(P_{M'}-P_M),
\]
which implies $\|d_{M'}-d_M\|_1\leq2\gamma \varepsilon_{\mathrm{TV}}/(1-\gamma)$. Replacing $d_{M'}$ by $d_M$ in the performance-difference identity incurs error at most
\[
 \frac{\|d_{M'}-d_M\|_1\|\zeta\|_\infty}{1-\gamma}
 \leq\frac{4\gamma A_*}{(1-\gamma)^2}\varepsilon_{\mathrm{TV}}^2.
\]
The resulting inequality proves Eq.~\eqref{eq:performance-tv}. Pinsker's inequality gives $\varepsilon_{\mathrm{TV}}^2\leq\delta_{\mathrm{joint}}/2$ and hence Eq.~\eqref{eq:performance-kl}.
\end{proof}
By Proposition~\ref{prop:joint}, uniform individual bounds $\sup_\chi\KL(\mu_i\|\mu_i')\leq\delta_i$ give the valid joint bound $\delta_{\mathrm{joint}}=\sum_i\delta_i$, where $\mu_i$ and $\mu_i'$ are the factors of $M$ and $M'$ at the same information state. A uniform average bound $\delta$ over $N$ agents therefore gives $N\delta$. For continuous actions the KL bounds apply directly to raw samples and the induced environment transitions. Proposition~\ref{prop:clip} also bounds KL for executed actions by KL for raw actions.

\paragraph{Relation to the implemented clipped surrogate.}
The gain $\mathcal G$ uses the joint likelihood ratio, whereas the implementation uses ratios for individual agents. With the exact advantage and expectation under the discounted occupancy and the old policy from Eq.~\eqref{eq:surrogates}, let
\begin{align}
 C_{\mathrm{clip}}&=\E\sum_i\min\{\rho_i A^{\mathrm{old}},\clip(\rho_i,1-\epsilon,1+\epsilon)A^{\mathrm{old}}\},\nonumber\\
 \Delta&=\E\left[A^{\mathrm{old}}\left\{\prod_i\rho_i-1-\sum_i(\rho_i-1)\right\}\right].
 \label{eq:surrogate-gap}
\end{align}
The quantity $C_{\mathrm{clip}}$ is the clipped surrogate summed across agents, and $\Delta$ is the residual between joint and individual expansions of likelihood ratios. Since $\E A^{\mathrm{old}}=0$, $C_{\mathrm{clip}}\leq S_{\mathrm{agent}}$ and $\mathcal G=S_{\mathrm{agent}}+\Delta$. Thus a sufficient condition for nonnegative improvement is
\begin{equation}
 C_{\mathrm{clip}}-|\Delta|
 \geq\frac{2\gamma A_*}{1-\gamma}\delta_{\mathrm{joint}}.
 \label{eq:sufficient-improvement}
\end{equation}
The decomposition uses an unnormalized agent sum and the true advantage. Eq.~\eqref{eq:ppo} uses normalized estimates from finite batches. The residual $\Delta$ has zero value and derivative at $\theta_{\mathrm{old}}$ under the stated regularity, but the local property does not bound the residual magnitude after several updates. Bounding the residual over finite updates requires additional control of the joint policy change.

\paragraph{Scope of the theoretical support.}
The derivations establish the multi-agent score identity, local surrogate consistency, and a conditional bound on policy improvement. Applying the bound requires true advantages and uniform control of the joint policy over histories. The implementation uses estimated advantages, minibatch updates, and KL stopping checks on collected samples. Reference KL regularization measures deviation from the initial policy, whereas Eq.~\eqref{eq:sufficient-improvement} compares consecutive policies. The theoretical bound therefore applies under the stated assumptions. The implemented optimizer and the contribution of flow initialization are evaluated empirically.

The bound concerns discounted return under the sampling policy. Reported deployment returns and win rates instead evaluate deterministic argmax actions or clipped Gaussian means.

\subsection{Effect of Latent Sampling on Stochastic Policy Gradients}
\label{app:latent-return}
We examine whether fixing $z=0$ can improve return by reducing errors introduced by online policy updates. The following model complements the discussion in Appendix~\ref{app:exploration-rationale} by connecting latent sampling to optimization error and deployment return. It isolates an effect of finite samples without assuming that zero selects a better initial behavior or requiring the random policy to preserve a nonzero latent effect.

\paragraph{Model and update assumptions.}
Consider a single decision with an unclipped scalar action, reward $r(a)=r_*-(a-a_*)^2/2$, and an affine student $g_{m,b}(z)=m+b^\top z$, where $a_*$ is the optimal action, $r_*$ is its reward, and $b\in\mathbb R^d$ for $d\geq1$. The two sampling policies are
\begin{equation}
 a_F=m_F+\sigma U,\qquad
 a_R=m_R+b^\top Z+\sigma U,
 \quad U\sim\Normal(0,1),\quad Z\sim\Normal(0,I_d),
 \label{eq:latent-model}
\end{equation}
where $F$ fixes $z=0$, $R$ samples the latent, $U$ and $Z$ are independent, and $\sigma>0$ is a common fixed exploration scale. Both policies start from the same arbitrary finite student parameters. The random policy updates both $m_R$ and $b$, with no restriction preventing $b=0$. Each update averages $n\geq1$ fresh independent policy gradient estimates from the conditional Gaussian likelihood and uses a constant step size $\alpha>0$. The baseline is the exact conditional value
\[
 V(Z)=\E_U[r(a_R)\mid Z]
 =r_*-\tfrac12\big[(m_R+b^\top Z-a_*)^2+\sigma^2\big],
\]
treated as constant when computing the gradient. The fixed policy uses the corresponding baseline at $z=0$. These updates use plain stochastic gradient ascent. Let $J_F(k)$ and $J_R(k)$ denote expected deployment returns after $k$ updates, averaging over training randomness and the deployment latent. Deployment removes $U$ from both policies and retains $Z$ only for $R$.

Define the constants
\[
 c_0=1+\frac2n,\qquad c_1=\frac3n,\qquad
 c_2=1+\frac{3d+5}{n},\qquad
 \nu^2=\frac52\sigma^2,\qquad q=\frac{\alpha\nu^2}{n}.
\]
\begin{proposition}[Return advantage in the affine model]
\label{prop:latent-return}
Under the preceding assumptions, $J_F(k)>J_R(k)$ for every $k\geq1$. If $0<\alpha<1/(c_2+dc_1)$, both expected returns have finite limits and
\begin{equation}
 \lim_{k\to\infty}\big[J_F(k)-J_R(k)\big]
 =\frac{dq\,[2-(c_0-c_1)\alpha]^2}
 {2(2-c_0\alpha)\mathcal D_\alpha}>0,
 \qquad
 \mathcal D_\alpha=(2-c_0\alpha)(2-c_2\alpha)-dc_1^2\alpha^2.
 \label{eq:latent-return-gap}
\end{equation}
\end{proposition}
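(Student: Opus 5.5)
The plan is to reduce both training processes to exact linear recursions for a few second moments, and then compare them. For the fixed policy write $e=m_F-a_*$. The score of the Gaussian likelihood with respect to $m_F$ is $U/\sigma$, and $r(a_F)-V(0)=-e\sigma U-\tfrac{\sigma^2}{2}(U^2-1)$. So a single gradient estimate is
\[
g=-eU^2-\tfrac{\sigma}{2}(U^3-U).
\]
Its mean is $-e$. Its second moment is $3e^2+\nu^2$, using $\E U^4=3$, the vanishing cross moment $\E[U^2(U^3-U)]=0$, and $\tfrac{\sigma^2}{4}\E(U^3-U)^2=\tfrac52\sigma^2=\nu^2$. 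Averaging $n$ independent copies gives $\E\bar g^2=c_0e^2+\nu^2/n$. Hence $x_k=\E e_k^2$ satisfies
\[
x_{k+1}=(1-2\alpha+c_0\alpha^2)\,x_k+\alpha q,
\]
and $J_F(k)=r_*-\tfrac12 x_k$.

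For the random policy let $e=m_R-a_*$ and $\varepsilon=e+b^\top Z$. Conditionally on $Z$ the $m$-estimate has the same form with $e$ replaced by $\varepsilon$, and the $b$-estimate is $Z$ times it. Deployment averages over $Z$, so $J_R(k)=r_*-\tfrac12(\E e_k^2+\E\|b_k\|^2)$. I will track $y_k=\E e_k^2$ and $w_k=\E\|b_k\|^2$, and check that the cross term $\E[e\,b]$ stays out of these recursions or decouples. Using Gaussian moments of $Z$ (the fourth moments $\E[Z_jZ_lZ_pZ_s]$ and $\E\|Z\|^2 Z Z^\top=(d+2)I$), the expectation of the squared averaged estimate should produce a $2\times2$ linear system. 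Its diagonal coefficients are $1-2\alpha+c_0\alpha^2$ and $1-2\alpha+c_2\alpha^2$, and its off-diagonal couplings are $\alpha^2c_1$ (from $w$ into $y$) and $d\alpha^2c_1$ (from $y$ into $w$). Its forcing terms are $\alpha q$ and $d\alpha q$. I expect the constants $c_1=3/n$ and $c_2=1+(3d+5)/n$ to come out of exactly this computation. This fourth-moment bookkeeping, with the $(3d+5)$ term and the exact cancellation of odd moments, is the first place errors can creep in, so I would verify it componentwise first with $d=1$.

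For the strict inequality at every $k\ge1$, both processes start at the same $e_0$. The only differences are that $w_k\ge0$ enters $J_R$ directly, and that $w$ receives the positive forcing $d\alpha q$ at every step. I would prove by induction that
\[
y_k\ge x_k \quad\text{and}\quad w_k\ge d\alpha q>0 \qquad (k\ge1).
\]
All recursion coefficients are nonnegative: $1-2\alpha+c\alpha^2\ge0$ because $c\ge1$, and the couplings are nonnegative. So the map is monotone, and it dominates the $F$-recursion in the $y$-coordinate. This gives $J_F(k)-J_R(k)\ge\tfrac12 w_k>0$ for any $\alpha>0$, without needing convergence. I expect this monotone-comparison step to be the main obstacle conceptually, since one must make sure no negative coupling appears, for instance through $\E[e\,b]$.

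For the limit, the condition $\alpha<1/(c_2+dc_1)$ should make the spectral radius of the nonnegative $2\times2$ matrix less than $1$. I would check this through a row-sum bound combined with the coefficient $1-2\alpha+c\alpha^2$. Then both recursions converge to their fixed points. For $F$ the fixed point is $x_\infty=q/(2-c_0\alpha)$. For $R$, solve
\[
(I-A)\begin{pmatrix}y\\ w\end{pmatrix}=\alpha q\begin{pmatrix}1\\ d\end{pmatrix},
\]
where $I-A$ has entries $\alpha(2-c_0\alpha)$, $-c_1\alpha^2$, $-dc_1\alpha^2$, $\alpha(2-c_2\alpha)$. Its determinant is $\alpha^2\mathcal D_\alpha$, and Cramer's rule gives $y_\infty+w_\infty$. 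Finally I would simplify $\tfrac12(y_\infty+w_\infty-x_\infty)$ to the stated expression in \eqref{eq:latent-return-gap}, where the factor $[2-(c_0-c_1)\alpha]^2$ should emerge after clearing the common denominator $(2-c_0\alpha)\mathcal D_\alpha$. Positivity then follows from $\mathcal D_\alpha>0$ and $2-c_0\alpha>0$ under the step-size condition.
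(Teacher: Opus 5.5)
Your proposal is correct and follows essentially the paper's route. It uses the same single-sample gradient $-[U^2(e+b^\top Z)+\tfrac{\sigma}{2}(U^3-U)](1,Z^\top)^\top$, the same nonnegative $2\times2$ moment recursion (couplings $c_1\alpha^2$, $dc_1\alpha^2$, forcing $\alpha q(1,d)^\top$), the same monotone comparison $y_k-x_k\geq0$ with $w_k>0$ for the finite-$k$ gap, and the same row-sum spectral bound followed by solving the fixed-point equations. Your worry about $\E[e\,b]$ resolves at once: every term containing $e\,b$ carries an odd moment of $Z$ (for example $\E[(b^\top Z)\|Z\|^2]=0$) and vanishes, which is exactly why the recursion closes in $(\E e_k^2,\E\|b_k\|^2)$.
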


\begin{proof}
Write $e=m_R-a_*$, $x=(1,Z^\top)^\top$, and $w=(e,b^\top)^\top$. The conditional Gaussian score is $Ux/\sigma$. Substituting the reward and baseline gives the single-sample gradient
\begin{equation}
 \widehat\nabla J=-\big[U^2(w^\top x)+H\big]x,
 \qquad H=\tfrac\sigma2(U^3-U).
 \label{eq:latent-score}
\end{equation}
Gaussian moments give $\E U^2=1$, $\E U^4=3$, $\E H=\E[U^2H]=0$, and $\E H^2=\nu^2$. Hence $\E[\widehat\nabla J\mid w]=-w$, the exact gradient of $r_*-(\|w\|^2+\sigma^2)/2$.

Let $A_k^F=\E(m_{F,k}-a_*)^2$, $A_k^R=\E(m_{R,k}-a_*)^2$, and $B_k=\E\|b_k\|^2$. For one sample from the random policy, the expected squared gradient for $m_R$ is $3(e^2+\|b\|^2)+\nu^2$, while the expected squared norm of the gradient for $b$ is $3de^2+3(d+2)\|b\|^2+d\nu^2$. Averaging $n$ independent samples divides the gradient covariance by $n$. Expanding the squared parameter updates therefore yields
\begin{align}
 A_{k+1}^F&=(1-2\alpha+c_0\alpha^2)A_k^F+\alpha q,\nonumber\\
 \begin{pmatrix}A_{k+1}^R\\B_{k+1}\end{pmatrix}
 &=\underbrace{\begin{pmatrix}
 1-2\alpha+c_0\alpha^2&c_1\alpha^2\\
 dc_1\alpha^2&1-2\alpha+c_2\alpha^2
 \end{pmatrix}}_{T}
 \begin{pmatrix}A_k^R\\B_k\end{pmatrix}
 +\alpha q\binom1d.
 \label{eq:latent-moments}
\end{align}
The Gaussian identity $\E[(b^\top Z)^2\|Z\|^2]=(d+2)\|b\|^2$ supplies the last coefficient.

For $D_k=A_k^R-A_k^F$, the common initialization gives $D_0=0$ and
\[
 D_{k+1}=(1-2\alpha+c_0\alpha^2)D_k+c_1\alpha^2B_k.
\]
All coefficients are nonnegative, so $D_k\geq0$. The positive source $d\alpha q$ gives $B_k>0$ for $k\geq1$, even if $b_0=0$. The expected deployment returns are $J_F(k)=r_*-A_k^F/2$ and $J_R(k)=r_*-(A_k^R+B_k)/2$, proving the strict finite-update gap $(D_k+B_k)/2$.

Under the stated step size, $T$ is nonnegative and each row sum is less than one. Its spectral radius is therefore below one, and the moment recursions converge. Solving their limiting equations gives
\[
 A_\infty^F=\frac q{2-c_0\alpha},\qquad
 B_\infty=\frac{dq[2-(c_0-c_1)\alpha]}{\mathcal D_\alpha},\qquad
 A_\infty^R-A_\infty^F=
 \frac{c_1\alpha}{2-c_0\alpha}B_\infty.
\]
The same condition makes $\mathcal D_\alpha>0$. Substitution into the return difference proves Eq.~\eqref{eq:latent-return-gap}.
\end{proof}

\paragraph{Conclusion and relation to online fine-tuning.}
The model shows that finite samples continually perturb the learned dependence on the latent, even when its optimal coefficient is zero. This error also increases the error in the action center through Eq.~\eqref{eq:latent-moments}. Fixing $z=0$ removes that update channel while retaining Gaussian exploration and updates to the center. The result concerns limits of expected returns under a constant step size, rather than convergence of individual parameter trajectories. With exact gradients, both policies can reach the same optimum.

The proposition provides a conditional optimization mechanism consistent with the comparison in Figure~\ref{fig:direct-flow-policy-finetuning}. Its affine policy and quadratic reward isolate the effect of sampling. The exact baseline also conditions on the latent, whereas the implemented centralized critic uses the environment state. Nonlinear policies, estimated critics, learned exploration scales, PPO updates, and changing state distributions fall outside the analysis. The empirical comparison supports the complete policy construction, while the relative contribution of this mechanism remains unestablished.

\end{document}